\def\1{\bm{1}}
\def\ve{{\bm{e}}}
\def\vg{{\bm{g}}}
\def\vv{{\bm{v}}}
\def\vx{{\bm{x}}}
\def\vy{{\bm{y}}}
\def\mA{{\bm{A}}}
\def\mD{{\bm{D}}}
\def\mG{{\bm{G}}}
\def\mH{{\bm{H}}}
\def\mI{{\bm{I}}}
\def\mL{{\bm{L}}}
\def\mM{{\bm{M}}}
\def\mR{{\bm{R}}}
\def\mU{{\bm{U}}}
\def\mV{{\bm{V}}}
\def\mX{{\bm{X}}}
\def\gH{{\mathcal{H}}}
\def\gK{{\mathcal{K}}}
\def\gM{{\mathcal{M}}}
\def\gS{{\mathcal{S}}}
\def\gX{{\mathcal{X}}}
\DeclareMathOperator*{\argmin}{arg\,min}
\DeclareMathOperator{\Tr}{Tr}
\newcommand{\diag}{{\rm diag}}
\newcommand{\rank}{\mathrm{rank}}
\newcommand{\norm}[1]{\left\|#1\right\|}
\def\abs#1{\left| #1 \right|}
\newcommand{\inner}[2]{\left\langle #1,#2 \right\rangle}
\newcommand{\F}{\mathrm{F}}
\newcommand*{\E}{\mathbb{E}}
\newcommand{\bx}{\bar x}
  \def\command@factory#1{%
    \expandafter\def\csname #1#1\endcsname{{\mathbb{#1}}}
  }
  \def\command@factory#1{%
    \expandafter\def\csname b#1\endcsname{{\bm{#1}}}
  }
\def\greekvectors#1{%
 \@for\next:=#1\do{%
    \def\X##1;{%
     \expandafter\def\csname b##1\endcsname{{\bm{\csname##1\endcsname}}}
     }
   \expandafter\X\next;
  }
}
  \def\command@factory#1{%
    \expandafter\def\csname c#1\endcsname{{\mathcal{#1}}}
  }
\newcommand{\mDelta}{\bm{\Delta}}
\newcommand{\op}{\mathrm{op}}
\renewcommand{\vec}{\mathop{\overline{\mathrm{vec}}}}
\newcommand{\vertiii}[1]{{\left\vert\kern-0.25ex\left\vert\kern-0.25ex\left\vert #1 \right\vert\kern-0.25ex\right\vert\kern-0.25ex\right\vert}}
\declaretheorem[name=Theorem, numberwithin=section]{theorem}
\declaretheorem[name=Proposition, sibling=theorem]{proposition}
\declaretheorem[name=Lemma, sibling=theorem]{lemma}
\declaretheorem[name=Assumption, sibling=theorem]{assumption}
\declaretheorem[name=Definition, sibling=theorem]{definition}
\declaretheorem[name=Remark, sibling=theorem]{remark}
\newtheorem{question}{Question}
\newtheorem{example}[theorem]{Example}
\newsavebox\myboxA
\newsavebox\myboxB
\newlength\mylenA
\newcommand*\xoverline[2][0.75]{%
    \sbox{\myboxA}{$\m@th#2$}%
    \setbox\myboxB\null
    \ht\myboxB=\ht\myboxA%
    \dp\myboxB=\dp\myboxA%
    \wd\myboxB=#1\wd\myboxA
    \sbox\myboxB{$\m@th\overline{\copy\myboxB}$}
    \setlength\mylenA{\the\wd\myboxA}
    \addtolength\mylenA{-\the\wd\myboxB}%
    \ifdim\wd\myboxB<\wd\myboxA%
       \rlap{\hskip 0.5\mylenA\usebox\myboxB}{\usebox\myboxA}%
    \else
        \hskip -0.5\mylenA\rlap{\usebox\myboxA}{\hskip 0.5\mylenA\usebox\myboxB}%
    \fi}
\newcommand*\cleverbar[2][0.85]{\makebox[0pt]{$\phantom{#2}\xoverline[#1]{\phantom{#2}}$}#2}
\title{Structured Preconditioners in Adaptive Optimization:\\ A Unified Analysis}
\date{}
\author{
Shuo Xie\textsuperscript{1} \quad Tianhao Wang\textsuperscript{1}  \quad Sashank Reddi\textsuperscript{2} \quad Sanjiv Kumar\textsuperscript{2} \quad Zhiyuan Li\textsuperscript{1,2} \\
\textsuperscript{1}Toyota Technological Institute at Chicago \quad
\textsuperscript{2}Google Research \\
\texttt{\{shuox, tianhao.wang, zhiyuanli\}@ttic.edu}\\
\texttt{\{sashank, sanjivk\}@google.com}
}
\begin{document}

\maketitle

\begin{abstract}
We present a novel unified analysis for a broad class of adaptive optimization algorithms with structured (e.g., layerwise, diagonal, and kronecker-factored) preconditioners for both online regret minimization and offline convex optimization. Our analysis not only provides matching rate to several important structured preconditioned algorithms including diagonal AdaGrad, full-matrix AdaGrad, and AdaGrad-Norm, but also gives an improved convergence rate for a one-sided variant of Shampoo over that of original Shampoo. Interestingly, more structured preconditioners (e.g., diagonal Adagrad, AdaGrad-Norm which use less space and compute) are often presented as computationally efficient approximations to full-matrix Adagrad, aiming for improved optimization performance through better approximations. Our unified analysis challenges this prevailing view and reveals, perhaps surprisingly, that more structured preconditioners, despite using less space and computation per step, can outperform their less structured counterparts. To demonstrate this, we show that one-sided Shampoo, which is relatively much cheaper than full-matrix AdaGrad could outperform it both theoretically and experimentally. 
\end{abstract}
\section{Introduction}\label{sec:intro}
Adaptive optimization algorithms~ \citep{streeter2010less,duchi2011adaptive,kingma2014adam} play a pivotal role in modern machine learning, especially in the expensive training of large foundation models. Within the machine learning community, full-matrix AdaGrad is considered as an ideal adaptive preconditioner for fast convergence in terms of number of steps. The computation of full-matrix AdaGrad precondtioner typically involves inverse square root of a $d \times d$ matrix where $d$ is the number of parameters. Thus, for large-scale settings, the huge computation and memory cost makes it prohibitively expensive. This has inspired works on designing more efficient adaptive optimizers by using a structured preconditioner, such as coordinate-wise adaptivity employed by AdaGrad~\citep{streeter2010less,duchi2011adaptive}, Kronecker product based preconditioner employed by Shampoo \citep{gupta2018shampoo, anil2020scalable} and layerwise adaptivity employed by LARS~\citep{you2017large} and LAMB~\citep{you2019large}), which all aim to provide a computational and memory efficient approximation of full-matrix AdaGrad. These algorithms have been shown to be very effective for general deep learning settings. It is often assumed that a better approximation of full-matrix preconditioner usually results in better optimization convergence, as seen with methods like Shampoo~\citep{gupta2018shampoo}. In contrast, preconditioners with more structure such as diagonal preconditioner have inferior performance. In this paper, we challenge these prevailing notions by providing theoretical and empirical evidence against them.

Conceptually, one could equate the degree of structure in a preconditioner to the ease of its computation and storage. In this view, full-matrix AdaGrad can be considered as the least structured and most expensive preconditioner while AdaGrad-Norm~\citep{ward2020adagrad}, which only maintains a scalar and uses the same preconditioning for every direction, is the most structured and least expensive. The conventional wisdom here is that using a less structured preconditioner, which requires more space and compute per step, reduces the number of steps needed for training.  Thus, choosing the right structure balances this trade-off between convergence speed and training step cost. 

\begin{table*}[t]
    \centering
    \small
    \begin{tabular}{c@{\hspace{3pt}}c@{\hspace{8pt}}c@{\hspace{9pt}}c@{\hspace{9pt}}c}
        \toprule
        \textbf{Algorithm} & \textbf{Subalgebra} $\mathcal{K}$  & $\norm{\bx}_\gH$ & $\vertiii{\vg_{1:T}}_\mathcal{H}$& \textbf{Regret Bound} $=\norm{\gX}_\gH \cdot\vertiii{\vg_{1:T}}_\mathcal{H}$ \\
        \midrule
        AdaGrad-Norm & $c\cdot\mI_d$ for $c\in \mathbb{R}$& $\frac{\norm{\vx}_2}{\sqrt{d}} $&\small $\sqrt{d} \sqrt{\sum\limits_{t=1}^T \norm{\vg_t}_2^2}$& \small $\norm{\cX}_2 \sqrt{\sum\limits_{t=1}^T \norm{\vg_t}_2^2}$ {\tiny\citep{streeter2010less}}\\
        AdaGrad & Diagonal Matrices & $ \norm{\vx}_\infty$& \small $\sum\limits_{i=1}^d \sqrt{\sum\limits_{t=1}^T g_{t,i}^2}$ &\small $ \norm{\cX}_\infty \sum\limits_{i=1}^d \sqrt{\sum\limits_{t=1}^T g_{t,i}^2}$ {\tiny\citep{streeter2010less}}\\
        Full-matrix AdaGrad & All Matrices & $ \norm{\vx}_2$& \small $\Tr[(\sum\limits_{t=1}^T \vg_t \vg_t^\top)^\frac{1}{2}]$&\small $\norm{\cX}_2 \Tr\bigg[\bigg(\sum\limits_{t=1}^T \vg_t \vg_t^\top\bigg)^\frac{1}{2}\bigg]$ {\tiny\citep{duchi2011adaptive}}\\
        One-sided Shampoo & $\mathbb{R}^{d_L \times d_L} \otimes \mI_{d_R}$
        & $\frac{\norm{\mX}_{\op}}{\sqrt{d_R}} $& \small $ \Tr [\big(d_R\sum\limits_{t=1}^T \mG_t \mG_t^\top\big)^\frac{1}{2}]$ & \small$\norm{\cX}_{\op} \Tr\bigg[\bigg(\sum\limits_{t=1}^T \mG_t \mG_t^\top\bigg)^\frac{1}{2}\bigg]$ \scriptsize(\Cref{thm:one_sided_shampoo_regret_bound})\\
        \bottomrule
    \end{tabular}
    \caption{Regret bound obtained by \Cref{thm:main_regret_bound} for different algorithms.  \Cref{thm:one_sided_shampoo_regret_bound} proves the rate specifically for one-sided shampoo and other rows match existing results in literature. 
    $\gH$ is chosen as $\gK \cap \gS_+^d$. In the third column, $\vx$ denotes the parameter in vector form and $\mX$ denotes the parameter in matrix form with $\vx=\Vec(\mX)$. 
    In the fourth column, $\vg_t$ denotes the gradient in vector form and $\mG_t$ denotes the gradient in matrix form with $\vg_t=\Vec(\mG_t)$. We omit $O(\cdot)$ in complexity measures and regret bound for convenience.}\label{tab:regret}
\end{table*}

Among these preconditioning methods, Shampoo~\citep{gupta2018shampoo} has gained notable attention for its Kronecker-factored preconditioning approach, which promises improved convergence in large-scale optimization tasks~\citep{dahl2023benchmarking}.  Shampoo was originally proposed as a computationally efficient surrogate for full-matrix AdaGrad~\citep{duchi2011adaptive}. Despite its popularity, existing analyses of Shampoo~\citep{gupta2018shampoo} are limited and do not provide a full justification for its effectiveness. In particular, we argue that the best-known regret bounds for both Shampoo as well as full-matrix AdaGrad are consistently worse than a more memory- and computationally efficient variant of these methods like diagonal AdaGrad and AdaGrad-Norm~\citep{streeter2010less,duchi2011adaptive,ward2020adagrad}. This demonstrates that more structure on the preconditioner may not necessarily restrict its optimization performance. In fact, our unified analysis uncovers an interesting finding: we show a simpler, more structured variant of Shampoo, called one-sided Shampoo, actually has substantially better regret bound compared to original Shampoo and full-matrix Adagrad (\Cref{sec:examples}).

\subsection{Main contributions}\label{subsec:contributions}
In light of the above discussion, we highlight the main contributions of the paper.

\begin{itemize}
    \item We present a comprehensive and unified theoretical framework~(\Cref{thm:adaptive_regret_bound}) for adaptive optimization with structured preconditioners, encompassing several popular methods including diagonal AdaGrad, full-matrix AdaGrad, AdaGrad-Norm, and layerwise adaptive methods for both online convex optimization and stochastic smooth convex optimization. In particular, our analysis integrates and extends key insights from existing work~\citep{gupta2017unified}, which presents a unified and elegant way (\Cref{alg:general_adaptive_optimizer}) to derive the aforementioned structured preconditioners, but only allows analysis on a case-by-case basis. To our knowledge, this is the \emph{first} truly unified analysis of a large family of adaptive optimization algorithms.
    
     \item To enable a unified analysis, we identify a novel sufficient condition, \emph{well-structured preconditioners}~(\Cref{def:good_preconditioner}), which overcomes a key technical barrier in \citet{gupta2017unified}; thereby, allowing a unified analysis for several important adaptive algorithms. A more detailed discussion is provided in \Cref{sec:background} and a summary of our results is presented in \Cref{tab:regret}. 

\item Our unified analysis provides a new regret bound~(\Cref{thm:one_sided_shampoo_regret_bound}) for a one-sided variant of Shampoo, which is always better than the existing bound for two-sided Shampoo~\citep{gupta2018shampoo} and could be smaller by a multiplicative factor of $d$, where we assume the matrix-shaped parameter is of size $\sqrt{d}$-by-$\sqrt{d}$. 
This also leads to a novel convergence rate of one-sided shampoo for stochastic convex optimization~(\Cref{thm:main_one_sided_shampoo}) via a standard offline-to-online reduction~\citep{levy2018online}.

\item Conceptually, our findings challenge the conventional wisdom that using a larger set of preconditioners which require more memory and compute leads to better optimization performance in terms of number of steps. In particular, while using a larger set of preconditioners reduces the gradient term $\vertiii{\vg_{1:t}}_\mathcal{H}$ in our regret bound, it increases the other term involving the magnitude of the optimal solution by increasing the norm metric; thus, leading to a worse regret bound (\Cref{thm:adaptive_regret_bound,tab:regret}). For instance, we demonstrate that one-sided Shampoo can outperform full-matrix AdaGrad both theoretically~(\Cref{sec:compare_optimizers}) and experimentally~(\Cref{sec:experiments}). This suggest that one-sided Shampoo is not just a computational-efficient surrogate of full-matrix AdaGrad, but could also be fundamentally better, depending on the optimization problem.
\end{itemize}

\subsection{Notations} \label{sec:notation}
Let $\cM^d$ be the set of all $d$-by-$d$ real-valued matrices, and $\gS^d\subset\cM^d$ be the subset of all symmetric matrices.
We use $\gS_+^d$ to denote the set of positive semi-definite matrices, and $\gS_{++}^d$ to denote the set of positive definite matrices.
Moreover, $\cD^d$ is the set of all $d$-dimensional diagonal matrices, and $\cD_+^d=\cD^d\cap\cS_+^d$.
We denote by $\bI_d$ the $d$-by-$d$ identity matrix.
For matrices $\bA,\bB$, we denote their inner product by $\langle\bA,\bB\rangle=\Tr(\bA^\top\bB)$.

For any $\bH\in\cS_+^d$ such that $\bH\neq 0$, we denote $\cleverbar{H}=\bH/\Tr(\bH)$.
For $\mH\in\gS_+^d$, $\norm{\vx}_\mH := \sqrt{\vx^\top \mH \vx}$ is the (semi-)norm of $\vx\in\RR^d$ with respect to $\mH$.
For a convex set $\gH \subseteq \gS_+^d$, we define 
\begin{align}\label{eq:H_norm}
    \norm{\vx}_\gH:=\sup_{\mH \in \gH, \Tr(\mH) \leq 1} \norm{\vx}_{\mH}.
\end{align}
For a convex set $\gX \subseteq \mathbb{R}^d$ and any norm $\norm{\cdot}$, we define $\norm{\gX} := \sup_{\vx \in \gX} \norm{\vx}$ and $\norm{\gX}_\gH := \sup_{\vx \in \gX} \norm{\vx}_\gH$. 
For any $\mH\succ0$, the projection of $\vx$ onto $\gX$ with respect to $\|\cdot\|_{\mH}$ is defined as $\Pi_\gX^\mH(\vx) := \argmin_{\vx' \in \gX} \norm{\vx-\vx'}_\mH$.

Throughout the paper, we consider the factorization $d=d_L d_R$, and denote the corresponding matrix form of $\bx\in\RR^d$ by $\bX\in\RR^{d_L\times d_R}$.
We denote $\bx=\vec(\bX)$ and $\bX=\vec^{-1}(\bx)$ for conversion between the vector form and the matrix form.
Then for a function $L(\vx)$ defined on $\vx \in \mathbb{R}^{d}$, we extend its definition to matrices by letting \( L(\mX) \) denote \( L(\vec(\mX)) \), and we will use \( L(\vx) \) and \( L(\mX) \) interchangeably when the context is clear. 
We define the gradient as $\vg_t = \nabla L(\vx_t)$ in vector form and $\mG_t = \nabla L(\mX_t)$ in matrix form, so $\vg_t=\vec(\mG_t)$ and $\bG_t=\vec^{-1}(\bg_t)$.

For a matrix $\mX \in \mathbb{R}^{d_L \times d_R}$, we denote its operator norm as $\norm{\mX}_\op$ and its Frobenious norm as $\norm{\mX}_\F$. For a vector $\vx \in \mathbb{R}^d$, we denotes its $\ell_\infty$ norm by $\norm{\vx}_\infty = \max_{i \in [d]} \abs{x_i}$ and $\ell_2$ norm by $\norm{\vx}_2$. 

\section{Background: Unified Adaptive Regularization with Non-Unified Analysis}\label{sec:background}
Seminal work~\citet{gupta2017unified} presented an adaptive regularization meta-algorithm, AdaReg~(\Cref{alg:general_adaptive_optimizer}), which can be used to derive various adaptive optimization algorithms known at that time in a unified approach. For example, AdaReg becomes full-matrix AdaGrad~\citep{duchi2011adaptive}, diagonal AdaGrad~\citep{duchi2011adaptive}, and AdaGrad-Norm~\citep{streeter2010less,ward2020adagrad} by choosing the set of preconditioners as the set of all PSD matrices, diagonal PSD matrices, and mutipliers of identity matrix respectively (see \Cref{tab:regret}).  
The original AdaReg also allows other choices of potential function $\Phi$, e.g., $\Phi(\cdot) = \log\det(\cdot)$ for Online Newton Step~\citep{hazan2007logarithmic}, while we are only interested in the case of $\Phi(\cdot)=\eta^2\Tr(\cdot)$ in this work. 

In addition to the unified approach to deriving various adaptive optimization algorithms, \citet{gupta2017unified} also attempts to give a unified analysis for the convergence rate or regret of these adaptive algorithms, which can be summarized by the following theorem. 

\begin{theorem}[\citet{gupta2017unified}]\label{thm:adaptive_regret_bound}
Let $\{\vx_t\}_{t=1}^T$ be the iterates of \Cref{alg:general_adaptive_optimizer}.
Then for any $\vx^* \in \gX$,
\begin{align}\label{eq:adaptive_regret_bound}
    \sum_{t=1}^T L_t(\vx_t) - \sum_{t=1}^T L_t(\vx^*)&\leq \frac{1}{2} \left(\inner{\mM_T}{\mH_T^{-1}} + \eta^2 \Tr(\mH_T) - \eta^2\Tr(\mH_0) \right)\notag\\
    &\qquad+\frac{1}{2} \sum_{t=1}^{T} \left(\norm{\vx_t-\vx^*}_{\mH_t}^2 - \norm{\vx_{t+1}-\vx^*}_{\mH_t}^2 \right).
\end{align}
\end{theorem}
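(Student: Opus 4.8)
The plan is to combine the standard one-step analysis of (adaptive) online mirror descent with a ``be-the-leader'' potential argument tailored to the way \Cref{alg:general_adaptive_optimizer} (AdaReg) selects the preconditioner sequence $\{\mH_t\}$. Throughout I write $\mM_t=\mM_0+\sum_{s\le t}\vg_s\vg_s^\top$ for the cumulative gradient outer-product matrix, and recall that AdaReg sets $\mH_t=\argmin_{\mH\in\gH}\,f_t(\mH)$ with $f_t(\mH):=\inner{\mM_t}{\mH^{-1}}+\eta^2\Tr(\mH)$, and then performs the $\mH_t$-metric step $\vx_{t+1}=\Pi_\gX^{\mH_t}(\vx_t-\mH_t^{-1}\vg_t)$.

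First I would establish the per-step bound. Writing $\vy_{t+1}:=\vx_t-\mH_t^{-1}\vg_t$ we have $\vg_t=\mH_t(\vx_t-\vy_{t+1})$, so by convexity $L_t(\vx_t)-L_t(\vx^*)\le\inner{\vg_t}{\vx_t-\vx^*}=\inner{\mH_t(\vx_t-\vy_{t+1})}{\vx_t-\vx^*}$. Applying the three-point identity $\inner{\mH(\va-\vb)}{\va-\vc}=\tfrac12\big(\norm{\va-\vb}_{\mH}^2+\norm{\va-\vc}_{\mH}^2-\norm{\vb-\vc}_{\mH}^2\big)$ with $(\va,\vb,\vc)=(\vx_t,\vy_{t+1},\vx^*)$, using $\norm{\vx_t-\vy_{t+1}}_{\mH_t}^2=\vg_t^\top\mH_t^{-1}\vg_t=\norm{\vg_t}_{\mH_t^{-1}}^2$, and then dropping $\norm{\vy_{t+1}-\vx_{t+1}}_{\mH_t}^2\ge0$ via the fact that the $\mH_t$-norm projection onto the convex set $\gX$ is non-expansive (generalized Pythagorean inequality, valid since $\vx^*\in\gX$ and $\mH_t\succ0$), I expect to obtain
\[
L_t(\vx_t)-L_t(\vx^*)\le\tfrac12\,\norm{\vg_t}_{\mH_t^{-1}}^2+\tfrac12\big(\norm{\vx_t-\vx^*}_{\mH_t}^2-\norm{\vx_{t+1}-\vx^*}_{\mH_t}^2\big).
\]
Summing over $t=1,\dots,T$ reproduces the second bracket of \eqref{eq:adaptive_regret_bound} verbatim (it does \emph{not} telescope, since the metric changes from $\mH_t$ to $\mH_{t+1}$, which is why the theorem leaves it unsimplified), and reduces the theorem to showing $\sum_{t=1}^T\norm{\vg_t}_{\mH_t^{-1}}^2\le\inner{\mM_T}{\mH_T^{-1}}+\eta^2\Tr(\mH_T)-\eta^2\Tr(\mH_0)$.

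For this last bound I would introduce the potential $\Psi_t:=f_t(\mH_t)=\min_{\mH\in\gH}\big[\inner{\mM_t}{\mH^{-1}}+\eta^2\Tr(\mH)\big]$. Since $\vg_t\vg_t^\top=\mM_t-\mM_{t-1}$, one has $\inner{\mM_t}{\mH_t^{-1}}=\inner{\mM_{t-1}}{\mH_t^{-1}}+\norm{\vg_t}_{\mH_t^{-1}}^2$; on the other hand, because $\mH_t\in\gH$ while $\mH_{t-1}$ minimizes $f_{t-1}$ over $\gH$, we get $f_{t-1}(\mH_{t-1})\le f_{t-1}(\mH_t)$, i.e. $\inner{\mM_{t-1}}{\mH_{t-1}^{-1}}+\eta^2\Tr(\mH_{t-1})\le\inner{\mM_{t-1}}{\mH_t^{-1}}+\eta^2\Tr(\mH_t)$. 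Combining these two facts gives $\norm{\vg_t}_{\mH_t^{-1}}^2\le\Psi_t-\Psi_{t-1}$ term by term; telescoping over $t$ and discarding the nonnegative initialization term $\inner{\mM_0}{\mH_0^{-1}}\ge0$ yields $\sum_{t=1}^T\norm{\vg_t}_{\mH_t^{-1}}^2\le\Psi_T-\Psi_0\le\inner{\mM_T}{\mH_T^{-1}}+\eta^2\Tr(\mH_T)-\eta^2\Tr(\mH_0)$. Plugging this into the summed one-step inequality produces exactly \eqref{eq:adaptive_regret_bound}.

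I expect the potential/be-the-leader step to be the main obstacle. One must orient the optimality inequality the right way (it is $\mH_{t-1}$, not $\mH_t$, that minimizes $f_{t-1}$), and then recognize that the leftover cross term $\inner{\mM_{t-1}}{\mH_t^{-1}}-\inner{\mM_{t-1}}{\mH_{t-1}^{-1}}+\eta^2\big(\Tr(\mH_t)-\Tr(\mH_{t-1})\big)$ is precisely the sign-definite slack supplied by that optimality, so that $\Psi_t-\Psi_{t-1}$ dominates $\norm{\vg_t}_{\mH_t^{-1}}^2$ with no residual error — this is the step that crucially uses the specific form $\Phi(\cdot)=\eta^2\Tr(\cdot)$ and the adaptive choice of $\mH_t$. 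Secondary care is needed at the initialization ($\mM_0,\mH_0$) and in invoking non-expansiveness of the $\mH_t$-projection, both of which are routine given convexity of $\gX$ and $\mH_t\succ0$.
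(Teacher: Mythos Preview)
Your proposal is correct and follows essentially the same route the paper sketches: a standard online-mirror-descent per-step bound to isolate $\sum_t\|\vg_t\|_{\mH_t^{-1}}^2$ plus the metric-distance terms, followed by the FTL--BTL (be-the-leader) potential argument using the optimality of $\mH_{t-1}$ for $f_{t-1}$ to telescope the gradient sum into $\Psi_T-\Psi_0$. Your handling of the initialization (implicitly defining $\mH_0=\argmin_{\mH\in\gH}f_0(\mH)$ and discarding the nonnegative $\langle\mM_0,\mH_0^{-1}\rangle$) and of the projection step are both fine.
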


\begin{algorithm}[t]
    \caption{Adaptive Regularization Meta-Algorithm AdaReg ~\citep{gupta2017unified}} \label{alg:general_adaptive_optimizer}
    \begin{algorithmic} 
        \STATE {\bfseries Hyperparam:} $\epsilon> 0$,  convex set $\gX \subseteq \mathbb{R}^d$, learning rate $\eta$, preconditioners $\mathcal{H} \subset \mathcal{S}_+^d$
        \STATE {\bfseries Input:} initialization $\vx_1$, loss functions $\{L_t\}_{t=1}^T:\!\mathbb{R}^{d}\!\to\!\mathbb{R}$
        \STATE $\mM_0\gets \epsilon \bI_d$
        \FOR{$t=1, 2, \ldots, T$} 
            \STATE $\vg_{t} \gets \nabla L_t(\vx_{t})$ 
            \STATE $\mM_t \gets  \mM_{t-1} + \vg_t \vg_t^\top$
            \STATE $\mH_t \gets \argmin_{\mH \in \mathcal{H}} \inner{\mM_t}{\mH^{-1}} + \eta^2 \Tr(\mH)$
            \STATE $\vx_{t+1} \gets \Pi_{\mathcal{X}}^{\mH_t} \left(\vx_{t} -\mH_t^{-1}\vg_t \right)$ 
            \ENDFOR
        \STATE {\bfseries Return} $\vx_1, \ldots,\vx_T$ 
    \end{algorithmic}
\end{algorithm}

The above bound is obtained by first applying a standard bound for online mirror descent to get a bound in the form of $\sum_{t=1}^T\|\bg_t\|_{\bH_t}$ plus the second term on the RHS of \eqref{eq:adaptive_regret_bound}.
Then the choice of $\bH_t$ in \Cref{alg:general_adaptive_optimizer} enables the application of FTL-BTL lemma \citep{kalai2005efficient} to further bound $\sum_{t=1}^T\|\bg_t\|_{\bH_t}$ by the first term on the RHS of \eqref{eq:adaptive_regret_bound}.

To proceed from \Cref{thm:adaptive_regret_bound}, \citet{gupta2017unified} relies on a crucial assumption that $\mH_{t-1}\preceq \mH_t$ for each $t$. 
Or more generally, for any $\bM\succ0$, define
\begin{align}\label{eq:optimization_preconditioner}
    P_\cH(\bM) := \argmin_{\bH\in\cH} \langle\bM,\bH^{-1}\rangle + \eta^2\Tr(\bH).
\end{align}
Then we hope it holds that 
\begin{equation}\label{eq:operator_monotone}
 P_\cH(\bM)\preceq P_\cH(\bM') \quad\text{for any} \quad 0\prec\bM\preceq\bM'   
\end{equation}
In other words, we need $P_\cH:\cS_{++}^d\to\cS_{++}^d$ to be \emph{operator monotone} under the semi-definite ordering. 
With this assumption, a critical step in the derivation of the regret bound in \citet{gupta2017unified} is to further rewrite and upper bound the second term on the RHS of \Cref{eq:adaptive_regret_bound} by
\begin{align}
    \|\bx_1-\bx^*\|_{\bH_1}^2 + \sum_{t=2}^T \|\bx_t-\bx^*\|_{\bH_t-\bH_{t-1}}^2\leq 4\|\cX\|_{\bH_1}^2 + 4\sum_{t=2}^T \|\cX\|_{\bH_t-\bH_{t-1}}^2. \label{eq:regret_bound_second_term}
\end{align}
Note that we need $\bH_{t-1}\preceq\bH_t$ to ensure that $\|\cdot\|_{\bH_t-\bH_{t-1}}^2$ is indeed a (pseudo) norm and that the last inequality holds. Such analysis has been done for a few notable variants of AdaGrad in \citet{gupta2017unified}, where the condition $\bH_{t-1}\preceq\bH_t$ is verified in a case-by-case way for specific choice of $\cH$.
However, the following question remains unclear for optimizers described by general $\cH$:

\begin{question}\label{q:1}
For a cone $\mathcal{H}\subseteq\cS_+^d$, does $P_\cH(\bM)\preceq P_\cH(\bM')$ hold whenever $0\prec\bM\preceq\bM'$?
\end{question}

Indeed, the answer is \emph{no} for ``ill-structured'' $\cH$, and we mention two negative examples here for illustration.

\begin{example}\label{eg:H_two_sided_shampoo}
Let $\cH=\{\bA\otimes\bB\succeq0\mid\bA\in\cM^{d_L}, \bB\in\cM^{d_R}\}$ 
with $d_Ld_R=d$.
We show that in the special case where $d_L=d_R=2$, for $\bM=\diag(1,\epsilon,\epsilon,\epsilon)$ and $\bM'=\diag(1,\epsilon,\epsilon,1)$, $P_\cH(\bM) \preceq P_\cH(\bM')$ does not hold for sufficiently small $\epsilon>0$, although we have $\bM\preceq\bM'$.
See \Cref{sec:ill_structured_cone_two_sided_shampoo} for a detailed proof.
\end{example}
\begin{remark}
The set $\cH$ is a natural structural candidate if one attempts to interpret two-sided Shampoo (\Cref{alg:two_sided_shampoo}) in our framework, since $\mH_t = \mL_t^{\frac{1}{4}} \otimes \mR_t^{\frac{1}{4}}$ lies in $\cH$. However, in practice two-sided Shampoo avoids this issue precisely because it does not compute the exact minimizer over $\cH$—it uses a fixed form $\mH_t = \mL_t^{\frac{1}{4}} \otimes \mR_t^{\frac{1}{4}}$, which is always nondecreasing. Thus, this example illustrates a theoretical limitation of using poorly structured cones like $\cH$, not a flaw in the Shampoo algorithm itself.

\end{remark}

\begin{example}\label{eg:H_tridiagonal}
The second example of ill-structured $\cH$ involves tridiagonal PSD matrices, i.e., matrices that only have nonzero elements on the main diagonal,
the first diagonal above the main diagonal, and the first diagonal below the main diagonal.
Specifically, for $\cH$ containing all 3-dimensional PSD matrices,
we provide numerical evidence demonstrating that the desired condition \Cref{eq:operator_monotone} breaks for very simple instances.
The details can be found in \Cref{sec:ill_structured_cone_tridiagonal}.
\end{example}

These failure modes naturally lead to the second question:

\begin{question}\label{q:2}
   Is there a sufficient yet general condition 
   (which covers all existing examples) 
   on $\mathcal{H}$ for the inequality to hold?
\end{question}

As one of the main contributions of our work, we give an affirmative answer to \Cref{q:2} in \Cref{sec:main_general} by proposing a notion of well-structured preconditioners (\Cref{def:good_preconditioner}) and deriving a unified analysis correspondingly.

\section{Unified Analysis for Well-Structured Preconditioners}\label{sec:main_general}
We establish a unified framework for adaptive optimization with structured preconditioners.
In \Cref{sec:well_structured_preconditioner}, we propose the notion of well-structured preconditioners and show that they satisfy the desired condition \Cref{eq:operator_monotone}.
Then in \Cref{sec:unified_analysis}, we present a unified analysis for adaptive optimization with well-structured preconditioners.
We discuss several prominent examples in \Cref{sec:examples}.

\subsection{Well-structured preconditioners}\label{sec:well_structured_preconditioner}
For a set of $d$-by-$d$ matrices $\cK\subseteq\cM^d$, we say that $\cK$ is a \emph{subalgebra} if it is closed under scalar multiplication, matrix addition, and matrix multiplication.
More concretely, we require that for any $\alpha\in\RR$ and $\bA,\bB\in\cK$, it holds that $\alpha\bA,\bA\bB,\bA+\bB\in\cK$.
Based on this, we propose the following core concept of our paper.

\begin{definition}[Well-structured preconditioner sets]\label{def:good_preconditioner}
$\cH\subseteq\cS_+^d$ is said to be a \emph{well-structured preconditioner set} if $\cH=\cS_+^d\cap\cK$ for some matrix subalgebra $\cK\subseteq\cM^d$ with $\bI_d\in\cK$.
\end{definition}

As a positive response to \Cref{q:2}, the following proposition shows that our notion of well-structured preconditioner sets provides a sufficient condition for $P_\cH(\cdot)$ to be operator monotone.
See \Cref{sec:proof_preconditioner} for a proof.
\begin{proposition}\label{prop:well_structured_preconditioner}
Let $\cH$ be a well-structured preconditioner set under \Cref{def:good_preconditioner}.
For any $\bM\succ0$, there exists a unique solution $P_\cH(\bM)\succ0$ to the optimization problem in \eqref{eq:optimization_preconditioner}.
Furthermore, for any $\bM\succ0$, $P_\cH(\bM)$ satisfies the following properties:
\begin{enumerate}[label=(\alph*)]
\item $\langle\bM,P_\cH(\bM)^{-1}\rangle = \eta^2 \Tr(P_\cH(\bM))$.

\item $\cleverbar{P_\cH(\bM)} = \argmin_{\bH\in\cH,\Tr(\bH)\leq 1} \langle \bM, \bH^{-1}\rangle$, where we recall that $\cleverbar{P_\cH(\bM)} = P_\cH(\bM)/\Tr(P_\cH(\bM))$.
\end{enumerate}
Moreover, for any $0\prec \bM\preceq \bM'$, it holds that $P_{\cH}(\bM')-P_\cH(\bM)\in\cH$. 
In particular, this implies that $P_{\cH}(\bM)\preceq P_{\cH}(\bM')$.
\end{proposition}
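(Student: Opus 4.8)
The plan is to break \Cref{prop:well_structured_preconditioner} into the existence/uniqueness part, the two equational characterizations (a) and (b), and finally the monotonicity claim, treating the last as the main payload. For existence and uniqueness, I would first observe that $\cH = \cS_+^d \cap \cK$ is a closed convex cone and that $\bI_d \in \cK$ gives $\bI_d \in \cH$, so the feasible set is nonempty. The objective $F_\bM(\bH) := \langle \bM, \bH^{-1}\rangle + \eta^2 \Tr(\bH)$ is strictly convex on $\cS_{++}^d$ (the map $\bH \mapsto \bH^{-1}$ is operator convex and $\bM \succ 0$ makes $\langle \bM, \cdot\rangle$ strictly increasing along the relevant directions; the trace term is linear), and it is coercive — as $\Tr(\bH)\to\infty$ the trace term blows up, and as $\bH$ approaches the boundary of $\cS_+^d$ the term $\langle\bM,\bH^{-1}\rangle$ blows up since $\bM\succ0$. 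Hence a unique minimizer $P_\cH(\bM)\succ0$ exists, and it lies in $\cS_{++}^d\cap\cK$.

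For part (a), I would use the homogeneity trick: for $t>0$, $F_\bM(t\bH) = t^{-1}\langle\bM,\bH^{-1}\rangle + t\eta^2\Tr(\bH)$, and optimizing over the scalar $t$ at the minimizer (where the derivative in $t$ must vanish since $t\mapsto tP_\cH(\bM)$ stays feasible in the cone $\cH$) forces $\langle\bM,P_\cH(\bM)^{-1}\rangle = \eta^2\Tr(P_\cH(\bM))$. Part (b) then follows from the same scaling decomposition: writing any $\bH\in\cH$ as $\bH = t\bar\bH$ with $\Tr(\bar\bH)=1$ and $t=\Tr(\bH)$, minimizing $F_\bM$ over $t$ first (giving value $2\eta\sqrt{\langle\bM,\bar\bH^{-1}\rangle}$, by AM-GM, achieved at $t = \eta^{-1}\sqrt{\langle\bM,\bar\bH^{-1}\rangle}$) reduces the problem to minimizing $\langle\bM,\bar\bH^{-1}\rangle$ over $\{\bar\bH\in\cH: \Tr(\bar\bH)\le 1\}$ (the constraint is active at the optimum by monotonicity of $\bH\mapsto\bH^{-1}$), and the reconstructed scaling matches $\cleverbar{P_\cH(\bM)}$.

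The heart of the proof is the monotonicity statement, $P_\cH(\bM') - P_\cH(\bM)\in\cH$ for $0\prec\bM\preceq\bM'$. Here I would exploit the subalgebra structure crucially, via the first-order optimality condition. Since $P_\cH(\bM)$ minimizes $F_\bM$ over the cone $\cH = \cS_+^d\cap\cK$, and the minimizer is interior to $\cS_+^d$ (it is PD), the variational inequality reads $\langle \nabla F_\bM(P_\cH(\bM)), \bH - P_\cH(\bM)\rangle \ge 0$ for all $\bH\in\cH$; because $\cH$ is a cone and the minimizer is in its relative interior direction-wise, this sharpens to $\nabla F_\bM(P_\cH(\bM))$ being ``orthogonal to $\cK$'' in the appropriate sense — concretely, the $\cK$-component of $\eta^2\bI_d - P_\cH(\bM)^{-1}\bM P_\cH(\bM)^{-1}$ vanishes (using that $\nabla_\bH \langle\bM,\bH^{-1}\rangle = -\bH^{-1}\bM\bH^{-1}$). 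The key algebraic fact I expect to need is: since $P := P_\cH(\bM)\in\cK$ and $\cK$ is a subalgebra containing $\bI_d$, the map $\bX\mapsto P^{-1}\bX P^{-1}$ preserves $\cK$ and its ``orthogonal complement,'' which lets one transfer the optimality condition for $\bM$ into a comparison condition against the optimality condition for $\bM'$. Then I would argue that $\bD := P_\cH(\bM') - P_\cH(\bM)$ must be PSD by a monotonicity-of-solution argument: suppose not; consider the path $\bM(s) = (1-s)\bM + s\bM'$ and the induced path of solutions $P(s) = P_\cH(\bM(s))$, which is $C^1$ by the implicit function theorem (strict convexity ensures nondegeneracy of the Hessian restricted to $\cK$), and differentiate the optimality condition in $s$ to show $\dot P(s)$, as an element of $\cK$, is forced to be PSD because $\dot\bM(s) = \bM' - \bM \succeq 0$ — this is the linearized version of operator monotonicity, and the subalgebra structure is exactly what makes the linear system defining $\dot P(s)$ have a PSD solution. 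Integrating, $\bD = \int_0^1 \dot P(s)\,ds$ is a sum (integral) of PSD matrices in $\cK$, hence $\bD\in\cS_+^d\cap\cK = \cH$, giving both $\bD\in\cH$ and $P_\cH(\bM)\preceq P_\cH(\bM')$. The main obstacle I anticipate is making the ``$\cK$-component of the gradient vanishes'' statement precise and then showing the linearized system genuinely yields a PSD derivative — this requires carefully using closure of $\cK$ under multiplication (so that products like $P^{-1}\dot\bM P^{-1}$ and $P^{-1}\dot P P^{-1}$ stay controllable) together with a Lyapunov/Sylvester-equation argument that the solution $\dot P$ of $(\text{symmetrized } P^{-1}\cdot P^{-1})[\dot P] = \text{(projection of) } P^{-1}\dot\bM P^{-1}$ inherits positive semidefiniteness from $\dot\bM\succeq0$; getting the projection onto $\cK$ to commute with these operations is where the subalgebra hypothesis does its real work.
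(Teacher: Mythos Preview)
Your treatment of existence, uniqueness, and parts (a)--(b) matches the paper's almost verbatim: coercivity plus strict convexity for existence/uniqueness, the scaling $t\mapsto t\bH$ for (a), and the scale/direction decomposition for (b).

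For monotonicity, your route diverges from the paper's, and I think the paper's is both simpler and fills precisely the gap you flag as your ``main obstacle.'' The paper does not linearize or invoke the implicit function theorem. Instead, it uses the subalgebra bijection $\bH\mapsto P^{-1}\bH P^{-1}$ (which you correctly identify) to rewrite the first-order condition $\langle -P^{-1}\bM P^{-1}+\bI_d,\bH-P\rangle=0$ as $\langle \bM-P^2,\,\bH'-P^{-1}\rangle=0$ for all $\bH'\in\cH$. Applying this simultaneously at $\bM$ and $\bM'$ and subtracting yields, after cancellations from (a), the clean identity
\[
\langle P_\cH(\bM')^2-P_\cH(\bM)^2,\,\bH\rangle=\langle \bM'-\bM,\,\bH\rangle\ge 0\quad\text{for all }\bH\in\cH.
\]
The paper then invokes a separate lemma (self-duality of $\cH$ inside $\cK$: for $\bA\in\cK$, $\langle\bA,\bH\rangle\ge 0$ for all $\bH\in\cH$ forces $\bA\succeq0$), proved by a power-method argument using closure of $\cK$ under polynomials. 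This gives $P_\cH(\bM')^2\succeq P_\cH(\bM)^2$, and operator monotonicity of the matrix square root finishes the job.

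Your differential approach is not wrong in spirit, but the Sylvester-type positivity claim you leave open (``$\dot P$ inherits PSD from $\dot\bM$ after projection onto $\cK$'') almost certainly needs the same self-duality lemma to close, and once you have that lemma the direct comparison-of-squares argument is strictly shorter. You are missing the $P^2$ reduction and the self-duality of $\cH$ as explicit ingredients; with those in hand you can drop the path $\bM(s)$ entirely.
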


The closure under both matrix addition and matrix multiplication is crucial in the proof of \Cref{prop:well_structured_preconditioner}.
Violation of any of the two properties could lead to problems: for the preconditioner set of two-sided Shampoo in \Cref{eg:H_two_sided_shampoo}, it is not closed under matrix addition; while for \Cref{eg:H_tridiagonal} regarding tridiagonal matrices, we note that the set of tridiagonal matrices is not closed under matrix multiplication.

\subsection{Unified analysis for adaptive optimization}\label{sec:unified_analysis}
To proceed with the regret bound, recall from \Cref{prop:well_structured_preconditioner} that 
$\cleverbar{\bH}_t=\bH_t/\Tr(\bH_t)$ is a solution to the following constrained optimization problem
\begin{align}\label{eq:constrained_optimization}
    \cleverbar{\bH}_t = \argmin_{\bH\in\cH,\Tr(\bH)\le 1} \langle\bM_t,\bH^{-1}\rangle.
\end{align}
Also recall that $\bM_t=\sum_{s=1}^t\bg_s\bg_s^\top$ (assuming $\epsilon=0$ for illustration).
We interpret the optimal value of this constrained optimization problem as the magnitude of the sequence of gradients $\bg_{1:t}=(\bg_1,\ldots,\bg_t)$ with respect to the best preconditioner in hindsight.
Motivated by this, for any sequence of gradients $\bg_{1:t}$, we define their \emph{adaptive gradient norm} with respect to $\cH$ to be 
\begin{align}\label{eq:def_adaptive_gradient_norm}
    \vertiii{\vg_{1:t}}_{\cH} := \inf_{\bH\in\cH,\Tr(\bH)\leq 1} \sqrt{\bigg\langle\sum_{s=1}^t \vg_s\vg_s^\top,\bH^{-1}\bigg\rangle}.
\end{align}
Indeed, this definition of adaptive gradient norm corresponds to the dual norm of the norm $\|\cdot\|_{\cH\otimes\bI_t}$, denoted as $\|\cdot\|_{\cH\otimes\bI_t}^*$.
Specifically, we define 
\begin{align*}
    \|\Vec(\bg_{1:t})\|_{\cH\otimes\bI_t}^* = \sup_{\bw\in\RR^{td}:\|\bw\|_{\cH\otimes\bI_t}\leq 1} \vec(\bg_{1:t})^\top \bw.
\end{align*}
Then we have $\vertiii{\bg_{1:t}}_\cH=\|\Vec(\bg_{1:t})\|_{\cH\otimes\bI_t}^*/\sqrt{t}$ by the following lemma, which is proved in \Cref{sec:adaptive_gradient_norm_proof}.

\begin{lemma}\label{lem:dual_norm_main}
Let $\cH$ be a well-structured preconditioner set under \Cref{def:good_preconditioner}.
Then for any $t\geq 1$ and $\bg_1,\ldots,\bg_t\in\RR^d$, it holds that
\begin{align*}
    \inf_{\bH\in\cH,\Tr(\bH)\leq 1} \sqrt{ \bigg\langle\sum_{s=1}^t\bg_s\bg_s^\top,\bH^{-1}\bigg\rangle} = \frac{1}{\sqrt{t}}\|\Vec(\bg_{1:t})\|_{\cH\otimes\bI_t}^*.
\end{align*}
\end{lemma}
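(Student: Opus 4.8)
Write $\bM:=\sum_{s=1}^t\bg_s\bg_s^\top$ and identify $\RR^{dt}$ with $\RR^{d\times t}$, so that $\vec(\bg_{1:t})$ is the matrix $\bG$ with columns $\bg_1,\dots,\bg_t$ (and likewise $\bV$ for a generic tuple $\bv_1,\dots,\bv_t$). The first step is to rewrite both quantities in the statement purely in terms of $\cH$. Since $\cH\otimes\bI_t:=\{\bH\otimes\bI_t:\bH\in\cH\}$ is again a convex cone in $\cS_+^{dt}$, since $\Tr(\bH\otimes\bI_t)=t\,\Tr(\bH)$, and since (with the paper's vectorization convention) $\vec(\bV)^\top(\bH\otimes\bI_t)\vec(\bV)=\langle\sum_s\bv_s\bv_s^\top,\bH\rangle$, definition \eqref{eq:H_norm} followed by the rescaling $\bH\mapsto t\bH$ (legitimate because $\cH$ is a cone) gives
\begin{align*}
\|\vec(\bV)\|_{\cH\otimes\bI_t}^2 \;=\; \frac1t\,\sup_{\bH\in\cH,\ \Tr(\bH)\le 1}\Big\langle \sum_{s=1}^t \bv_s\bv_s^\top,\ \bH\Big\rangle .
\end{align*}
Combining this with $\vertiii{\bg_{1:t}}_\cH^2=\inf_{\bH\in\cH,\Tr(\bH)\le1}\langle\bM,\bH^{-1}\rangle$ (from \eqref{eq:def_adaptive_gradient_norm}; the infimum ranges over positive definite $\bH$ and, by \Cref{prop:well_structured_preconditioner}, is attained at $\bH^\star:=\cleverbar{P_\cH(\bM)}\succ0$ with $\Tr(\bH^\star)=1$), the claim reduces to the scalar identity
\begin{align*}
\inf_{\bH\in\cH,\ \Tr(\bH)\le1}\langle\bM,\bH^{-1}\rangle \;=\; \frac1t\Big(\ \sup_{\bV:\ \|\vec(\bV)\|_{\cH\otimes\bI_t}\le1}\ \sum_{s=1}^t\bg_s^\top\bv_s\ \Big)^{\!2}.
\end{align*}

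\textbf{The $\ge$ direction} is elementary. Fix any feasible $\bH$ and any feasible $\bV$, write $\bg_s^\top\bv_s=(\bH^{-1/2}\bg_s)^\top(\bH^{1/2}\bv_s)$, and apply Cauchy--Schwarz for each $s$ and then once more over the sum in $s$: this bounds $\sum_s\bg_s^\top\bv_s$ by $\sqrt{\langle\bM,\bH^{-1}\rangle}\cdot\sqrt{\sum_s\bv_s^\top\bH\bv_s}$, and the displayed formula for $\|\cdot\|_{\cH\otimes\bI_t}$ gives $\sum_s\bv_s^\top\bH\bv_s\le t\,\|\vec(\bV)\|_{\cH\otimes\bI_t}^2\le t$ (using that this very $\bH$ lies in the feasible set over which that supremum is taken). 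Taking the supremum over $\bV$ and then the infimum over $\bH$ yields the inequality.

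\textbf{The $\le$ direction} is the crux, and is a strong-duality statement. The plan is to exhibit an explicit maximizer on the right-hand side built from the optimal preconditioner: take $\bv_s:=c\,(\bH^\star)^{-1}\bg_s$ with $\bH^\star=\cleverbar{P_\cH(\bM)}$ as above and $c>0$ a normalizing scalar. Then $\sum_s\bv_s\bv_s^\top=c^2(\bH^\star)^{-1}\bM(\bH^\star)^{-1}$, and the first-order optimality (variational inequality) for $\bH^\star$ as a minimizer of the convex smooth map $\bH\mapsto\langle\bM,\bH^{-1}\rangle$ over the convex set $\{\bH\in\cH:\Tr(\bH)\le1\}$ says precisely that $\langle(\bH^\star)^{-1}\bM(\bH^\star)^{-1},\,\bH-\bH^\star\rangle\le0$ for all such $\bH$; hence the supremum defining $\|\vec(\bV)\|_{\cH\otimes\bI_t}^2$ is attained at $\bH=\bH^\star$ and equals $\tfrac{c^2}{t}\langle\bM,(\bH^\star)^{-1}\rangle$. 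Choosing $c=\sqrt{t/\langle\bM,(\bH^\star)^{-1}\rangle}$ (the degenerate case $\bM=0$ being trivial) makes $\bV$ feasible and makes $\sum_s\bg_s^\top\bv_s=c\,\langle\bM,(\bH^\star)^{-1}\rangle=\sqrt{t\,\langle\bM,(\bH^\star)^{-1}\rangle}$, which is exactly $\sqrt t$ times the left-hand side. Combining the two inequalities and taking square roots completes the proof.

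\textbf{Main obstacle and where the hypotheses enter.} The only real content is the $\le$ direction; an alternative (arguably cleaner) route is to recognize $\tfrac12(\|\cdot\|_{\cH\otimes\bI_t}^*)^2$ as the Fenchel conjugate of $\bw\mapsto\tfrac12\|\bw\|_{\cH\otimes\bI_t}^2=\tfrac12\sup_{\bK}\bw^\top\bK\bw$ and to swap the resulting $\sup_\bw$ with $\inf_\bK$ via Sion's minimax over the \emph{compact} convex set $\{\bH\otimes\bI_t:\bH\in\cH,\ \Tr(\bH)\le1/t\}$, obtaining $\inf_\bK\tfrac12\bw^\top\bK^\dagger\bw$ and then undoing the Kronecker structure and the factor $t$. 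Either way, well-structuredness of $\cH$ is used only through the soft facts that $\cH$ is a closed convex cone with $\bI_d\in\cK$ (hence $c\bI_d\in\cH$): closedness plus boundedness of the trace slice gives the compactness needed for attainment/minimax, $c\bI_d\in\cH$ guarantees $\|\cdot\|_\cH$ and $\|\cdot\|_{\cH\otimes\bI_t}$ are genuine norms rather than seminorms, and \Cref{prop:well_structured_preconditioner} supplies the positive-definite minimizer $P_\cH(\bM)$ that both makes the infimum in \eqref{eq:def_adaptive_gradient_norm} well-posed and serves as the witness above. The rest is bookkeeping: tracking the $\Tr(\bH\otimes\bI_t)=t\,\Tr(\bH)$ factor, and (in the minimax route) handling singular intermediate matrices with pseudoinverses and range conditions — a wrinkle that disappears at the end because $\bH^\star$ is positive definite.
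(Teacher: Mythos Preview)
Your approach matches the paper's: the $\geq$ direction via Cauchy--Schwarz, and the $\leq$ direction by constructing an explicit dual witness from the optimal preconditioner $\cleverbar{P_\cH(\bM)}$. (The paper first treats $t=1$ and then reduces general $t$ by replacing $\cH$ with $\cH\otimes\bI_t$; you work with general $t$ throughout --- a presentational difference only.)

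There is, however, a real gap in your $\leq$ direction. You invoke \Cref{prop:well_structured_preconditioner} to obtain $\bH^\star=\cleverbar{P_\cH(\bM)}\succ0$ and then freely use $(\bH^\star)^{-1}$ and the smooth first-order variational inequality; but that proposition is stated only for $\bM\succ0$, whereas $\bM=\sum_{s=1}^t\bg_s\bg_s^\top$ has rank at most $t$ and is typically singular. Your disclaimer about ``the degenerate case $\bM=0$'' does not cover this. For a concrete failure, take $\cH=\cS_+^d$: the constrained minimizer of $\langle\bM,\bH^{-1}\rangle$ over $\Tr(\bH)\le1$ is the trace-normalization of $\bM^{1/2}$, which is singular whenever $\bM$ is; then $(\bH^\star)^{-1}$ does not exist and $\bH\mapsto\langle\bM,\bH^{-1}\rangle$ is not even defined at $\bH^\star$, so the variational inequality you write is unavailable. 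The paper closes this gap by first extending $P_\cH$ continuously from $\cS_{++}^d$ to $\cS_+^d$ (their appendix \Cref{prop:extension}), proving $\mathrm{span}(\bM)\subseteq\mathrm{span}(P_\cH(\bM))$, and deriving a stationarity condition phrased with the pseudoinverse and the projector $\bPi_\bM$ onto $\mathrm{span}(P_\cH(\bM))$; the witness becomes $\bw_\star\propto\bH_\star^\dagger\bg$, and the key inequality $\bg^\top\bH_\star^\dagger\bH\bH_\star^\dagger\bg\le\bg^\top\bH_\star^\dagger\bg$ for all feasible $\bH$ follows from that condition. Your Sion-minimax alternative could also be made to work, but not for the reason you give at the end: the pseudoinverse wrinkle does \emph{not} disappear, because $\bH^\star$ need not be positive definite. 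A third route you could take is a limiting argument (apply your argument to $\bM+\epsilon\bI_d\succ0$ and send $\epsilon\to0$, using continuity of both sides), but that too requires justification beyond what you have written.
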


Given this definition of adaptive gradient norm, combining \eqref{eq:constrained_optimization} and the fact that $\Tr(\bH_t)=\eta^{-2} \langle\bM_t,\bH_t^{-1}\rangle$ by \Cref{prop:well_structured_preconditioner}, we obtain $\Tr(\bH_t) = \eta^{-1} \vertiii{\vg_{1:t}}_{\cH}$.

Now recall the upper bound in \eqref{eq:regret_bound_second_term} for the second term in the regret bound \eqref{eq:adaptive_regret_bound} from \Cref{thm:adaptive_regret_bound}.
Note that for each $t$, \Cref{prop:well_structured_preconditioner} guarantees that $\bH_t-\bH_{t-1}\in\cH$, so we can further bound $\|\cX\|_{\bH_t-\bH_{t-1}}^2 \leq \|\cX\|_\cH^2 \cdot \Tr(\bH_t-\bH_{t-1})$. 
This allows us to telescope the sum in \Cref{eq:regret_bound_second_term} to get
\begin{align}
    \|\cX\|_{\bH_1}^2 + \sum_{t=2}^T\|\cX\|_{\bH_t-\bH_{t-1}}^2 
    \leq \|\cX\|_\cH^2 \Tr(\bH_T)
    &= \frac{\|\cX\|_\cH^2}{\eta} \vertiii{\bg_{1:T}}_\cH \label{eq:final_bound_second_term}
\end{align}
Observe that the upper bound is factored into two parts: 1)~$\|\cX\|_\cH$, the diameter of the domain under norm $\|\cdot\|_\cH$; 
and 2) $\vertiii{\bg_{1:T}}_\cH$, the adaptive gradient norm with respect to $\cH$.
We pause here for an important remark.
Note that $\|\cX\|_{\bH_t-\bH_{t-1}}^2$ is defined by taking the supremum over $\bx\in\cX$, which precludes telescoping the sum over $t\in[T]$ at first glance.
We address this issue by proposing the norm $\|\cX\|_\cH$, which allows us to extract the factor $\Tr(\bH_t-\bH_{t-1})$. 
Again, this unified analysis is possible thanks to \Cref{prop:well_structured_preconditioner} for well-structured preconditioner sets, in contrast to the case-by-case analysis done by \citet{gupta2017unified}.
Furthermore, we remark that the above factored bound is crucial for us to identify the correct norm metric for Shampoo, leading to an improved analysis. See \Cref{sec:compare_standard_shampoo} for details.

Finally, combining \eqref{eq:final_bound_second_term} and the original regret bound in \eqref{eq:adaptive_regret_bound} yields the final regret bound for \Cref{alg:general_adaptive_optimizer}.
This is summarized in the following \Cref{thm:main_regret_bound}.
The complete proof can be found in \Cref{sec:proof_online_regret}.
\begin{restatable}{theorem}{main}\label{thm:main_regret_bound}
Let $\cH$ be a well-structured preconditioner set under \Cref{def:good_preconditioner}.
Then for any convex loss functions $L_1,\ldots,L_T$, the regret of \Cref{alg:general_adaptive_optimizer} compared to any $\bx^*\in\cX$ can be bounded as
\begin{align*}
    \sum_{t=1}^T L_t(\vx_t) - \sum_{t=1}^T L_t(\vx^*) 
    \leq \left(\frac{D^2}{2\eta}+\eta\right)\left(G+ d \sqrt{\epsilon}\right)
\end{align*}
where $G=\vertiii{\vg_{1:T}}_{\cH}$, $D = \max_{t \in [T]} \norm{\vx_t - \vx^*}_\gH$.  
\end{restatable}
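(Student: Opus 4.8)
The plan is to feed \Cref{thm:adaptive_regret_bound} into \Cref{prop:well_structured_preconditioner}, collapse its right-hand side to a constant multiple of $\Tr(\mH_T)$, and then identify $\Tr(\mH_T)$ with the adaptive gradient norm. Since $\mM_0=\epsilon\bI_d\succ0$ and $\mM_t=\mM_{t-1}+\vg_t\vg_t^\top$, every $\mM_t\succ0$, so \Cref{prop:well_structured_preconditioner} applies at each step and $\mH_t=P_\cH(\mM_t)\succ0$ is well defined. I would bound the two terms on the right-hand side of \eqref{eq:adaptive_regret_bound} separately. For the first term, part (a) of \Cref{prop:well_structured_preconditioner} gives $\inner{\mM_T}{\mH_T^{-1}}=\eta^2\Tr(\mH_T)$, and $\Tr(\mH_0)\ge 0$ since $\mH_0=P_\cH(\mM_0)\succ0$, so this term is at most $\eta^2\Tr(\mH_T)$.

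For the second term, I would reorganize the telescoping sum as
\begin{align*}
\sum_{t=1}^T\big(\norm{\vx_t-\vx^*}_{\mH_t}^2-\norm{\vx_{t+1}-\vx^*}_{\mH_t}^2\big)=\norm{\vx_1-\vx^*}_{\mH_1}^2+\sum_{t=2}^T\norm{\vx_t-\vx^*}_{\mH_t-\mH_{t-1}}^2-\norm{\vx_{T+1}-\vx^*}_{\mH_T}^2,
\end{align*}
drop the last nonpositive term, and invoke the structural conclusion of \Cref{prop:well_structured_preconditioner} that $\mH_1\in\cH$ and $\mH_t-\mH_{t-1}\in\cH$ for $t\ge 2$ --- this is exactly where operator monotonicity of $P_\cH$, i.e.\ well-structuredness, is used, and where an ill-structured $\cH$ as in \Cref{eg:H_two_sided_shampoo,eg:H_tridiagonal} would break the argument. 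For any $\bH\in\cH$, the definition of $\norm{\cdot}_\cH$ gives $\norm{\vy}_{\bH}^2\le\Tr(\bH)\norm{\vy}_\cH^2$, so each summand is at most $\Tr(\mH_t-\mH_{t-1})D^2$ (and $\norm{\vx_1-\vx^*}_{\mH_1}^2\le\Tr(\mH_1)D^2$) with $D=\max_t\norm{\vx_t-\vx^*}_\cH$; telescoping the traces collapses the sum to $D^2\Tr(\mH_T)$. Combined with the first term, the regret is at most $(\eta^2+\tfrac12 D^2)\Tr(\mH_T)$.

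The crux is then the estimate $\eta\Tr(\mH_T)\le G+d\sqrt\epsilon$ with $G=\vertiii{\vg_{1:T}}_\cH$. Combining part (a) of \Cref{prop:well_structured_preconditioner} with part (b) and $\mH_T/\Tr(\mH_T)$ being the normalized solution, one gets $\eta^2\Tr(\mH_T)^2=\min_{\bH\in\cH,\Tr(\bH)\le 1}\inner{\mM_T}{\bH^{-1}}$, which is the ``$\Tr(\mH_t)=\eta^{-1}\vertiii{\vg_{1:t}}_\cH$'' identity of \Cref{sec:unified_analysis}, now including the $\epsilon$ term. To bound this minimum I would exhibit a single feasible preconditioner handling both pieces of $\mM_T=\sum_{t=1}^T\vg_t\vg_t^\top+\epsilon\bI_d$: pick $\bH'\in\cH$ with $\Tr(\bH')\le1$, $\bH'\succ0$, and $\inner{\sum_t\vg_t\vg_t^\top}{(\bH')^{-1}}\le G^2+\delta$ (possible by definition of $G$ via \eqref{eq:def_adaptive_gradient_norm}), note $\bI_d/d\in\cH$ has trace $1$ and $\inner{\epsilon\bI_d}{(\bI_d/d)^{-1}}=\epsilon d^2$, and set $\bH_\lambda=\lambda\bH'+(1-\lambda)\bI_d/d\in\cH$ with $\Tr(\bH_\lambda)\le1$. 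Since $\bH_\lambda\succeq\lambda\bH'$ and $\bH_\lambda\succeq(1-\lambda)\bI_d/d$ and matrix inversion is operator antitone, $\inner{\mM_T}{\bH_\lambda^{-1}}\le(G^2+\delta)/\lambda+\epsilon d^2/(1-\lambda)$; using $\min_{\lambda\in(0,1)}\{A/\lambda+B/(1-\lambda)\}=(\sqrt A+\sqrt B)^2$ and letting $\delta\to0$ gives $\eta^2\Tr(\mH_T)^2\le(G+d\sqrt\epsilon)^2$, hence $\Tr(\mH_T)\le\eta^{-1}(G+d\sqrt\epsilon)$. Plugging this into $(\eta^2+\tfrac12 D^2)\Tr(\mH_T)$ yields exactly $\big(\tfrac{D^2}{2\eta}+\eta\big)(G+d\sqrt\epsilon)$.

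The main obstacle is this last estimate: $\Tr(\mH_T)$ is driven by $\mM_T$, which blends the true gradient outer products with the $\epsilon\bI_d$ regularizer, and one wants these to decouple into the clean additive bound $G+d\sqrt\epsilon$ rather than a bound carrying a spurious $\sqrt 2$ or a cross term; the convex-combination-plus-reweighting device above is what makes the split tight, and it is legitimate precisely because $\cH=\cS_+^d\cap\cK$ is closed under addition and nonnegative scaling and contains $\bI_d/d$. An alternative route to the same estimate is to write $\mM_T$ as a sum of its $d+T$ rank-one summands and apply \Cref{lem:dual_norm_main} together with subadditivity of the dual norm $\norm{\cdot}_{\cH\otimes\bI_n}^*$ under concatenation of coordinate blocks. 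Everything else is routine bookkeeping, the one point requiring care being that $\epsilon>0$ keeps each $\mM_t$ positive definite so that \Cref{prop:well_structured_preconditioner} stays applicable at every step.
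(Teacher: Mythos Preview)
Your proposal is correct and follows essentially the same argument as the paper: both invoke \Cref{thm:adaptive_regret_bound}, use \Cref{prop:well_structured_preconditioner}(a) to collapse the first term to $\eta^2\Tr(\mH_T)$, telescope the second term via $\mH_t-\mH_{t-1}\in\cH$ to get $\tfrac12 D^2\Tr(\mH_T)$, and then bound $\eta\Tr(\mH_T)\le G+d\sqrt\epsilon$ by testing the convex combination $\lambda\bH'+(1-\lambda)\bI_d/d$ and optimizing over $\lambda$. Your use of an approximate minimizer $\bH'$ with slack $\delta$ is in fact slightly more careful than the paper's direct appeal to $\mH_T^*=\argmin$, since the latter may be singular.
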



\begin{restatable}{corollary}{bounded}\label{cor:regret_bound}
Under the setting of \Cref{thm:adaptive_regret_bound}, further suppose that $\cX$ is a bounded set in $\RR^d$.
Then choosing $\eta =\sqrt{2} \norm{\gX}_\gH $, the regret bound for \Cref{alg:general_adaptive_optimizer} becomes
\begin{align*}
    \sum_{t=1}^T L_t(\vx_t) - \sum_{t=1}^T L_t(\vx^*)\leq 2\sqrt{2}  \norm{\gX}_\gH \left(\vertiii{\vg_{1:T}}_{\cH}+ d \sqrt{\epsilon}\right).
\end{align*}
\end{restatable}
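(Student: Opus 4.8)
The plan is to read the statement off from \Cref{thm:main_regret_bound} with essentially no extra work: that theorem already bounds the regret by $\left(\tfrac{D^2}{2\eta}+\eta\right)\!\left(G+d\sqrt{\epsilon}\right)$, where $G=\vertiii{\vg_{1:T}}_\cH$ and $D=\max_{t\in[T]}\norm{\vx_t-\vx^*}_\gH$. So it only remains to (i) control the diameter-like quantity $D$ by the domain size $\norm{\gX}_\gH$, and (ii) substitute the prescribed value of $\eta$.

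For step (i), I would first observe that every iterate lies in $\cX$: the initialization $\vx_1$ is assumed to be in $\cX$, and for $t\ge 1$ the update $\vx_{t+1}=\Pi_\cX^{\mH_t}(\vx_t-\mH_t^{-1}\vg_t)$ returns a point of $\cX$ by definition of the projection. Next I would recall from \eqref{eq:H_norm} that $\norm{\cdot}_\gH=\sup_{\mH\in\gH,\,\Tr(\mH)\le1}\norm{\cdot}_\mH$ is a supremum of the (semi)norms $\norm{\vx}_\mH=\sqrt{\vx^\top\mH\vx}$, hence is itself subadditive and symmetric; moreover $\mH\succeq0$ with $\Tr(\mH)\le1$ forces $\norm{\mH}_{\op}\le1$, so $\norm{\vx}_\gH\le\norm{\vx}_2$ and therefore $\norm{\gX}_\gH\le\norm{\gX}_2<\infty$ since $\cX$ is bounded, so all quantities in play are finite. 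Applying the triangle inequality together with $\vx_t,\vx^*\in\cX$ gives $\norm{\vx_t-\vx^*}_\gH\le\norm{\vx_t}_\gH+\norm{\vx^*}_\gH\le 2\norm{\gX}_\gH$, hence $D\le 2\norm{\gX}_\gH$.

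For step (ii), I would plug $D\le 2\norm{\gX}_\gH$ into \Cref{thm:main_regret_bound}: the prefactor is then at most $\tfrac{2\norm{\gX}_\gH^2}{\eta}+\eta$, and the choice $\eta=\sqrt2\,\norm{\gX}_\gH$ equalizes the two summands, each becoming $\sqrt2\,\norm{\gX}_\gH$, for a total prefactor of $2\sqrt2\,\norm{\gX}_\gH$; this yields exactly the claimed bound. (By AM--GM this $\eta$ is in fact the minimizer of the relaxed prefactor $\tfrac{2\norm{\gX}_\gH^2}{\eta}+\eta$, so nothing is lost in the tuning.) I do not expect any real obstacle here, since this is a routine corollary; the only point deserving a little care is the inequality $D\le 2\norm{\gX}_\gH$, which genuinely uses both that the projection step keeps every iterate inside $\cX$ (so that $\vx_t$ and $\vx^*$ are simultaneously in $\cX$) and that $\norm{\cdot}_\gH$ obeys the triangle inequality --- one cannot directly bound $\norm{\vx_t-\vx^*}_\gH$ by $\norm{\gX}_\gH$, since the difference $\vx_t-\vx^*$ need not belong to $\cX$.
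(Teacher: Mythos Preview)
Your proposal is correct and matches the paper's own proof essentially step for step: bound $D\le 2\norm{\gX}_\gH$ via the triangle inequality using that all iterates and $\vx^*$ lie in $\cX$, then substitute $\eta=\sqrt{2}\,\norm{\gX}_\gH$ into \Cref{thm:main_regret_bound}. The paper's version is terser, but you have filled in exactly the details it elides (why iterates stay in $\cX$, why $\norm{\cdot}_\gH$ is subadditive), and nothing more is needed.
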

Ignoring $\epsilon$, the bound reveals an intrinsic trade-off between $\|\cX\|_\cH$ and $\vertiii{\bg_{1:t}}_\cH$: as $\mathcal{H}$ gets larger, $\|\cX\|_\cH$ increases while $\vertiii{\bg_{1:t}}_\cH$ decreases. 
The previous common belief that more adaptivity (larger $\mathcal{H}$) helps optimization could be largely due to the loose upper bound on $\|\cX\|_\cH$, i.e., always measuring the size of domain $\mathcal{X}$ by Frobenius norm instead of potentially much smaller $\norm{\cdot}_\mathcal{H}$. 
The fact that Kroneckered-factored subalgebra induces smaller $\norm{\cdot}_{\mathcal{H}}$ than the entire matrix subalgebra is the key reason behind our surprising finding that one-sided Shampoo can outperform full-matrix AdaGrad in terms of the number of steps. 
See a more detailed analysis in \Cref{sec:compare_optimizers}. 

Next, we present the convergence rate of \Cref{alg:general_adaptive_optimizer} for stochastic convex smooth loss functions. We use \Cref{defi:H_smoothness} to characterize the smoothness of a loss function. It is an extension of $\Phi$-smoothness in \cite{xie2025adam}, which only applies to block-diagonal preconditioners (corresponding to blockwise Adam). 

\begin{definition}[$\cH$-smoothness]\label{defi:H_smoothness}
For a set $\gH \subseteq \gS_+^d$ and any loss function $L:\mathbb{R}^{d}\to \mathbb{R}$, we define the $\cH$-smoothness of $L$, denoted by $H(L, \cH)$, as the smallest number $H\geq0$ such that there exists a matrix $\mH^*\in\cH$ satisfying $H=\Tr(\mH^*)$ and for any $\vx\in\mathbb{R}^{d}$, it holds that $-\mH^* \preceq \nabla^2 L(\vx) \preceq \mH^*$. 
In the case of convex $L$, this requirement becomes $\nabla^2 L(\vx) \preceq \mH^*$. 
Furthermore, we extend the notation to matrices $\mA \in \gM^d$ by defining $H(\mA, \gH)$ as $H(\vx \mapsto \frac{1}{2}\vx^\top\mA\vx, \gH)$. 
\end{definition}

We need the following assumption on the stochastic noise.

\begin{assumption}\label{ass:noise}
For any $t \in [T]$, $\E[L_t(\vx)]=L(\vx)$ for any $\bx\in\cX$, and there exists some $\bm{\Sigma} \in \gS_+^d$ such that $\E [\left(\nabla L_t(\vx)- \nabla L(\vx)\right) \left(\nabla L_t(\vx)- \nabla L(\vx)\right)^\top]  \preceq \bm{\Sigma}$ for any $\bx\in\cX$. 
\end{assumption}

Now we are ready to state our main results on the convergence rate of \Cref{alg:general_adaptive_optimizer} for stochastic convex loss functions.

\begin{restatable}{theorem}{smooth}\label{thm:main_convex_smooth}
Let $\cH$ be a well-structured preconditioner set under \Cref{def:good_preconditioner}.
Consider any independent stochastic convex loss functions $L_1,\ldots,L_T$ satisfying \Cref{ass:noise}, and let $H(L,\cH)$ be the $\gH$-smoothness of their expectation $L$. Suppose the global minimizer of $L$, denoted by $\vx^*$, is in $\cX$. Then for the iterates $\bx_1,\ldots,\bx_T$ of \Cref{alg:general_adaptive_optimizer} with learning rate $\eta=\sqrt{2}\norm{\cX}_\cH$, denoting $\bar\bx_{1:T}=\frac{1}{T}\sum_{t=1}^T\bx_t$, it holds that 
\begin{align*}
   \E \left[L(\bar\bx_{1:T}) - L(\vx^*)\right]\leq \frac{16}{T} \norm{\gX}_\gH^2 H(L, \cH) 
    +\frac{4\sqrt{2}}{\sqrt{T}} \norm{\gX}_\gH \sigma 
    + \frac{4\sqrt{2}d \sqrt{\epsilon} }{T} \norm{\gX}_\gH
\end{align*}
where $\sigma=\inf_{\mH \in \gH, \Tr(\mH) \leq 1} \sqrt{\inner{\bm{\Sigma}}{\mH^{-1}}}$.
\end{restatable}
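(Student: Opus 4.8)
The plan is a standard online-to-batch reduction applied to the online regret bound \Cref{thm:main_regret_bound}, followed by a self-bounding argument that converts $\cH$-smoothness into the fast $1/T$ rate. Throughout, write $C = \norm{\cX}_\cH$ and $H = H(L,\cH)$; fix a matrix $\mH^*\in\cH$ witnessing the $\cH$-smoothness of $L$ (so $\nabla^2 L(\vx)\preceq\mH^*$ for all $\vx$ and $\Tr(\mH^*) = H$), and a matrix $\mH^{**}\in\cH$ with $\Tr(\mH^{**})\le 1$ that is (nearly) optimal for the infimum defining $\sigma$. Let $\cF_{t-1} = \sigma(L_1,\dots,L_{t-1})$; since $\bx_t$ is $\cF_{t-1}$-measurable while $L_t$ is independent of $\cF_{t-1}$, \Cref{ass:noise} gives $\E[L_t(\bx_t)\mid\cF_{t-1}] = L(\bx_t)$, $\E[\vg_t\mid\cF_{t-1}] = \nabla L(\bx_t)$, and $\E[\bxi_t\bxi_t^\top\mid\cF_{t-1}]\preceq\bm{\Sigma}$ where $\bxi_t := \vg_t-\nabla L(\bx_t)$. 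I would then take expectations in the (pathwise) regret bound of \Cref{thm:main_regret_bound}: using $\E[L_t(\bx_t)] = \E[L(\bx_t)]$ and $\E[L_t(\vx^*)] = L(\vx^*)$ on the left, and $D := \max_t\norm{\bx_t-\vx^*}_\cH\le 2C$ on the right (triangle inequality for $\norm{\cdot}_\cH$, as all iterates and $\vx^*$ lie in $\cX$), this yields $R\le\big(\tfrac{2C^2}{\eta}+\eta\big)\big(\E[\vertiii{\vg_{1:T}}_\cH]+d\sqrt{\epsilon}\big)$ with $R := \E\big[\sum_{t=1}^T (L(\bx_t)-L(\vx^*))\big]$; taking $\eta = \sqrt{2}\,C$ makes the prefactor $2\sqrt{2}\,C$ exactly as in \Cref{cor:regret_bound}. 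Since $\E[L(\bar\bx_{1:T})-L(\vx^*)]\le R/T$ by convexity of $L$ and Jensen, everything reduces to bounding $R$.

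\textbf{Bounding $\E[\vertiii{\vg_{1:T}}_\cH]$.} This is the heart of the argument. By \Cref{lem:dual_norm_main}, $\vertiii{\cdot}_\cH$ is a scalar multiple of a dual norm and in particular obeys the triangle inequality, so I would split $\vertiii{\vg_{1:T}}_\cH \le \vertiii{(\nabla L(\bx_s))_{s\le T}}_\cH + \vertiii{(\bxi_s)_{s\le T}}_\cH$. For the first (deterministic) piece, I evaluate the infimum in \eqref{eq:def_adaptive_gradient_norm} at $\mH = \mH^*/H$ and use the smooth-convex co-coercivity bound $\nabla L(\vx)^\top(\mH^*)^{+}\nabla L(\vx)\le 2\big(L(\vx)-L(\vx^*)\big)$ (valid because $\vx^*$ is the global minimizer of $L$ and $\nabla^2 L\preceq\mH^*$), obtaining $\vertiii{(\nabla L(\bx_s))_{s\le T}}_\cH^2 \le 2H\sum_{t=1}^T(L(\bx_t)-L(\vx^*))$; taking expectations and using concavity of $\sqrt{\cdot}$ gives $\E[\vertiii{(\nabla L(\bx_s))_{s\le T}}_\cH]\le\sqrt{2HR}$. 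For the second (noise) piece, evaluating the infimum at $\mH = \mH^{**}$ gives $\vertiii{(\bxi_s)_{s\le T}}_\cH^2 \le \sum_{t=1}^T \bxi_t^\top(\mH^{**})^{-1}\bxi_t$ (a sum with no cross terms), and conditioning on $\cF_{t-1}$ with $\E[\bxi_t\bxi_t^\top\mid\cF_{t-1}]\preceq\bm{\Sigma}$ gives $\E[\vertiii{(\bxi_s)_{s\le T}}_\cH^2]\le T\langle\bm{\Sigma},(\mH^{**})^{-1}\rangle\le T\sigma^2$, hence $\E[\vertiii{(\bxi_s)_{s\le T}}_\cH]\le\sqrt{T}\,\sigma$. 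Together, $\E[\vertiii{\vg_{1:T}}_\cH]\le\sqrt{2HR}+\sqrt{T}\,\sigma$.

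\textbf{Closing the loop.} Substituting into the first-paragraph inequality gives a self-bounding inequality of the form $R\le a\,C\sqrt{HR}+bC(\sqrt{T}\,\sigma+d\sqrt{\epsilon})$ for absolute constants $a,b$, which is quadratic in $\sqrt{R}$; resolving it and dividing by $T$ gives $\E[L(\bar\bx_{1:T})-L(\vx^*)]= O\big(\tfrac{C^2H}{T}+\tfrac{C\sigma}{\sqrt{T}}+\tfrac{Cd\sqrt{\epsilon}}{T}\big)$, which is the claimed shape. Pinning down the exact constants $16,4\sqrt{2},4\sqrt{2}$ requires replacing the crude triangle-inequality split above by evaluating the infimum in \eqref{eq:def_adaptive_gradient_norm} at a single tuned convex combination $\lambda\,\mH^*/H+(1-\lambda)\mH^{**}$ and optimizing over $\lambda$ (jointly with $\eta$), together with a slightly sharper resolution of the quadratic.

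\textbf{Main obstacle.} I expect the main technical friction to be in the second paragraph, in the handling of degenerate (merely positive semidefinite) preconditioners: $\mH^*/H$ and $\mH^{**}$ need not be invertible, so their ``inverses'' must be read as pseudo-inverses after verifying that $\nabla L(\bx_t)\in\mathrm{range}(\mH^*)$ (which follows from $0\preceq\nabla^2 L\preceq\mH^*$ and $\nabla L(\vx^*)=0$ by integrating the Hessian along the segment $[\vx^*,\bx_t]$) and that $\bxi_t\in\mathrm{range}(\mH^{**})$; alternatively one perturbs $\mH^*\rightsquigarrow\mH^*+\delta\bI_d$ (which lies in $\cH$ since $\bI_d\in\cK$ and $\cK$ is closed under addition and scaling) and $\mH^{**}$ similarly, and lets $\delta\to 0$, and also takes $\mH^{**}$ slightly suboptimal when the infimum defining $\sigma$ is not attained. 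The one nontrivial lemma needed is the matrix co-coercivity bound; the measurability bookkeeping in the first paragraph, the Jensen steps, and the quadratic resolution are all routine.
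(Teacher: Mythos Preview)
Your proposal is correct and matches the paper's proof almost exactly: the paper uses the same online-to-batch reduction via \Cref{cor:regret_bound}, the same descent-lemma inequality $L(\vx^*)\le L(\vx)-\tfrac12\nabla L(\vx)^\top(\mH^*)^{-1}\nabla L(\vx)$, the same noise bound via \Cref{ass:noise}, and the same quadratic self-bounding step. The only cosmetic difference is that instead of invoking the triangle inequality for $\vertiii{\cdot}_\cH$, the paper evaluates the infimum at the convex combination $\alpha\mH_\vg+(1-\alpha)\mH_{\bm\Sigma}$ and optimizes $\alpha$, which yields exactly $\sqrt{a}+\sqrt{b}$ --- so your triangle-inequality split is not ``crude'' at all and already delivers the sharp constants $16,\,4\sqrt2,\,4\sqrt2$ without any further tuning.
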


\subsection{Examples of well-structured preconditioner sets}\label{sec:examples}
Next, we demonstrate that our \Cref{def:good_preconditioner} is general enough to cover existing examples, by discussing several important matrix subalgebra $\cK$.
For each associated well-structured $\cH=\cK\cap\cS_+^d$, recall the optimization problem over $\cH$ defined in \eqref{eq:optimization_preconditioner}.
We will show that every minimizer $P_\cH(\bM)$ corresponds to the preconditioner used in a specific adaptive optimization algorithm. We list the correspondence relationship below and the results are summarized in \Cref{tab:regret}. The detailed derivation and calculation of the norms $\norm{\vx}_\cH$ and $\vertiii{\vg_{1:t}}_\cH$ can be found in \Cref{sec:calculation_example}. 

\begin{example}[AdaGrad-Norm: scalar matrices]\label{eg:H_adagradnorm}
For the scalar matrix subalgebra $\cK=\{c\cdot\bI_d\mid c\in\RR\}$, we have $\gH = \{c \cdot \mI_d \mid c \geq 0 \}$.
Then solving \eqref{eq:optimization_preconditioner} for $\bM_t$ yields
    \begin{align*}
        \bH_t = \frac{1}{\eta} \sqrt{\Tr (\mM_t)/d}\cdot \mI_d = \frac{1}{\eta} \sqrt{\epsilon + \sum_{s=1}^t \norm{\vg_s}_2^2/d} \cdot\bI_d.
    \end{align*}
This is the preconditioner used in AdaGrad-Norm.
\end{example}

\begin{example}[Diagonal AdaGrad: diagonal matrices]\label{eg:H_diagonal_adagrad}
For the diagonal matrix subalgebra $\cK=\cD^d$, we have $\gH = \cD^d_+$. Correspondingly,
    \begin{align*}
        \bH_t 
        = \frac{1}{\eta} \diag \left(\sqrt{ \mM_{t,ii}}:i\in[d]\right)
        &= \frac{1}{\eta} \diag\left(\sqrt{\epsilon + \sum_{s=1}^t g_{s,i}^2}:i\in[d]\right).
    \end{align*}
This is the preconditioner used in diagonal AdaGrad. 
\end{example}

\begin{example}[Full-matrix AdaGrad: all matrices]\label{eg:H_adagrad}
For $\cK=\cM^d$, we have $\gH = \gS_+^d$.
In this case, solving \eqref{eq:optimization_preconditioner} for $\bM_t$ yields that $\bH_t = \frac{1}{\eta} \mM_t^{\frac{1}{2}}$, which corresponds to the update rule of full-matrix AdaGrad.
\end{example}

\begin{example}[One-sided Shampoo: factored matrices]\label{eg:H_one_sided_shampoo}
Let $d$ be factored as $d=d_Ld_R$.
Then for the factored matrix algebra $\cK= \RR^{d_L\times d_L} \otimes\bI_{d_R}$, we have $\gH = \cS_+^{d_L}\otimes \mI_{d_R}$.
Now writing $\bG_t\in\RR^{d_L\times d_R}$ as the matricized version of $\bg_t\in\RR^d$, solving the corresponding optimization problem in \eqref{eq:optimization_preconditioner} leads to
    \begin{align*}
        \bH_t = \frac{1}{\eta} \left(\epsilon \cdot\mI_{d_L}+\frac{1}{d_R}\sum_{s=1}^t \mG_s \mG_s^\top \right)^{\frac{1}{2}} \otimes \mI_{d_R},
    \end{align*}
which updates $\vx_t$ the same as one-sided Shampoo\footnote{We add a normalized factor $\frac{1}{d_R}$ compared to the $\mL_t$ in \Cref{alg:two_sided_shampoo} so that it can be exactly derived from \Cref{alg:general_adaptive_optimizer}.} displayed in \Cref{alg:one_sided_shampoo}, where we write the algorithm in the matrix form for convenience.
More specifically, note that $\vg_t=\vec(\mG_t)$ and $\mH_t = \mL_t^\frac{1}{2} \otimes \mI_{d_R}$. 
Moreover, $\bM_t$ corresponds to $\bL_t$ by the fact that $\langle\bM_t,(\bH_L\otimes\bI_{d_R})^{-1}\rangle=\langle\bL_t,\bH_L^{-1}\rangle$ for any $\bH_L\in\cS_{++}^{d_L}$.

\end{example}

The detailed calculations of the norms $\|\cdot\|_\cH$ and $\vertiii{\cdot}_\cH$ for the above examples can be found in \Cref{sec:calculation_example}.

\subsubsection{Generate new well-structured preconditioner sets}
Beyond the previous examples, it is also possible to generate new well-structured preconditioner sets based on existing ones.
We discuss in particular an example of layerwise combination for parameters in a neural network.
Specifically, for $d$ parameters of an $N$-layer neural network, decompose $\RR^d=\RR^{d_1}\times\cdots\times\RR^{d_N}$ where $d=\sum_{n=1}^N d_n$, and each $\RR^{d_n}$ corresponds to the $d_n$ parameters in the $n$-th layer.
For each $n\in[N]$, let $\cK_n\subseteq\cM^{d_n}$ be a matrix subalgebra. 
Then we define $\cK=\oplus_{n=1}^N\cK_{n}=\{\oplus_{n=1}^N \bA_n\mid \bA_n\in\cK_n,n\in[N]\}$, and it is easy to verify that $\cK$ is also a subalgebra.
Then the corresponding well-structured preconditioner set $\cH=\oplus_{n=1}^N(\cK_{n}\cap\cS_+^{d_n}) = \oplus_{n=1}^N\cH_n$ contains preconditioners that apply individual types of transforms to gradient of parameters in different layers\footnote{This can be applied to any partition of the parameters, not only for partition based on layers.}. Such $\cH$ made up by direct sum of smaller cones also has very compositional property in its induced complexity metrics. Suppose $\vx=[\vx^1, \cdots, \vx^N]^\top \in \mathbb{R}^d$ and $\vg_{1:t}=[\vg_{1:t}^1, \cdots, \vg_{1:t}^N]^\top\in \mathbb{R}^{d\times t}$ where $\vx^n \in \mathbb{R}^{d_n}$ and $\vg_{1:t}^n \in \mathbb{R}^{d_n\times t} $. Then $\norm{\vx}_\cH=\max_{1\leq n\leq N} \norm{\vx^n}_{\cH_n}$ and $\vertiii{\vg_{1:t}}_\cH=\sum_{n=1}^N \vertiii{\vg_{1:t}^n}_{\cH_n}$. 
This operation provides a useful tool for designing layerwise preconditioning methods~\citep{bernstein2024modular}.

Other possible operations that can generate new matrix subalgebra $\cK'$ from the original subalgebra $\cK$ include taking Kronecker product with the identity matrix, i.e. $\cK'=\{\bA'=\bA\otimes \bI_{d'}\mid \bA\in\cK\}$, and rotation by an orthogonal matrix, i.e. $\cK'=\{\bA'=\bU^\top\bA\bU\mid \bA\in\cK\}$ where $\bU$ is an orthogonal matrix.

\subsection{Analysis for EMA adaptive optimization}\label{sec:ema_optimization}
Our analysis naturally extends to \Cref{alg:general_adaptive_ema_optimizer}, which replaces the direct sum of past gradient outer products in \Cref{alg:general_adaptive_optimizer} with an exponential moving average (EMA). This modification is widely used in adaptive optimizers, including Adam~\citep{kingma2014adam} and AdaSGD~\citep{wang2020adasgd}, which is an ema version of AdaGrad. 

To simplify the theoretical analysis, we instead focus on \Cref{alg:general_adaptive_weighted_optimizer} in which the current gradient is not scaled by $1-\beta_2$ when computing $\mM_t$. It serves as a bridge between \Cref{alg:general_adaptive_optimizer,alg:general_adaptive_ema_optimizer}. When $\beta_2=1$, \Cref{alg:general_adaptive_weighted_optimizer} exactly recovers \Cref{alg:general_adaptive_optimizer}. When the learning rate $\tilde{\eta}$ is rescaled by $\sqrt{1-\beta_2}$ and $\epsilon$ is rescaled by $1/(1-\beta_2)$, \Cref{alg:general_adaptive_weighted_optimizer} is equivalent to \Cref{alg:general_adaptive_ema_optimizer}. \Cref{thm:online_ema} extends \Cref{thm:main_regret_bound} to get online regret bound for \Cref{alg:general_adaptive_weighted_optimizer}. The proof is in \Cref{sec:ema_proof}. 
\begin{figure}[t]
    \centering
    \begin{minipage}{0.48\textwidth}
    \vspace{-12pt}
\begin{algorithm}[H]
    \caption{EMA Adaptive Regularization Meta-Algorithm} \label{alg:general_adaptive_ema_optimizer}
    \begin{algorithmic} 
        \STATE {\bfseries Hyperparam:} $\epsilon> 0$,  convex set $\gX \subseteq \mathbb{R}^d$, learning rate $\eta$, preconditioners $\mathcal{H} \subset \mathcal{S}_+^d $, $\beta_2 \in (0, 1)$
        \STATE {\bfseries Input:} initial $\vx_1$, loss functions $\{L_t\}_{t=1}^T:\!\mathbb{R}^{d}\!\to\!\mathbb{R}$
        \STATE $\mM_0\gets \epsilon \bI_d$
        \FOR{$t=1, 2, \ldots, T$} 
            \STATE $\vg_{t} \gets \nabla L_t(\vx_{t})$ 
            \STATE $\mM_t \gets  \beta_2 \mM_{t-1} + (1-\beta_2)\vg_t \vg_t^\top$
            \STATE $\mH_t \gets \argmin_{\mH \in \mathcal{H}} \inner{\mM_t}{\mH^{-1}} + \eta^2 \Tr(\mH)$
            \STATE $\vx_{t+1} \gets \Pi_{\mathcal{X}}^{\mH_t} \left(\vx_{t} -\mH_t^{-1}\vg_t \right)$ 
            \ENDFOR
        \STATE {\bfseries Return} $\vx_1, \ldots,\vx_T$ 
    \end{algorithmic}
\end{algorithm}
    \end{minipage}
    \hfill
    \begin{minipage}{0.48\textwidth}
\begin{algorithm}[H]
    \caption{Weighted Adaptive Regularization Meta-Algorithm} \label{alg:general_adaptive_weighted_optimizer}
    \begin{algorithmic} 
        \STATE {\bfseries Hyperparam:} $\epsilon> 0$,  convex set $\gX \subseteq \mathbb{R}^d$, learning rate $\tilde \eta$, preconditioners $\mathcal{H} \subset \mathcal{S}_+^d $, $\beta_2 \in (0, 1)$
        \STATE {\bfseries Input:} initial $\vx_1$, loss functions $\{\tilde L_t\}_{t=1}^T:\!\mathbb{R}^{d}\!\to\!\mathbb{R}$
        \STATE $\tilde \mM_0\gets \epsilon \bI_d$
        \FOR{$t=1, 2, \ldots, T$} 
            \STATE $\tilde \vg_{t} \gets \nabla \tilde L_t(\tilde \vx_{t})$ 
            \STATE $\tilde \mM_t \gets  \beta_2 \tilde \mM_{t-1} + \tilde \vg_t \tilde \vg_t^\top$
            \STATE $\tilde \mH_t \gets \argmin_{\tilde \mH \in \mathcal{H}} \inner{\mM_t}{\tilde \mH^{-1}} + \tilde \eta^2 \Tr(\tilde \mH)$
            \STATE $\tilde \vx_{t+1} \gets \Pi_{\mathcal{X}}^{\tilde \mH_t} \left(\tilde \vx_{t} -\tilde \mH_t^{-1}\tilde \vg_t \right)$ 
            \ENDFOR
        \STATE {\bfseries Return} $\tilde \vx_1, \ldots, \tilde \vx_T$ 
    \end{algorithmic}
\end{algorithm}
    \end{minipage}
\end{figure}

\begin{restatable}{theorem}{onlineema}\label{thm:online_ema}
    Let $\cH$ be a well-structured preconditioner set under \Cref{def:good_preconditioner}.
Then for any convex loss functions $\Tilde{L}_1,\ldots,\Tilde{L}_T$, the regret of \Cref{alg:general_adaptive_weighted_optimizer} compared to any $\bx^*\in\cX$ can be bounded as
\begin{align*}
 \sum_{t=1}^T \beta_2^{\frac{T-t}{2}}\left[ \Tilde{L}_t(\vx_t) - \Tilde{L}_t(\vx^*)\right] 
    \leq \left(\frac{D^2}{2 \tilde \eta} + \tilde{\eta}\right) \inf_{\mH \in \cH, \Tr(\mH) \leq 1} \sqrt{\inner{\tilde \mM_T}{\mH^{-1}}}
\end{align*}
where $D = \max_{t \in [T]} \norm{\vx_t - \vx^*}_\gH$.  
\end{restatable}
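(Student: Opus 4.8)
The plan is to reduce \Cref{alg:general_adaptive_weighted_optimizer} to the unweighted \Cref{alg:general_adaptive_optimizer} by a time-dependent rescaling, after which the bound follows by retracing the proof of \Cref{thm:main_regret_bound}. First I would introduce $\hat\bg_t := \beta_2^{-t/2}\tilde\bg_t$, $\hat\bM_t := \beta_2^{-t}\tilde\bM_t$ (so $\hat\bM_0=\tilde\bM_0=\epsilon\bI_d$), and $\hat\bH_t := \beta_2^{-t/2}\tilde\bH_t$. The recursion $\tilde\bM_t=\beta_2\tilde\bM_{t-1}+\tilde\bg_t\tilde\bg_t^\top$ turns into the unweighted recursion $\hat\bM_t=\hat\bM_{t-1}+\hat\bg_t\hat\bg_t^\top$, so in particular $\hat\bM_{t-1}\preceq\hat\bM_t$ and $\hat\bM_t\succ0$. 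Since $\cH$ is a cone and $\tilde\bM_t=\beta_2^t\hat\bM_t$, substituting $\bH=\beta_2^{t/2}\hat\bH$ in the optimization defining $\tilde\bH_t$ shows $\hat\bH_t=\argmin_{\bH\in\cH}\inner{\hat\bM_t}{\bH^{-1}}+\tilde\eta^2\Tr(\bH)=P_\cH(\hat\bM_t)$ at learning rate $\tilde\eta$; and since $\Pi_\cX^{\bH}$ is invariant under positive rescaling of $\bH$ while $\tilde\bH_t^{-1}\tilde\bg_t=\hat\bH_t^{-1}\hat\bg_t$, the update reads $\vx_{t+1}=\Pi_\cX^{\hat\bH_t}(\vx_t-\hat\bH_t^{-1}\hat\bg_t)$. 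Equivalently, the iterates $\vx_t$ are precisely those produced by \Cref{alg:general_adaptive_optimizer} at learning rate $\tilde\eta$ on the reweighted losses $\hat L_t:=\beta_2^{-t/2}\tilde L_t$, whose gradients at $\vx_t$ are $\hat\bg_t$.

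With this in place I would replay the argument behind \Cref{thm:main_regret_bound} on the hatted sequence. The online-mirror-descent lemma gives $\inner{\hat\bg_t}{\vx_t-\vx^*}\le\tfrac12\big(\norm{\hat\bg_t}_{\hat\bH_t^{-1}}^2+\norm{\vx_t-\vx^*}_{\hat\bH_t}^2-\norm{\vx_{t+1}-\vx^*}_{\hat\bH_t}^2\big)$; summing over $t$, the gradient terms are controlled by FTL--BTL together with $\hat\bH_t=P_\cH(\hat\bM_t)$, giving $\sum_t\norm{\hat\bg_t}_{\hat\bH_t^{-1}}^2\le\inner{\hat\bM_T}{\hat\bH_T^{-1}}+\tilde\eta^2\Tr(\hat\bH_T)=2\tilde\eta^2\Tr(\hat\bH_T)$ by \Cref{prop:well_structured_preconditioner}(a), while the domain terms telescope exactly as in \eqref{eq:regret_bound_second_term}--\eqref{eq:final_bound_second_term}: \Cref{prop:well_structured_preconditioner} applies since $\hat\bM_{t-1}\preceq\hat\bM_t$, so $\hat\bH_{t-1}\preceq\hat\bH_t$ and $\hat\bH_t-\hat\bH_{t-1}\in\cH$, and extracting the factor $\Tr(\hat\bH_t-\hat\bH_{t-1})$ and bounding the rest by $D^2:=\max_{t\in[T]}\norm{\vx_t-\vx^*}_\gH^2$ yields $\sum_t(\norm{\vx_t-\vx^*}_{\hat\bH_t}^2-\norm{\vx_{t+1}-\vx^*}_{\hat\bH_t}^2)\le D^2\Tr(\hat\bH_T)$. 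Hence $\sum_t\inner{\hat\bg_t}{\vx_t-\vx^*}\le(\tilde\eta^2+\tfrac{D^2}{2})\Tr(\hat\bH_T)$. Finally I would undo the rescaling: by convexity $\inner{\hat\bg_t}{\vx_t-\vx^*}=\beta_2^{-t/2}\inner{\tilde\bg_t}{\vx_t-\vx^*}\ge\beta_2^{-t/2}(\tilde L_t(\vx_t)-\tilde L_t(\vx^*))$ and $\Tr(\hat\bH_T)=\beta_2^{-T/2}\Tr(\tilde\bH_T)$, so multiplying the last inequality by $\beta_2^{T/2}$ gives $\sum_t\beta_2^{(T-t)/2}(\tilde L_t(\vx_t)-\tilde L_t(\vx^*))\le(\tilde\eta^2+\tfrac{D^2}{2})\Tr(\tilde\bH_T)$; and \Cref{prop:well_structured_preconditioner}(a,b) give $\inf_{\bH\in\cH,\Tr(\bH)\le1}\inner{\tilde\bM_T}{\bH^{-1}}=\Tr(\tilde\bH_T)\inner{\tilde\bM_T}{\tilde\bH_T^{-1}}=\tilde\eta^2\Tr(\tilde\bH_T)^2$, i.e. $\Tr(\tilde\bH_T)=\tilde\eta^{-1}\inf_{\bH\in\cH,\Tr(\bH)\le1}\sqrt{\inner{\tilde\bM_T}{\bH^{-1}}}$, converting the right-hand side into the claimed $\big(\tilde\eta+\tfrac{D^2}{2\tilde\eta}\big)\inf_{\bH\in\cH,\Tr(\bH)\le1}\sqrt{\inner{\tilde\bM_T}{\bH^{-1}}}$.

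I expect the one genuinely delicate point to be the reduction in the first step: one must verify that a single rescaling is simultaneously compatible with all three operations of the meta-algorithm — the quadratic-potential minimization defining $\bH_t$ (which works only because $\cH$ is a cone, letting the scalar $\beta_2^{t/2}$ factor out), the preconditioned projection (which needs $\Pi_\cX^{\bH}$ to be scale-invariant in $\bH$), and, crucially, the preservation of the semidefinite monotonicity $\hat\bM_{t-1}\preceq\hat\bM_t$, which is exactly what keeps \Cref{prop:well_structured_preconditioner} in force so the telescoping of the domain terms survives. Once this reduction is nailed down, everything else is a mechanical rerun of the proof of \Cref{thm:main_regret_bound} with careful bookkeeping of the powers of $\beta_2$.
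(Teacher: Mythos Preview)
Your proposal is correct and follows essentially the same route as the paper: the paper also reduces \Cref{alg:general_adaptive_weighted_optimizer} to \Cref{alg:general_adaptive_optimizer} via the same time-dependent rescaling $L_t=\beta_2^{-t/2}\tilde L_t$, verifies $\tilde\bM_t=\beta_2^t\bM_t$, $\tilde\bH_t=\beta_2^{t/2}\bH_t$, $\tilde\bH_t^{-1}\tilde\bg_t=\bH_t^{-1}\bg_t$, and then invokes (the intermediate form of) \Cref{thm:main_regret_bound} before multiplying through by $\beta_2^{T/2}$. The only cosmetic difference is that the paper cites the intermediate inequality from the proof of \Cref{thm:main_regret_bound} directly, whereas you replay that argument on the hatted sequence; the substance is the same.
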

\Cref{thm:main_weighted_convex_smooth} provides the convergence rate for \Cref{alg:general_adaptive_weighted_optimizer}, which generalizes \Cref{thm:main_convex_smooth}. The proof is in \Cref{sec:ema_proof}. 
\begin{restatable}{theorem}{smoothema}\label{thm:main_weighted_convex_smooth}
Let $\cH$ be a well-structured preconditioner set under \Cref{def:good_preconditioner}.
Consider any independent stochastic convex loss functions $\tilde L_1,\ldots, \tilde L_T$ satisfying \Cref{ass:noise}, and let $H(\tilde L,\cH)$ be the $\gH$-smoothness of their expectation $\tilde L$. Suppose the global minimizer of $L$, denoted by $\vx^*$, is in $\cX$. 
Then for the iterates $\bx_1,\ldots,\bx_T$ of \Cref{alg:general_adaptive_weighted_optimizer} with learning rate $\eta=\sqrt{2}\norm{\cX}_\cH$, denoting $\bar\bx_{1:T}=(\sum_{t=1}^T \beta_2^{\frac{T-t}{2}})^{-1}\sum_{t=1}^T \beta_2^\frac{T-t}{2}\bx_t$, it holds that
\begin{align*}
 \E  \Tilde{L}(\bar{\vx}_{1:T}) - \Tilde{L}_t(\vx^*) \leq  \frac{16}{\sum_{t=1}^T \beta_2^{\frac{T-t}{2}}} \norm{\cX}_\cH^2 H(\tilde L, \cH)  + \frac{4\sqrt{2}}{\sqrt{\sum_{t=1}^T \beta_2^{T-t}}} \norm{\cX}_\cH \sigma + \frac{4\sqrt{2}}{\sum_{t=1}^T \beta_2^{\frac{-t}{2}}} \norm{\cX}_\cH d\sqrt{\epsilon}
\end{align*}
where $\sigma=\inf_{\mH \in \gH, \Tr(\mH) \leq 1} \sqrt{\inner{\bm{\Sigma}}{\mH^{-1}}}$.
\end{restatable}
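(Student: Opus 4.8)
The plan is to run the standard offline-to-online reduction \citep{levy2018online}, but starting from the \emph{weighted} online regret bound \Cref{thm:online_ema} in place of \Cref{thm:main_regret_bound}, and then to turn the gradient term it produces into a self-bounding quantity via $\cH$-smoothness. Throughout, write $w_t := \beta_2^{(T-t)/2}$, $W := \sum_{t=1}^T w_t$, let $\xi_t := \nabla\tilde L_t(\bx_t) - \nabla\tilde L(\bx_t)$ be the stochastic noise (so $\tilde\vg_t = \nabla\tilde L(\bx_t) + \xi_t$), and set $R := \sum_{t=1}^T w_t\big(\tilde L(\bx_t) - \tilde L(\vx^*)\big)$, which is nonnegative since $\vx^*$ is the global minimizer of $\tilde L$. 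Because the iterates are projected onto $\cX$ and $\vx^*\in\cX$, the triangle inequality for the seminorm $\norm{\cdot}_\cH$ gives $D = \max_t\norm{\bx_t-\vx^*}_\cH\le 2\norm{\cX}_\cH$; writing $C := \tfrac{2\norm{\cX}_\cH^2}{\tilde\eta}+\tilde\eta$, \Cref{thm:online_ema} then yields the pathwise bound $\sum_t w_t\big(\tilde L_t(\bx_t)-\tilde L_t(\vx^*)\big) \le C\,\inf_{\mH\in\cH,\Tr(\mH)\le1}\sqrt{\inner{\tilde\mM_T}{\mH^{-1}}}$.

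Next I would decompose $\tilde\mM_T = \beta_2^T\epsilon\,\bI_d + \sum_t w_t^2\,\tilde\vg_t\tilde\vg_t^\top$ and use that, by \Cref{lem:dual_norm_main}, $(\vg_1,\dots,\vg_m)\mapsto\inf_{\mH\in\cH,\Tr(\mH)\le1}\sqrt{\inner{\sum_s\vg_s\vg_s^\top}{\mH^{-1}}}$ is a rescaling of a genuine norm, hence subadditive. Splitting accordingly gives three pieces: (i) an $\epsilon$-piece $\beta_2^{T/2}\sqrt\epsilon\cdot\vertiii{(\be_i)_{i=1}^d}_\cH = \beta_2^{T/2}\sqrt\epsilon\,d$, using $\inf_{\mH\in\cH,\Tr(\mH)\le1}\Tr(\mH^{-1}) = d^2$, attained at $\mH=\bI_d/d\in\cH$ (this is where $\bI_d\in\cK$ enters); (ii) a signal-piece $\vertiii{(w_t\nabla\tilde L(\bx_t))_{t=1}^T}_\cH$; (iii) a noise-piece $\vertiii{(w_t\xi_t)_{t=1}^T}_\cH$. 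For (ii) I would take the matrix $\mH^*\in\cH$ witnessing $\cH$-smoothness (\Cref{defi:H_smoothness}), i.e. $\Tr(\mH^*)=H:=H(\tilde L,\cH)$ and $\nabla^2\tilde L\preceq\mH^*$; the smoothness-to-cocoercivity inequality in the $\mH^*$-metric, together with $\nabla\tilde L(\vx^*)=0$, gives $\norm{\nabla\tilde L(\bx_t)}_{(\mH^*)^{\dagger}}^2 \le 2\big(\tilde L(\bx_t)-\tilde L(\vx^*)\big)$ (run on the range of $\mH^*$, with a ridge $\delta\bI_d\to0$ to justify the inverse, exactly as in the proof of \Cref{thm:main_convex_smooth}); plugging $\mH=\mH^*/H$ into the infimum and using $w_t^2\le w_t\le 1$ then yields $\vertiii{(w_t\nabla\tilde L(\bx_t))_{t=1}^T}_\cH^2 \le 2HR$. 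For (iii) I would use the minimizer $\mH_\sigma$ from the definition of $\sigma$: conditioning on $\bx_t$ and invoking \Cref{ass:noise} (with independence of $\tilde L_t$ from $\bx_t$), $\E[\inner{\xi_t\xi_t^\top}{\mH_\sigma^{-1}}\mid\bx_t]\le\inner{\bm\Sigma}{\mH_\sigma^{-1}} = \sigma^2$, so Jensen gives $\E\,\vertiii{(w_t\xi_t)_{t=1}^T}_\cH \le \sigma\big(\sum_t w_t^2\big)^{1/2} = \sigma\big(\sum_t\beta_2^{T-t}\big)^{1/2}$.

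To close, combine the pathwise bound with (i)--(iii) and the elementary step $C\sqrt{2HR}\le\tfrac12 R + HC^2$ to get, pathwise, $\sum_t w_t(\tilde L_t(\bx_t)-\tilde L_t(\vx^*)) \le \tfrac12 R + HC^2 + C\beta_2^{T/2}d\sqrt\epsilon + C\vertiii{(w_t\xi_t)_{t=1}^T}_\cH$. Taking expectations and using that $\bx_t$ is determined by $\{\tilde L_s\}_{s<t}$ with the $\tilde L_s$ independent, so $\E[\sum_t w_t(\tilde L_t(\bx_t)-\tilde L_t(\vx^*))] = \E R$, and rearranging yields $\E R \le 2HC^2 + 2C\beta_2^{T/2}d\sqrt\epsilon + 2C\sigma\big(\sum_t\beta_2^{T-t}\big)^{1/2}$. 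Finally, convexity of $\tilde L$ and Jensen for $\bar\bx_{1:T} = W^{-1}\sum_t w_t\bx_t$ give $\E[\tilde L(\bar\bx_{1:T})-\tilde L(\vx^*)]\le\E R/W$; substituting, choosing $\tilde\eta = \sqrt2\norm{\cX}_\cH$ so $C = 2\sqrt2\norm{\cX}_\cH$, and simplifying with the identity $\sum_t\beta_2^{-t/2} = \beta_2^{-T/2}W$ (for the $\epsilon$-term) and the bound $\sum_t\beta_2^{T-t}\le\sum_t\beta_2^{(T-t)/2}=W$ (for the $\sigma$-term, valid since $\beta_2<1$) produces exactly the three claimed terms.

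The step I expect to be the main obstacle is reconciling the two distinct geometric weightings that appear: $w_t=\beta_2^{(T-t)/2}$ on the loss gaps on the left of \Cref{thm:online_ema}, versus $w_t^2=\beta_2^{T-t}$ on the gradient outer products inside $\tilde\mM_T$. The $\cH$-smoothness self-bounding step only closes because $w_t^2\le w_t$, and one must check that the final rates still collapse, after dividing by $W$, to the clean form in the statement. A secondary technical point, shared with \Cref{thm:main_convex_smooth}, is that the $\cH$-smoothness matrix $\mH^*$ need only be positive semidefinite, so the cocoercivity estimate must be carried out on the range of $\mH^*$ (or via a vanishing ridge $\delta\bI_d$).
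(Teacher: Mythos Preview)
Your proposal is correct and follows essentially the same route as the paper: start from the weighted regret bound (\Cref{thm:online_ema}), split the gradient term in $\tilde\mM_T$ into signal/noise/$\epsilon$ pieces, self-bound the signal piece via $\cH$-smoothness using $w_t^2\le w_t$, bound the noise piece by $\sigma(\sum_t\beta_2^{T-t})^{1/2}$, then take expectations and divide by $W$. The only notable difference is cosmetic: you invoke \Cref{lem:dual_norm_main} to split $\inf_\mH\sqrt{\langle\tilde\mM_T,\mH^{-1}\rangle}$ by the triangle inequality of the underlying dual norm, whereas the paper carries out the equivalent splitting by plugging in the explicit convex combination $\mH'=\alpha\mH_\vg+(1-\alpha)\mH_{\bm\Sigma}$ and optimizing over $\alpha$; and you apply the AM--GM step $C\sqrt{2HR}\le\tfrac12 R+HC^2$ pathwise before taking expectations, while the paper first takes expectations (via Jensen) and then solves the resulting quadratic---both orderings yield the same constants.
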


When $\beta_2=1$, \Cref{thm:main_weighted_convex_smooth} recovers the result in \Cref{thm:main_convex_smooth} and provides a $O(T^{-\frac{1}{2}})$ convergence rate.
When $\beta_2 <1$, the optimality gap is upper bounded by $O\big((1-\beta_2) \norm{\cX}_\cH^2 H(\tilde L, \cH) + \sqrt{1-\beta_2} \norm{\cX}_\cH \sigma \big)$ when $T = \Theta\left(\frac{1}{1-\beta_2} \right)$. 
\section{Improved Convergence Analysis for One-sided Shampoo}
We now turn to one-sided Shampoo~(\Cref{alg:one_sided_shampoo}), a special example of \Cref{alg:general_adaptive_optimizer}. 
In \Cref{sec:compare_standard_shampoo}, we present our main results on its regret bound and convergence rate, and then compare to the previous results for two-sided Shampoo in \Cref{sec:comparison_previous_shampoo}. 
In \Cref{sec:compare_optimizers}, we present a comprehensive comparison between the regret bound of one-sided Shampoo and those of the AdaGrad variants, which suggests the reason why Shampoo can outperform other adaptive algorithms on some real tasks.

\subsection{Our results for one-sided Shampoo}\label{sec:compare_standard_shampoo}
We first characterize the norm $\|\cdot\|_\cH$ for one-sided Shampoo.
\begin{figure}[t]
    \centering
    \begin{minipage}{0.48\textwidth}
\begin{algorithm}[H]
    \caption{One-sided Shampoo}\label{alg:one_sided_shampoo}
    \begin{algorithmic}
        \STATE {\bfseries Hyperparam:} learning rate $\eta>0$, convex set $\gX \subseteq \mathbb{R}^{d_L \times d_R}$, $\mL_0=\epsilon \bI_{d_L}$ for $\epsilon\geq0$
        \STATE {\bfseries Input:} initialization $\vx_1$, stochastic loss functions $\{L_t\}_{t=1}^T:\mathbb{R}^{d_L\times d_R}\to\mathbb{R}$
        \FOR{$t=1, 2, \cdots, T$}
            \STATE $\mG_{t} \gets \nabla L_t(\mX_{t})$
            \STATE $\mL_t \gets  \mL_{t-1} + \frac{1}{d_R}\mG_t \mG_t^\top$
            \STATE $\mX_{t+1} \gets \Pi_\gX^{\mL_t^\frac{1}{2} \otimes \mI_{d_R}} \big(\mX_{t} -\eta_t \mL_t^{-\frac{1}{2}}\mG_t\big)$ 
        \ENDFOR
        \STATE {\bfseries Return} $\vx_T$
    \end{algorithmic}
\end{algorithm}
    \end{minipage}
    \hfill
    \begin{minipage}{0.48\textwidth}
\begin{algorithm}[H]
\caption{Two-sided Shampoo~\citep{gupta2018shampoo}}\label{alg:two_sided_shampoo}
    \begin{algorithmic}
        \STATE {\bfseries Hyperparam:} learning rate $\eta$, convex set $\gX \subseteq \mathbb{R}^{d_L \times d_R}$, $\mL_0=\epsilon \bI_{d_L}$, $\mR_0=\epsilon \bI_{d_R}$ for $\epsilon\geq 0$
        \STATE {\bfseries Input:} initialization $\vx_1$, stochastic loss functions $\{L_t\}_{t=1}^T:\mathbb{R}^{d_L\times d_R}\to\mathbb{R}$
        \FOR{$t=1, 2, \cdots, T$}
            \STATE $\mG_{t} \gets \nabla L_t(\mX_{t})$
            \STATE $\mL_t \gets  \mL_{t-1} + \mG_t \mG_t^\top$
            \STATE $\mR_t \gets \mR_{t-1} + \mG_t^\top \mG_t$
            \STATE $\mX_{t+1} \gets \Pi_\gX^{\mL_t^\frac{1}{4} \otimes \mR_t^\frac{1}{4}} \big(\mX_{t} -\eta_t \mL_t^{-\frac{1}{4}}\mG_t \mR_t^{-\frac{1}{4}} \big)$
        \ENDFOR
        \STATE {\bfseries Return} $\vx_T$
    \end{algorithmic}
\end{algorithm}
    \end{minipage}
\end{figure}
\begin{lemma}[$\|\cdot\|_\cH$ for one-sided Shampoo]\label{lem:shampoo_norm}
Recall from \Cref{eg:H_one_sided_shampoo} that for one-sided Shampoo~(\Cref{alg:one_sided_shampoo}), $\gH = \cS_+^{d_L}\otimes \mI_{d_R}$ where $d_Ld_R=d$. 
Then for any $\bx\in\RR^d$, it holds that $\norm{\vx}_\gH =\frac{1}{\sqrt{d_R}} \norm{\mX}_{\op}$ with $\mX=\vec^{-1}(\vx)$, and thus $\|\cX\|_\cH=\frac{1}{\sqrt{d_R}}\|\cX\|_\op$.
\end{lemma}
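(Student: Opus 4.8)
The plan is to unfold the definition of $\norm{\cdot}_\gH$ in \eqref{eq:H_norm} into a small semidefinite program over the factor matrix, and then solve that program explicitly. Writing a generic element of $\gH=\cS_+^{d_L}\otimes\mI_{d_R}$ as $\mH=\mA\otimes\mI_{d_R}$ with $\mA\in\cS_+^{d_L}$, the first step is to rewrite the two ingredients of $\norm{\vx}_\mH^2=\vx^\top\mH\vx$ under the constraint $\Tr(\mH)\le 1$ purely in terms of $\mA$ and $\mX=\vec^{-1}(\vx)$. For the trace, $\Tr(\mA\otimes\mI_{d_R})=d_R\Tr(\mA)$, so the constraint $\Tr(\mH)\le 1$ is equivalent to $\Tr(\mA)\le 1/d_R$. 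For the quadratic form, the standard Kronecker--vec identity gives $\vx^\top(\mA\otimes\mI_{d_R})\vx=\langle\mA,\mX\mX^\top\rangle$, which I would state and use (it is exactly the identity already invoked in \Cref{eg:H_one_sided_shampoo}). Hence
\begin{align*}
    \norm{\vx}_\gH^2=\sup_{\mA\in\cS_+^{d_L},\,\Tr(\mA)\le 1/d_R}\langle\mA,\mX\mX^\top\rangle.
\end{align*}

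The second step is to evaluate this supremum. Since $\mX\mX^\top\succeq 0$, for every feasible $\mA$ we have $\langle\mA,\mX\mX^\top\rangle\le\lambda_{\max}(\mX\mX^\top)\,\Tr(\mA)\le\frac{1}{d_R}\lambda_{\max}(\mX\mX^\top)=\frac{1}{d_R}\norm{\mX}_\op^2$, which gives the upper bound. For the matching lower bound, let $\vu\in\RR^{d_L}$ be a unit-norm top left singular vector of $\mX$ and take $\mA=\frac{1}{d_R}\vu\vu^\top$; this is feasible, and $\langle\mA,\mX\mX^\top\rangle=\frac{1}{d_R}\vu^\top\mX\mX^\top\vu=\frac{1}{d_R}\sigma_{\max}(\mX)^2=\frac{1}{d_R}\norm{\mX}_\op^2$. (Equivalently, one can note the objective is linear in $\mA$ over the spectrahedron $\{\mA\succeq 0,\Tr(\mA)\le 1/d_R\}$, so its maximum is attained at a rank-one extreme point.) Taking square roots yields $\norm{\vx}_\gH=\frac{1}{\sqrt{d_R}}\norm{\mX}_\op$.

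Finally, $\norm{\gX}_\gH=\sup_{\vx\in\gX}\norm{\vx}_\gH=\frac{1}{\sqrt{d_R}}\sup_{\mX\in\gX}\norm{\mX}_\op=\frac{1}{\sqrt{d_R}}\norm{\gX}_\op$ by definition of $\norm{\gX}_\gH$ and $\norm{\gX}_\op$. I do not expect a genuine obstacle here; the only points requiring care are fixing the vectorization convention so that the Kronecker--vec identity comes out as $\vx^\top(\mA\otimes\mI_{d_R})\vx=\langle\mA,\mX\mX^\top\rangle$ (consistent with how $\cK=\RR^{d_L\times d_L}\otimes\mI_{d_R}$ acts as a left preconditioner), and confirming that the supremum over the spectrahedron is attained so that the bound is tight rather than merely an inequality.
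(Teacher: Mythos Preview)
Your proposal is correct and follows essentially the same route as the paper's proof in \Cref{sec:appendix_one_sided_shampoo}: rewrite $\mH=\mH_L\otimes\mI_{d_R}$, convert the trace constraint to $\Tr(\mH_L)\le 1/d_R$, apply the Kronecker--vec identity to reduce $\vx^\top\mH\vx$ to $\langle\mX\mX^\top,\mH_L\rangle$, and then evaluate the supremum with equality at $\mH_L=\frac{1}{d_R}\vu\vu^\top$ for $\vu$ the leading eigenvector of $\mX\mX^\top$. Your version is slightly more explicit about the upper bound step and the extreme-point interpretation, but the argument is the same.
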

See \Cref{sec:appendix_one_sided_shampoo} for the proof.

With this, we can apply \Cref{thm:main_regret_bound} to get the regret bound for one-sided Shampoo, as summarized below in \Cref{thm:one_sided_shampoo_regret_bound}.
See \Cref{sec:proof_shampoo_regret_bound} for its proof.
\begin{restatable}[Regret bound for one-sided Shampoo]{theorem}{shampooregret}\label{thm:one_sided_shampoo_regret_bound}
For convex loss functions $L_1,\ldots,L_T$, the regret of one-sided Shampoo (\Cref{alg:one_sided_shampoo}) compared to any \( \mX^* \in \mathbb{R}^{d_L \times d_R} \) satisfies
\begin{align*}
\sum_{t=1}^T L_t(\mX_t) - \sum_{t=1}^T L_t(\mX^*) \leq \bigg(\frac{D_\op^2}{2d_R\eta} + \eta\bigg) \left(G+ d \sqrt{\epsilon}\right),
\end{align*}
where $D_\op=\max_{t\in[T]} \|\bX_t-\bX^*\|_\op$ and $G=\sqrt{d_R}\Tr\left[\left(\sum_{t=1}^T \mG_t \mG_t^\top \right)^{\frac{1}{2}} \right]$.
When the domain $\cX$ is bounded in operator norm, i.e., 
$\|\cX\|_\op<\infty$, further choosing \( \eta=\sqrt{2/d_R}\|\cX\|_\op \), 
\begin{align*}
\sum_{t=1}^T L_t(\mX_t) - \sum_{t=1}^T L_t(\mX^*) \leq 2\sqrt{2} \|\cX\|_\op \left(\Tr\left[\left(\sum_{t=1}^T \mG_t \mG_t^\top \right)^{\frac{1}{2}} \right] +\frac{d}{\sqrt{d_R}} \sqrt{\epsilon} \right).
\end{align*}
\end{restatable}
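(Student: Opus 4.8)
The plan is to obtain both inequalities as specializations of the master regret bound \Cref{thm:main_regret_bound}. By \Cref{eg:H_one_sided_shampoo}, one-sided Shampoo (\Cref{alg:one_sided_shampoo}) is exactly \Cref{alg:general_adaptive_optimizer} run with the preconditioner set $\gH=\cS_+^{d_L}\otimes\mI_{d_R}$, and this $\gH$ is well-structured in the sense of \Cref{def:good_preconditioner}: the subalgebra $\cK=\RR^{d_L\times d_L}\otimes\mI_{d_R}$ contains $\mI_d=\mI_{d_L}\otimes\mI_{d_R}$ and is closed under scalar multiplication, addition, and multiplication, the last because $(\mA\otimes\mI_{d_R})(\mB\otimes\mI_{d_R})=(\mA\mB)\otimes\mI_{d_R}$. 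Hence \Cref{thm:main_regret_bound} gives $\sum_{t=1}^T L_t(\mX_t)-\sum_{t=1}^T L_t(\mX^*)\le\bigl(\tfrac{D^2}{2\eta}+\eta\bigr)\bigl(G'+d\sqrt\epsilon\bigr)$ with $D=\max_{t\in[T]}\norm{\vx_t-\vx^*}_\gH$ and $G'=\vertiii{\vg_{1:T}}_\gH$ (for the unconstrained first statement one reads $\cX=\RR^{d_L\times d_R}$, and for the second $\mX^*\in\cX$ is assumed), so it only remains to rewrite $D$ and $G'$ in matrix notation.

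For $D$, \Cref{lem:shampoo_norm} gives $\norm{\vx}_\gH=\tfrac{1}{\sqrt{d_R}}\norm{\mX}_\op$, hence $D=\tfrac{1}{\sqrt{d_R}}D_\op$ and $\tfrac{D^2}{2\eta}=\tfrac{D_\op^2}{2d_R\eta}$. For $G'$, parametrize $\mH\in\gH$ as $\mH=\mH_L\otimes\mI_{d_R}$ with $\mH_L\succeq0$, so that $\Tr(\mH)=d_R\Tr(\mH_L)$, and use the identity $\vg_s^\top(\mH_L\otimes\mI_{d_R})^{-1}\vg_s=\langle\mG_s\mG_s^\top,\mH_L^{-1}\rangle$, which follows from $\vg_s=\vec(\mG_s)$ and the mixed-product property of $\otimes$. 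Rescaling $\mK=d_R\mH_L$ (so $\Tr(\mK)\le 1\iff\Tr(\mH)\le1$ and $\mH_L^{-1}=d_R\mK^{-1}$) then yields
\[
\vertiii{\vg_{1:T}}_\gH=\sqrt{d_R}\,\inf_{\mK\succeq0,\ \Tr(\mK)\le1}\sqrt{\Bigl\langle\sum_{t=1}^{T}\mG_t\mG_t^\top,\ \mK^{-1}\Bigr\rangle}.
\]
The inner minimization is precisely the full-matrix-AdaGrad calculation from \Cref{eg:H_adagrad}: for any $\mA\succeq0$ one has $\inf_{\Tr(\mK)\le1}\langle\mA,\mK^{-1}\rangle=(\Tr(\mA^{1/2}))^2$, attained at $\mK=\mA^{1/2}/\Tr(\mA^{1/2})$. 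Therefore $G'=\sqrt{d_R}\,\Tr\bigl[(\sum_{t=1}^{T}\mG_t\mG_t^\top)^{1/2}\bigr]=G$, consistent with the one-sided-Shampoo row of \Cref{tab:regret}. Substituting these expressions for $D$ and $G'$ into the bound above proves the first inequality.

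For the second inequality I would additionally use that $\mX_t\in\cX$ by the projection step and $\mX^*\in\cX$, so the triangle inequality for the operator norm (which is a genuine norm) gives $D_\op\le2\norm{\cX}_\op$. Choosing $\eta=\sqrt{2/d_R}\,\norm{\cX}_\op$ makes $\tfrac{D_\op^2}{2d_R\eta}\le\tfrac{2\norm{\cX}_\op^2}{d_R\eta}=\sqrt{2/d_R}\,\norm{\cX}_\op=\eta$, so $\tfrac{D_\op^2}{2d_R\eta}+\eta\le2\sqrt{2/d_R}\,\norm{\cX}_\op$; multiplying by $G+d\sqrt\epsilon$ and using $G/\sqrt{d_R}=\Tr[(\sum_{t=1}^{T}\mG_t\mG_t^\top)^{1/2}]$ gives the claimed bound. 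Equivalently, one can invoke \Cref{cor:regret_bound} directly, since $\norm{\cX}_\gH=\tfrac{1}{\sqrt{d_R}}\norm{\cX}_\op$ by \Cref{lem:shampoo_norm}.

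I expect the main obstacle to be the adaptive-gradient-norm computation of $G'$: keeping the Kronecker-and-$\vec$ bookkeeping correct, in particular the factors of $d_R$ coming from $\Tr(\mI_{d_R})=d_R$ and from the $\tfrac{1}{d_R}$ normalization of $\mL_t$ in \Cref{alg:one_sided_shampoo}, together with the degenerate case in which $\sum_{t=1}^{T}\mG_t\mG_t^\top$ is rank-deficient, where $\inf_{\Tr(\mK)\le1}\langle\mA,\mK^{-1}\rangle$ must be interpreted as a limit, or equivalently as the infimum restricted to the range of $\mA$ (as in \Cref{lem:dual_norm_main} and part~(b) of \Cref{prop:well_structured_preconditioner}). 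Everything else is a direct substitution into \Cref{thm:main_regret_bound}.
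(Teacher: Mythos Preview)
Your proposal is correct and follows essentially the same approach as the paper: both specialize \Cref{thm:main_regret_bound} to the well-structured set $\gH=\cS_+^{d_L}\otimes\mI_{d_R}$, plugging in $\norm{\vx}_\cH=\tfrac{1}{\sqrt{d_R}}\norm{\mX}_\op$ from \Cref{lem:shampoo_norm} and $\vertiii{\vg_{1:T}}_\cH=\sqrt{d_R}\,\Tr[(\sum_t\mG_t\mG_t^\top)^{1/2}]$ from the calculations in \Cref{sec:appendix_one_sided_shampoo}. Your write-up is in fact more detailed than the paper's, which simply cites those computations; in particular you spell out the Kronecker bookkeeping for $G'$ and the triangle-inequality argument for the second bound (equivalently \Cref{cor:regret_bound}), both of which the paper leaves implicit.
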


Next, before presenting the convergence rate of one-sided Shampoo, we first provide a more interpretable formulation of the $\cH$-smoothness for one-sided Shampoo, which we call \emph{left smoothness}. 
See \Cref{sec:proof_left_smoothness} for a proof of \Cref{lem:left_smoothness}.
\begin{restatable}[Left smoothness for one-sided Shampoo]{lemma}{leftsmoothness}\label{lem:left_smoothness}
Let $\gH = \cS_+^{d_L}\otimes \mI_{d_R}$ be the well-structured preconditioner set for one-sided Shampoo. Then the $\gH$-smoothness $H(L, \gH)$ defined in \Cref{defi:H_smoothness} is equal to the smallest number $H\geq0$ such that there exists $\mH_{d_L}^*\in\mathbb{R}^{d_L\times d_L}$ satisfying that $H=d_R \Tr(\mH^*_{d_L})$ and that for any $\mX,\mDelta\in\mathbb{R}^{d_L\times d_R}$,
\begin{align*}
    \left|\nabla^2 L(\mX)[\mDelta,\mDelta]\right|\le \inner{\mH^*_{d_L}}{\mDelta\mDelta^\top}.
\end{align*}
In this case, the $\cH$-smoothness is denoted by $H_{\mathrm{left}}(L)$.
\end{restatable}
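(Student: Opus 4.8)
The plan is to unfold the definition of $\cH$-smoothness from \Cref{defi:H_smoothness} for the specific choice $\cH = \cS_+^{d_L}\otimes\mI_{d_R}$ and translate every condition into its matrix-shaped counterpart. Recall that $H(L,\cH)$ is the smallest $H\ge0$ such that there exists $\mH^*\in\cH$ with $H=\Tr(\mH^*)$ and $\nabla^2 L(\vx)\preceq\mH^*$ for all $\vx$ (the convex case). The key observation is that any $\mH^*\in\cS_+^{d_L}\otimes\mI_{d_R}$ has the form $\mH^*=\mH^*_{d_L}\otimes\mI_{d_R}$ for a unique $\mH^*_{d_L}\in\cS_+^{d_L}$, and under this correspondence $\Tr(\mH^*)=\Tr(\mH^*_{d_L}\otimes\mI_{d_R})=d_R\Tr(\mH^*_{d_L})$, which already produces the claimed relation $H=d_R\Tr(\mH^*_{d_L})$.

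The main step is then to show that the PSD inequality $\nabla^2 L(\vx)\preceq\mH^*_{d_L}\otimes\mI_{d_R}$, when written in vector form over $\vx\in\RR^d$, is equivalent to the matrix-form inequality $\nabla^2 L(\mX)[\mDelta,\mDelta]\le\inner{\mH^*_{d_L}}{\mDelta\mDelta^\top}$ for all $\mDelta\in\RR^{d_L\times d_R}$. First I would write the PSD condition as: for all $\vdelta\in\RR^d$, $\vdelta^\top\nabla^2 L(\vx)\vdelta\le\vdelta^\top(\mH^*_{d_L}\otimes\mI_{d_R})\vdelta$. The left side is exactly the Hessian bilinear form $\nabla^2 L(\mX)[\mDelta,\mDelta]$ with $\mDelta=\vec^{-1}(\vdelta)$, using the convention from \Cref{sec:notation} that $L(\mX)$ denotes $L(\vec(\mX))$. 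For the right side, the computation $\vec(\mDelta)^\top(\mH^*_{d_L}\otimes\mI_{d_R})\vec(\mDelta)=\inner{\mH^*_{d_L}}{\mDelta\mDelta^\top}$ is the standard Kronecker-vec identity $\vec(\mDelta)^\top(\mA\otimes\mB)\vec(\mDelta)=\Tr(\mDelta^\top\mB\mDelta\mA^\top)$ applied with $\mB=\mI_{d_R}$ and $\mA=\mH^*_{d_L}$ (symmetric); one has to be careful to match the specific vectorization convention $\vx=\vec(\mX)$ the paper uses, in particular whether $\mG_t\mG_t^\top$ or $\mG_t^\top\mG_t$ is the relevant contraction — this is where I would double-check indices against \Cref{eg:H_one_sided_shampoo}, where $\mL_t$ accumulates $\mG_s\mG_s^\top$ and $\mH_t=\mL_t^{1/2}\otimes\mI_{d_R}$, confirming the left factor acts on the $d_L$ index. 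The absolute value on the left appears because \Cref{defi:H_smoothness} in general requires $-\mH^*\preceq\nabla^2 L\preceq\mH^*$; for convex $L$ the lower bound is automatic, but keeping $|\cdot|$ makes the statement uniform, and in any case $\vdelta\mapsto-\vdelta^\top\nabla^2 L\,\vdelta$ is also controlled once the two-sided bound is imposed.

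Finally I would assemble the equivalence of the two minimization problems: a value $H$ is admissible for $H(L,\cH)$ iff there is $\mH^*=\mH^*_{d_L}\otimes\mI_{d_R}\in\cS_+^{d_L}\otimes\mI_{d_R}$ with $H=d_R\Tr(\mH^*_{d_L})$ satisfying the matrix-form Hessian bound, which is precisely the defining condition for $H_{\mathrm{left}}(L)$ in the lemma statement; since the feasible sets of $\mH^*_{d_L}$ coincide and the objective $d_R\Tr(\mH^*_{d_L})$ is a fixed monotone reparametrization, the two infima are equal. I do not expect a serious obstacle here; the only genuinely fiddly point is getting the vectorization convention exactly right so that the Kronecker factor lands on the correct index and the contraction $\mDelta\mDelta^\top$ (rather than $\mDelta^\top\mDelta$) appears, which I would verify by cross-referencing the already-established \Cref{lem:shampoo_norm} and \Cref{eg:H_one_sided_shampoo}.
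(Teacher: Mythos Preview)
Your proposal is correct and follows essentially the same approach as the paper: unfold \Cref{defi:H_smoothness} for $\cH=\cS_+^{d_L}\otimes\mI_{d_R}$, use $\Tr(\mH_{d_L}^*\otimes\mI_{d_R})=d_R\Tr(\mH_{d_L}^*)$, and translate the two-sided PSD bound into the matrix-form Hessian inequality via the Kronecker--vec identity $\vec(\mDelta)^\top(\mH_{d_L}^*\otimes\mI_{d_R})\vec(\mDelta)=\Tr(\mDelta^\top\mH_{d_L}^*\mDelta)=\langle\mH_{d_L}^*,\mDelta\mDelta^\top\rangle$. The paper's proof is essentially your outline with the same key computations, so there is nothing to add.
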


Then the convergence rate of one-sided Shampoo can be obtained by specializing \Cref{thm:main_convex_smooth} to $\norm{\gX}_\gH = \frac{1}{\sqrt{d_R}} \norm{\gX}_\op$ from \Cref{lem:shampoo_norm} and the left smoothness from \Cref{lem:left_smoothness}.

\begin{theorem}[Convergence rate of one-sided Shampoo]\label{thm:main_one_sided_shampoo}
Let $L_1,\ldots,L_T$ be stochastic convex loss functions satisfying \Cref{ass:noise}, and let $\bx_1,\ldots,\bx_T$ be the corresponding iterates of one-sided Shampoo (\cref{alg:one_sided_shampoo}) with learning rate $\eta=\sqrt{2/d_R}\|\cX\|_\op$.
Then for $\bar\bx_{1:T}=\frac{1}{T}\sum_{t=1}^T\bx_t$,
\begin{align*}
    \E [L(\bar\bx_{1:T}) - L(\vx^*)]
    \leq \frac{16}{Td_R} \norm{\gX}_\op^2 H_\mathrm{left}(L) 
    +\frac{4\sqrt{2}}{\sqrt{T d_R}} \norm{\gX}_\op \sigma 
    + \frac{4\sqrt{2}d \sqrt{\epsilon} }{T} \norm{\gX}_\op 
\end{align*}
where $\sigma=\inf_{\mH\in\cH, \Tr(\mH) \leq 1} \sqrt{\inner{\bm{\Sigma}}{\mH^{-1}}}$, and $H_\mathrm{left}(L)$ is the left smoothness of the expected loss $L$ identified in \Cref{lem:left_smoothness}. 
\end{theorem}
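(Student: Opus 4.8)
The plan is to obtain the bound as a direct specialization of the general stochastic convergence guarantee in \Cref{thm:main_convex_smooth}; the remaining work reduces to assembling the ingredients that theorem requires and tracking the powers of $d_R$. First I would recall from \Cref{eg:H_one_sided_shampoo} that running \Cref{alg:general_adaptive_optimizer} with the preconditioner set $\cH=\cS_+^{d_L}\otimes\mI_{d_R}$ produces exactly the iterates of one-sided Shampoo (\Cref{alg:one_sided_shampoo}): the minimizer of \eqref{eq:optimization_preconditioner} for $\mM_t$ equals $\frac{1}{\eta}\big(\epsilon\mI_{d_L}+\frac{1}{d_R}\sum_{s\le t}\mG_s\mG_s^\top\big)^{1/2}\otimes\mI_{d_R}=\frac{1}{\eta}\mL_t^{1/2}\otimes\mI_{d_R}$, and the mirror-descent step through this preconditioner is the one-sided Shampoo update with step size $\eta$. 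Since $\cK=\RR^{d_L\times d_L}\otimes\mI_{d_R}$ is a matrix subalgebra containing $\mI_d$, the set $\cH$ is well-structured in the sense of \Cref{def:good_preconditioner}, so \Cref{thm:main_convex_smooth} applies to these iterates.

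Next I would substitute the two problem-specific quantities that enter the bound of \Cref{thm:main_convex_smooth}: by \Cref{lem:shampoo_norm}, $\norm{\cX}_\cH=\frac{1}{\sqrt{d_R}}\norm{\cX}_\op$; by \Cref{lem:left_smoothness}, the $\cH$-smoothness $H(L,\cH)$ coincides with the left smoothness $H_{\mathrm{left}}(L)$; and the noise quantity $\sigma=\inf_{\mH\in\cH,\Tr(\mH)\le1}\sqrt{\inner{\bm{\Sigma}}{\mH^{-1}}}$ already has the required form and is carried over unchanged. Plugging these into
\[\E[L(\bar\vx_{1:T})-L(\vx^*)]\le \tfrac{16}{T}\norm{\cX}_\cH^2\,H(L,\cH)+\tfrac{4\sqrt2}{\sqrt T}\norm{\cX}_\cH\,\sigma+\tfrac{4\sqrt2\,d\sqrt\epsilon}{T}\norm{\cX}_\cH,\]
the identity $\norm{\cX}_\cH^2=\frac{1}{d_R}\norm{\cX}_\op^2$ turns the leading term into $\frac{16}{Td_R}\norm{\cX}_\op^2\,H_{\mathrm{left}}(L)$ and $\norm{\cX}_\cH=\frac{1}{\sqrt{d_R}}\norm{\cX}_\op$ turns the middle term into $\frac{4\sqrt2}{\sqrt{Td_R}}\norm{\cX}_\op\,\sigma$, which are precisely the first two terms of the claim, while the $\epsilon$-term collapses to $\frac{4\sqrt2\,d\sqrt\epsilon}{T}\norm{\cX}_\op$ once the conventions $\mL_0=\epsilon\mI_{d_L}$ and $\mM_0=\epsilon\mI_d$ are reconciled. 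The learning rate is obtained the same way: \Cref{cor:regret_bound} uses $\eta=\sqrt2\norm{\cX}_\cH$, which \Cref{lem:shampoo_norm} rewrites in terms of $\norm{\cX}_\op$.

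I do not expect a genuine conceptual obstacle; the one thing to be careful about is keeping the powers of $d_R$ consistent, since $d_R$ enters in three separate places that must cancel: (i) the normalization $\frac{1}{d_R}$ in the recursion $\mL_t=\mL_{t-1}+\frac{1}{d_R}\mG_t\mG_t^\top$, together with the identifications between $\mM_t$ and $\mL_t$ recorded in \Cref{eg:H_one_sided_shampoo}; (ii) the factor $\frac{1}{\sqrt{d_R}}$ relating $\norm{\cX}_\cH$ to $\norm{\cX}_\op$ in \Cref{lem:shampoo_norm}; and (iii) the factor $d_R$ relating $\Tr(\mH^*)$ to $\Tr(\mH^*_{d_L})$ in the left-smoothness reformulation of \Cref{lem:left_smoothness}. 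Once these are tracked correctly, the stated rate follows immediately from \Cref{thm:main_convex_smooth}.
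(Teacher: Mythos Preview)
Your proposal is correct and matches the paper's approach exactly: the paper does not give a separate proof for this theorem but simply states, immediately before it, that the bound ``can be obtained by specializing \Cref{thm:main_convex_smooth} to $\norm{\gX}_\gH = \frac{1}{\sqrt{d_R}} \norm{\gX}_\op$ from \Cref{lem:shampoo_norm} and the left smoothness from \Cref{lem:left_smoothness}.'' Your plan spells this out in full, including the needed verification that one-sided Shampoo is an instance of \Cref{alg:general_adaptive_optimizer} with a well-structured $\cH$, and your cautionary note about tracking the three sources of $d_R$ is apt.
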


\subsection{Comparison with previous results on Shampoo}\label{sec:comparison_previous_shampoo}
We compare our main results for one-sided Shampoo with the original results in \citet{gupta2018shampoo}.
Here, we restate their original regret bound for easier comparison.
\begin{theorem}[Regret bound of two-sided Shampoo~\citep{gupta2018shampoo}]\label{thm:two_sided_shampoo_regret_bound}
For convex functions $\{L_t\}_{t=1}^T$, suppose their gradients $(\bG_t=\nabla L_t(\bX_t))_{t=1}^T$ are matrices of rank at most $r$.
Then the regret of two-sided Shampoo (\Cref{alg:two_sided_shampoo}\footnote{The original two-sided Shampoo analysis in \citep{gupta2017unified} is without per step projection to the bounded domain. We adapt their theorem into the projected version in a standard way.}) compared to any \( \mX^* \in \mathbb{R}^{d_L \times d_R} \) is bounded as 
\begin{align*}
\sum_{t=1}^T L_t(\mX_t) - \sum_{t=1}^T L_t(\mX^*) \leq \left(\frac{D_\mathrm{F}^2}{2\eta} + r\eta\right) \Tr(\mL_T^{\frac{1}{4}}) \Tr(\mR_T^{\frac{1}{4}}),
\end{align*}
where \( D_{\mathrm{F}} = \max_{t\in[T]} \norm{\bX_t-\bX^*}_\F \), \( \mL_T = \epsilon \mI_{d_L} + \sum_{t=1}^T \mG_t \mG_t^\top \), and \( \mR_T = \epsilon \mI_{d_R} + \sum_{t=1}^T \mG_t^\top \mG_t \). 
When $\|\cX\|_\mathrm{F}<\infty$, we further choose $\eta=\sqrt{2/r}\|\cX\|_\mathrm{F}$, then
\begin{align*}
\sum_{t=1}^T L_t(\mX_t) - \sum_{t=1}^T L_t(\mX^*) \leq 2\sqrt{2r} \|\cX\|_\mathrm{F} \Tr(\mL_T^{\frac{1}{4}}) \Tr(\mR_T^{\frac{1}{4}}).
\end{align*}
\end{theorem}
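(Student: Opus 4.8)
The plan is to reproduce the online mirror descent (OMD) analysis of \citet{gupta2018shampoo} and add the routine projection argument the footnote alludes to. Two-sided Shampoo (\Cref{alg:two_sided_shampoo}) is a mirror-descent update $\mX_t = \Pi_\gX^{\mH_t}\!\big(\mX_{t-1} - \eta\,\mH_t^{-1}\mG_t\big)$ with the time-varying preconditioner $\mH_t = \mL_t^{1/4}\otimes\mR_t^{1/4}$ written in vectorized form, so I would first run the same decomposition that underlies \Cref{thm:adaptive_regret_bound}, but stop \emph{before} the FTL--BTL step: that step exploits the AdaReg choice $\mH_t = P_\gH(\mM_t)$, whereas $\mL_t^{1/4}\otimes\mR_t^{1/4}$ is a different preconditioner (associated to the ill-structured Kronecker cone of \Cref{eg:H_two_sided_shampoo}), so \Cref{thm:main_regret_bound} does not apply directly. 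The per-step OMD inequality together with the generalized Pythagorean inequality $\norm{\Pi_\gX^{\mH_t}(\vy)-\mX^*}_{\mH_t}\le\norm{\vy-\mX^*}_{\mH_t}$ for $\mX^*\in\gX$ (this is exactly the ``adapt to the projected version in a standard way'' remark) yields, using convexity of $L_t$,
\[
\sum_{t=1}^T L_t(\mX_t) - \sum_{t=1}^T L_t(\mX^*) \;\le\; \frac{1}{2\eta}\bigg(\norm{\mX_1-\mX^*}_{\mH_1}^2 + \sum_{t=2}^T\norm{\mX_t-\mX^*}_{\mH_t-\mH_{t-1}}^2\bigg) + \frac{\eta}{2}\sum_{t=1}^T\norm{\mG_t}_{\mH_t^{-1}}^2.
\]

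Next I would bound the two terms separately. For the \emph{domain term}, I would first check $\mH_{t-1}\preceq\mH_t$: since $\mL_{t-1}\preceq\mL_t$ and $\mR_{t-1}\preceq\mR_t$, operator monotonicity of $s\mapsto s^{1/4}$ gives $\mL_{t-1}^{1/4}\preceq\mL_t^{1/4}$ and $\mR_{t-1}^{1/4}\preceq\mR_t^{1/4}$, and then $\mA\otimes\mB-\mA'\otimes\mB'=(\mA-\mA')\otimes\mB+\mA'\otimes(\mB-\mB')\succeq0$ for $\mA\succeq\mA'\succeq0,\ \mB\succeq\mB'\succeq0$. Hence each $\mH_t-\mH_{t-1}$ is PSD, so $\norm{\mX_t-\mX^*}_{\mH_t-\mH_{t-1}}^2\le D_{\mathrm{F}}^2\,\Tr(\mH_t-\mH_{t-1})$ and the sum telescopes to at most $D_{\mathrm{F}}^2\,\Tr(\mH_T)=D_{\mathrm{F}}^2\,\Tr(\mL_T^{1/4})\Tr(\mR_T^{1/4})$. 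For the \emph{gradient term}, I would compare $\mH_t$ with the full-matrix AdaGrad preconditioner $\mM_t^{1/2}$ for $\mM_t=\epsilon\mI_d+\sum_{s\le t}\vec(\mG_s)\vec(\mG_s)^\top$: the key lemma of \citet{gupta2018shampoo}, which is where the rank bound $\rank(\mG_t)\le r$ enters, gives $\mM_t\preceq r\,(\mL_t^{1/2}\otimes\mR_t^{1/2})=r\,\mH_t^2$ (with the $\epsilon\mI$ parts absorbed by the same step), hence $\mH_t^{-1}\preceq\sqrt r\,\mM_t^{-1/2}$; combining with the classical AdaGrad potential bound $\sum_{t}\vec(\mG_t)^\top\mM_t^{-1/2}\vec(\mG_t)\le2\Tr(\mM_T^{1/2})$ and with $\Tr(\mM_T^{1/2})\le\sqrt r\,\Tr(\mH_T)$ (again from $\mM_T\preceq r\mH_T^2$ and monotonicity of trace and of the square root under $\preceq$) yields $\sum_t\norm{\mG_t}_{\mH_t^{-1}}^2\le2r\,\Tr(\mL_T^{1/4})\Tr(\mR_T^{1/4})$. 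Plugging both bounds into the display gives the prefactor $\big(\tfrac{D_{\mathrm{F}}^2}{2\eta}+r\eta\big)$ times $\Tr(\mL_T^{1/4})\Tr(\mR_T^{1/4})$, which is the first claimed bound.

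For the bounded-domain statement, I would bound the trajectory diameter by $D_{\mathrm{F}}\le 2\norm{\gX}_{\mathrm{F}}$ (or carry $\norm{\gX}_{\mathrm{F}}$ directly as the diameter parameter) and choose $\eta$ to balance $\tfrac{D_{\mathrm{F}}^2}{2\eta}$ against $r\eta$ by AM--GM, i.e.\ $\eta\asymp\norm{\gX}_{\mathrm{F}}/\sqrt r$, collapsing the prefactor to $\sqrt{2r}\,\norm{\gX}_{\mathrm{F}}$ up to the constant-chasing. The only non-routine ingredient is the Kronecker potential lemma $\mM_t\preceq r\,(\mL_t^{1/2}\otimes\mR_t^{1/2})$: via induction on $t$ and the single-gradient inequality $\vec(\mG_t)\vec(\mG_t)^\top\preceq r\,(\mG_t\mG_t^\top)^{1/2}\otimes(\mG_t^\top\mG_t)^{1/2}$ (which follows from the SVD of $\mG_t$ and the elementary bound $(\sum_{i=1}^r\va_i)(\sum_{i=1}^r\va_i)^\top\preceq r\sum_{i=1}^r\va_i\va_i^\top$), it reduces to the matrix AM--GM / operator-concavity inequality $\mA_1^{1/2}\otimes\mB_1^{1/2}+\mA_2^{1/2}\otimes\mB_2^{1/2}\preceq(\mA_1+\mA_2)^{1/2}\otimes(\mB_1+\mB_2)^{1/2}$ for PSD $\mA_i,\mB_i$. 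This operator inequality is the main obstacle: it is the genuinely matrix-analytic step, and is precisely the lemma established in \citet{gupta2018shampoo}; everything else above is standard OMD bookkeeping, matching their unprojected argument plus the Pythagorean step for the projection.
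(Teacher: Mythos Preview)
Your proposal is correct and reproduces precisely the argument of \citet{gupta2018shampoo}: the OMD decomposition with the Pythagorean projection step, monotonicity of $\mH_t=\mL_t^{1/4}\otimes\mR_t^{1/4}$ via operator monotonicity of $s\mapsto s^{1/4}$, the key potential inequality $\mM_t\preceq r\,\mH_t^2$ (hence $\mH_t^{-1}\preceq\sqrt r\,\mM_t^{-1/2}$ and $\Tr(\mM_T^{1/2})\le\sqrt r\,\Tr(\mH_T)$), and the AdaGrad trace bound $\sum_t\vec(\mG_t)^\top\mM_t^{-1/2}\vec(\mG_t)\le 2\Tr(\mM_T^{1/2})$. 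The present paper does not give its own proof of this theorem---it simply restates the cited result with the routine projection adaptation you identify---so there is nothing further to compare.
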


We now compare our regret bound in \Cref{thm:one_sided_shampoo_regret_bound} and the original regret bound in \Cref{thm:two_sided_shampoo_regret_bound} by \citet{gupta2018shampoo} when $\epsilon=0$, i.e., $\mL_T = \sum_{t=1}^T \mG_t \mG_t^\top$ and $\mR_T = \sum_{t=1}^T \mG_t^\top \mG_t$.
For a matrix $\mM \in \gS_+^d$, it always holds that $\sqrt{\Tr(\mM)}\leq \Tr(\mM^\frac{1}{2})$. 
Therefore,
\begin{align*}
    \Tr(\mL_T^{\frac{1}{4}}) \Tr(\mR_T^{\frac{1}{4}}) \geq \Tr(\mL_T^{\frac{1}{4}}) \Tr\left(\mR_T\right)^{\frac{1}{4}}=\Tr(\mL_T^{\frac{1}{4}}) \Tr(\mL_T)^{\frac{1}{4}} 
    \geq \Tr(\mL_T^{\frac{1}{4}}) \norm{\mL_T}_\op^{\frac{1}{4}}
    \geq \Tr(\mL_T^\frac{1}{2}). 
\end{align*}
This implies that regret bound in \Cref{thm:two_sided_shampoo_regret_bound} is always no smaller than the regret bound in \Cref{thm:one_sided_shampoo_regret_bound} because $r \geq 1$ and $\|\cX\|_\mathrm{F}\geq\|\cX\|_\op$ as $\|\bX\|_\mathrm{F}\geq\|\bX\|_\op$ for any $\bX\in\cX$.

Moreover, in the worst case, the Frobenius norm can be $\sqrt{\min(d_L,d_R)}$ times larger than the operator norm. 
As a concrete example, suppose $\mG_t$ satisfies that $\mG_t[i,j]=1$ only for $(j - i) \equiv t \pmod{\min(d_L, d_R)}$ and all other elements are zero. 
Then each $\mG_t$ has rank $r=\min(d_L,d_R)$. 
At step $T=d_L d_R$, $\mL_T=d_R \cdot \min(d_L, d_R)\cdot \mI_{d_L}$ and $\mR_T=d_L \cdot \min(d_L, d_R) \cdot \mI_{d_R}$, and thus
$\Tr(\mL_T^\frac{1}{4})\Tr(\mR_T^\frac{1}{4})=d_L^\frac{5}{4} d_R^\frac{5}{4}$ while $\Tr(\mL_T^\frac{1}{2})=d_L d_R^\frac{1}{2}$. 
In this case, the regret bound of two-sided Shampoo is $\min(d_L,d_R)d_L^\frac{1}{4} d_R^\frac{3}{4}$ times larger than the our regret bound for one-sided Shampoo. 

\citet{duvvuricombining} introduced CASPR as an alternative to Shampoo for approximating full-matrix AdaGrad and achieved the same regret bound as \Cref{thm:two_sided_shampoo_regret_bound}. 
Our previous comparison thus applies to both Shampoo and CASPR.

\subsection{Comparison with AdaGrad variants}\label{sec:compare_optimizers}
Next we show one-sided Shampoo can achieve the best theoretical upper bound of the suboptimality gap for a specific class of loss functions, where each loss function has the form
\begin{align}\label{eq:loss_class}
    L(\bX) = \langle\mH,(\mX-\mX^*)(\mX-\mX^*)^\top\rangle
\end{align}
where $\bX\in\RR^{d_L\times d_R}$, $\bH\in\cS_+^{d_L}$ with $\Tr(\mH) \leq 1$, and $\norm{\mX^*}_\op \leq 1$.
For this loss function class, we compare the largest possible value of convergence rate of different algorithms given by \Cref{thm:main_convex_smooth}, and the results are summarized in \Cref{tab:problem_class}.

\begin{table*}[t]
    \centering
    \small
    \begin{tabular}{ccc@{\hspace{7pt}}cc}
        \toprule
        \textbf{Algorithm} & \textbf{Subalgebra} $\mathcal{K}$  & $\norm{\vx^*}_\gH$ & $H(L, \gH)$& \textbf{Convergence Rate}\\
        \midrule
        AdaGrad-Norm & $\{c \mI_d\mid c\in \mathbb{R}\}$& $\frac{1}{\sqrt{d}} \norm{\vx^*}_2$& $d \cdot \lambda_{\max} (\mH)$& $\norm{\cX}^2_2 \lambda_{\max} (\mH)/T$\\
        AdaGrad & Diagonal matrices, $\cD^d(\mathbb{R})$ & $ \norm{\vx^*}_\infty$& $d_R\cdot H(\mH, 
        \cD_+^{d_L})$ & $ \norm{\cX}^2_\infty d_R \cdot H(\mH, \cD_+^{d_L})/T$\\
        Full-Matrix AdaGrad & All matrices, $\mathbb{R}^{d \times d}$& $ \norm{\vx^*}_2$& $d_R \Tr(\mH) $& $\norm{\cX}^2_2 d_R \Tr(\mH)/T$\\
        One-Sided Shampoo & $\mathbb{R}^{d_L \times d_L} \otimes \mI_{d_R}$
        & $\frac{1}{\sqrt{d_R}} \norm{\mX^*}_{\op}$&$d_R \Tr(\mH)$ & $\norm{\cX}^2_{\op} \Tr(\mH)/T$\\
        \bottomrule
    \end{tabular}
    \caption{Convergence rate for the loss function $L(\mX)=\inner{\mH}{(\mX-\mX^*)(\mX-\mX^*)^\top}$. The results can be obtained by \Cref{thm:main_convex_smooth} with $\sigma=0$ and omitting $\epsilon$. For each algorithm we pick the smallest domain which still ensures the minimizer lies in the domain.
    Here recall that $\cD^d$ is the set of all $d$-dimensional diagonal matrices, and $\cD_+^d=\cD^d\cap\cS_+^d$.}
    \label{tab:problem_class}
\end{table*}

For each algorithm defined by \Cref{alg:general_adaptive_optimizer} with specific $\cH$, we will pick $\cX = \{\vx \mid \norm{\vx}_\cH \leq \norm{\vx^*}_\cH \}$ to ensure the global minimizer $\vx^*$ is reachable. To get the convergence rate, it boils down to calculate $\|\cX\|_\cH = \norm{\vx^*}_\cH$ and $H(L,\cH)$ associated with each algorithm.
We have already derived the explicit form of $\|\cX\|_\cH$ for each algorithm in \Cref{sec:examples}, as shown in the third column of \Cref{tab:problem_class}.
For the $\cH$-smoothness, note that $\nabla^2 L(\bX) = \bH\otimes\bI_{d_R}$ for the loss $L$ in \eqref{eq:loss_class}, and then it is straightforward to calculate $H(L,\cH)$ according to \Cref{defi:H_smoothness}, as shown in the fourth column of \Cref{tab:problem_class}. 

Below we present the worst case of the convergence rate on this problem class. We can see that one-sided Shampoo is strictly better the other three adaptive optimization algorithms. 
\begin{itemize}
    \item  \textbf{One-sided Shampoo. }
The worst case of convergence rate for one-sided Shampoo is $\frac{1}{T}$. 
\item \textbf{AdaGrad-Norm.}
It is easy to see $\max_{\Tr(\mH)\leq 1} \lambda_{\max}(\mH) = 1$ and $\max_{\norm{\mX^*}_\op \leq 1} \norm{\vec{(\mX^*)}}_2 = \max_{\norm{\mX^*}_\op \leq 1} \norm{(\mX^*)}_\F = \sqrt{\min{\{d_L, d_R\}}}$. So the worst case of convergence rate is $\frac{\min{(d_L, d_R)}}{T}$. 

\item \textbf{AdaGrad.}
For any psd matrix $\mH \in \mathbb{R}^{d_L \times d_L}$ with $\Tr(\mH) \leq 1$, it always hold that $\mH \preceq \mI_{d_L}$. Then we have $\max_{\Tr(\mH)\leq 1} H(\mH, \cD_+^{d_L}) = \max_{\Tr(\mH)\leq 1} \min_{\text{diagonal} \mD \succeq \mH } \Tr(\mD) \leq d_L$. On the other hand, if we choose $\mH = \frac{\bm{1}_{d_L}\bm{1}_{d_L}^\top}{d_L}$, for any diagonal $\bD \succeq \mH$, it holds that $ \Tr(\bD)=\bm{1}_{d_L}^\top \bD\bm{1}_{d_L} \geq \bm{1}_{d_L}^\top \mH \bm{1}_{d_L}=d_L$. So we prove that $\max_{\Tr(\mH)\leq 1} H(\mH, \cD_+^{d_L}) =d_L$. 

 We also know that $\max_{\norm{\mX^*}_\op \leq 1} \norm{\vec{(\mX^*)}}_\infty \leq 1$ since $\norm{\mA}_\infty \leq \norm{\mA}_\op$. If we define $\bX^*$ such that the only nonzero entry of $\bX^*$ is $\mX^*_{1,1}=1$, then $\norm{\mX^*}_\infty=\norm{\mX^*}_\op=1$. Overall, the worst case of convergence rate is $\frac{d_L d_R}{T}$. 

\item \textbf{Full-matrix AdaGrad. }
Since $\max_{\norm{\mX^*}_\op \leq 1} \norm{\vec{(\mX^*)}}_2 = \sqrt{\min{\{d_L, d_R\}}}$ as we have seen in AdaGrad-Norm, the worst case of convergence rate is $\frac{\min{(d_L, d_R)} d_R}{T}$. 
\end{itemize}

\paragraph{}
To summarize, we have identified a class of optimization problems, namely loss functions like \Cref{eq:loss_class}, with Hessian of bounded trace and optimizer of bounded spectral norm, for which one-sided Shampoo has much better worst case convergence rate than any other adaptive algorithms that could be derived from our unified analysis. 
\section{Experiments}\label{sec:experiments}
In this section we empirically demonstrate the superior performance of 1-sided shampoo over other variants of AdaReg~(\Cref{alg:general_adaptive_optimizer}) on a simple but natural setting. Moreover, such superior performance is predicted by our theoretical analysis in \Cref{sec:compare_optimizers}, which in turn validates the practical utility of our theory in guiding optimizer selection.
\paragraph{Setup.}
We consider a linear regression problem $\norm{\mA \mX - \vy}_2^2$ where $\mA$ is the data matrix and $\vy = \mA\mX^*$ is the label vector generated by ground-truth $\mX^*$. Thus, the loss function can be equivalently written as $$f(\mX) = \inner{\mH}{(\mX-\mX^*)(\mX-\mX^*)^\top},$$ which is the same function that we studied in \Cref{sec:compare_optimizers} and show that 1-sided shampoo outperforms other adaptive algorithms. We consider $\mX \in \mathbb{R}^{d \times d}$ with $d=10^3$. We set the eigenvalues of $\mH$ by $\sigma_1 = \cdots = \sigma_{10}=1$ and $\sigma_i = \frac{1}{(i-10)^2}$ for $11 \leq i \leq 10^3$. Each element of the solution $\mX^*$ is independently sampled from $\mathcal{N}(0, \frac{1}{d})$. 
We run AdaGrad-Norm, AdaGrad, one-sided Shampoo and full-matrix AdaGrad for $100$ steps from initialization $\mX_0=0$. Full-matrix AdaGrad is run in a memory-efficient way and the detail is in \Cref{sec:fullmatrix_implementation}. We will compare the last iterate loss and the average iterate loss separately. 
The learning rate is tuned over five seeds for last iterate loss and average iterate loss respectfully, selecting the one with the lowest average loss. We use precision float32 and set $\epsilon=0$ for all the experiments.

We also run the EMA versions of the adaptive algorithms, AdaSGD, Adam, one-sided EMA Shampoo and full-matrix AdaSGD, which gives consistent results and are shown in \Cref{sec:additional_experiment}. 

\paragraph{Results.}
The overall results are shown in \Cref{fig:ema=false_decouple=true_trace}. 
We can see that one-sided Shampoo greatly outperforms other algorithms, corroborating the theoretical analysis in \Cref{sec:compare_optimizers}. 
Moreover, full-matrix AdaGrad is apparently the worst, suggesting more or even full adaptivity does not always help optimization for fixed budget of training steps.

\begin{figure}[H]
    \centering
    \includegraphics[width=0.45\linewidth]{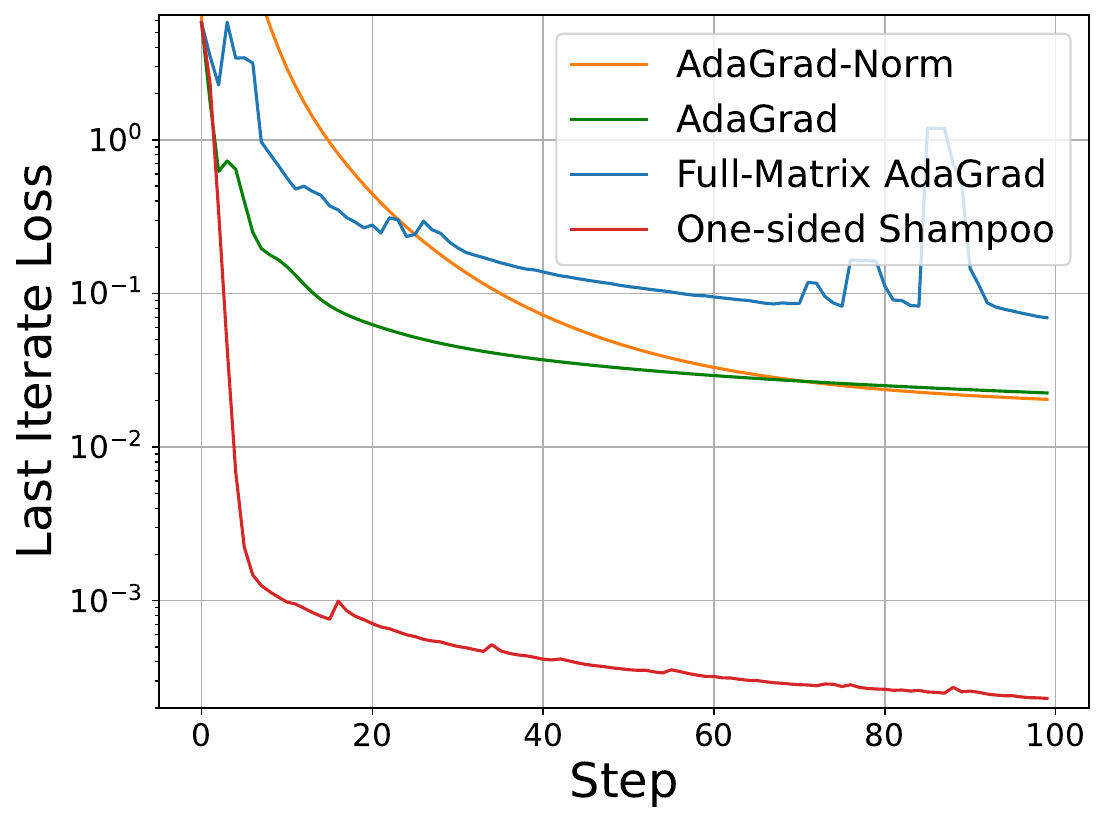}
    \includegraphics[width=0.45\linewidth]{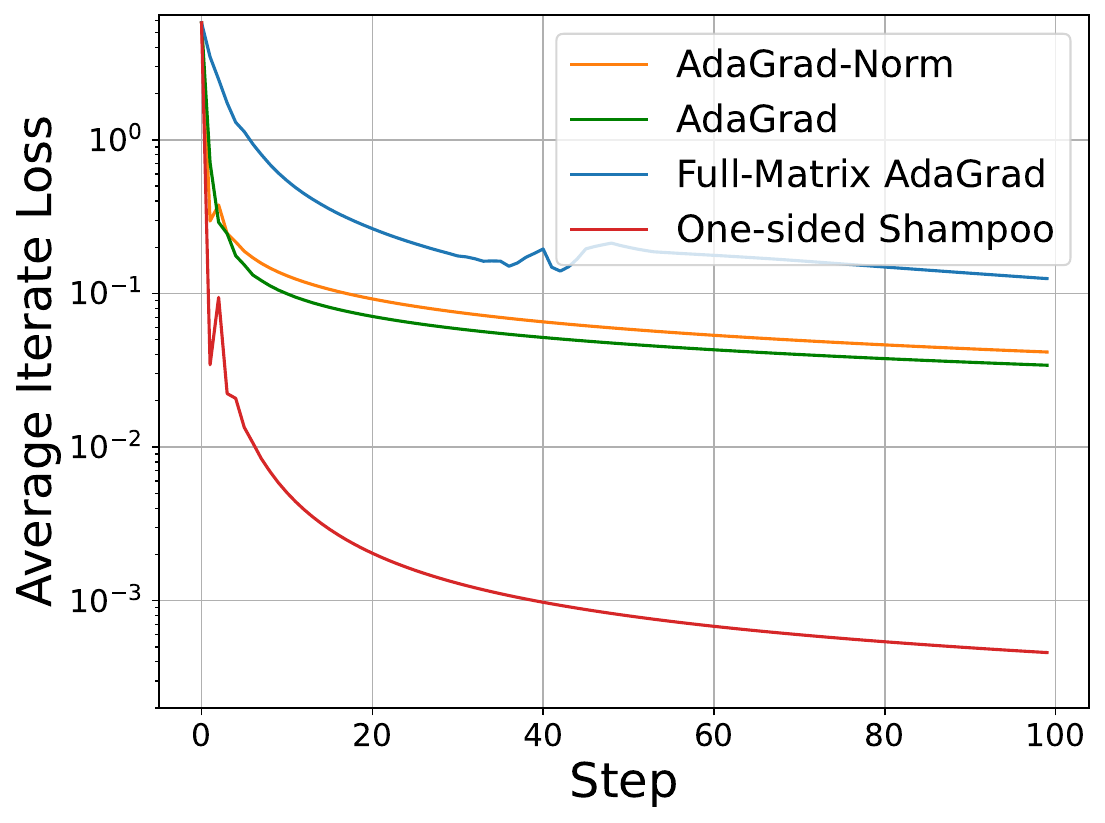}
    \caption{For both the last iterate loss $f(\mX_t)$ and the average iterate training $f(\frac{1}{t} \sum_{s=1}^t \mX_s)$, 1-sided Shampoo performs the best and Full-matrix AdaGrad performs the worst. Here $f(\mX)=\inner{\mH}{(\mX-\mX^*)(\mX-\mX^*)^\top}$.}
    \label{fig:ema=false_decouple=true_trace}
\end{figure}
\section{Related Work}

\paragraph{Adaptive optimizers and structured preconditioners.} 
The extensive costs for training large-scale deep learning models have motivated the development of efficient optimization algorithms, among which adaptive optimizers have been widely studied because of their ability to exploit the rich geometry of the loss landscape \citep{pascanu2013revisiting,martens2015optimizing,dozat2016incorporating,loshchilov2017decoupled,shazeer2018adafactor,reddi2019convergence,you2019large,zhuang2020adabelief,liu2023sophia,yuan2024mars}.
For the sake of memory and computational efficiency, many works involve approximations to full-matrix preconditioners \citep{ba2017distributed,george2018fast,martens2018kronecker,yao2021adahessian,jahani2021doubly,zhang2022eva,duvvuricombining}.
Indeed, our results suggest that such compromises might not harm the performance of adaptive optimizers in practice, because more adaptivity is not always helpful, as discussed in \Cref{sec:compare_optimizers} and \Cref{sec:experiments}. Concurrent work 
\cite{an2025asgo} proposed ASGO, which is exactly the same as one-sided Shampoo. Their memory efficient DASGO is a special example of blockwise Adam~\citep{xie2025adam}, which also falls in our framework. They did experiments to show one-sided Shampoo/ASGO's advantage in language modeling tasks and analyze how it can benefit from low-rank gradients and blockwise Hessian structure. 

\paragraph{Understanding Shampoo.} 
There are also recent efforts to understand the Shampoo optimizer from various perspectives.
\citet{bernstein2024old} interpret Shampoo as the steepest descent with respect to the spectral norm of the layerwise matrix-form parameters of the neural network.
Recognizing such structures of matrix-form parameters and role of spectral-norm geometry in deep learning has led to development of new optimizer such as Muon~\citep{jordan2024muon}.
In addition, the second-order perspective on Shampoo \citep{anil2020scalable} has also led to fruitful results: \citet{morwani2024new} connect the preconditioner in Shampoo to the optimal Kronecker product approximation of the Gauss-Newton component of the Hessian, and \citet{vyas2024soap} propose to view Shampoo as Adafactor in the eigenbasis of the preconditioner of Shampoo.
\section{Conclusion and Future Works}\label{sec:conclusion}

We present a unified analysis for a broad class of adaptive optimization algorithms with well-structured preconditioners~(\Cref{def:good_preconditioner}) for both online regret minimization and smooth convex optimization. Our analysis not only provides matching rate to several important algorithms including diagonal AdaGrad, full-matrix AdaGrad, and AdaGradNorm, but also gives an improved convergence rate for a one-sided variant of Shampoo over that of the original Shampoo. 
We reveal a novel trade-off in final convergence rate between domain metric and adaptive gradient norm~(\Cref{eq:def_adaptive_gradient_norm}) for regret minimization or adaptive smoothness~(\Cref{defi:H_smoothness}) for smooth convex optimization. We hope this insight could be useful towards design of future adaptive optimizers.  One important future direction is to identify more subalgebras or other structures that are useful for improving the performance by better adapting to the domain or loss smoothness.

\section*{Acknowledgment}
The authors sincerely thank Matt Streeter for his valuable discussions and insightful comments throughout this work.
\bibliography{all}
\bibliographystyle{iclr2023_conference}

\appendix
\newpage {
\hypersetup{linkcolor=black}
\tableofcontents
}
\section{Proof for Well-Structured Preconditioner Sets}\label{sec:proof_preconditioner}

\subsection{Definition of \texorpdfstring{$\cH$}{well-structured preconditioner ser H} and its properties}
Recall that $\cK$ is a subalgebra of $d$-by-$d$ real-valued matrices, and the corresponding well-structured preconditioner set is $\cH=\cK\cap\cS_+^d$.

\begin{lemma}\label{lem:property_subalgebra}
For any $\bA\in\cK$ and any polynomial $p$, $p(\bA)\in\cK$.
Furthermore, for any invertible $\bA\in\cK$, its inverse $\bA^{-1}\in\cK$.
Also, for any symmetric $\bA\in\cK$, its pseudo inverse $\bA^\dagger\in\cK$.
\end{lemma}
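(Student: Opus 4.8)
The plan is to establish the three assertions in order, reducing the second and third to the first.

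\textbf{Closure under polynomials.} First I would unwind the subalgebra axioms: closure under matrix multiplication gives $\bA^k\in\cK$ for every $k\ge 1$ by induction on $k$, and \Cref{def:good_preconditioner} supplies $\bA^0=\bI_d\in\cK$. Hence each monomial $c_k\bA^k$ lies in $\cK$ by closure under scalar multiplication, and a finite sum of such monomials stays in $\cK$ by closure under addition; therefore $p(\bA)\in\cK$ for every real polynomial $p$.

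\textbf{Inverse.} Next, for invertible $\bA\in\cK$, I would invoke the Cayley--Hamilton theorem. Writing $\chi_\bA(\lambda)=\lambda^d+c_{d-1}\lambda^{d-1}+\cdots+c_1\lambda+c_0$ for the characteristic polynomial of $\bA$, we have $\chi_\bA(\bA)=0$ and $c_0=(-1)^d\det\bA\neq 0$ by invertibility, so $\bA^{-1}=-c_0^{-1}\bigl(\bA^{d-1}+c_{d-1}\bA^{d-2}+\cdots+c_1\bI_d\bigr)$ is a polynomial in $\bA$ and lies in $\cK$ by the first part.

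\textbf{Pseudo-inverse.} For symmetric $\bA\in\cK$, I would use the spectral decomposition $\bA=\sum_i\lambda_i\bP_i$, where the $\lambda_i$ are the distinct real eigenvalues and the $\bP_i$ are the corresponding orthogonal projections with $\sum_i\bP_i=\bI_d$; then $p(\bA)=\sum_i p(\lambda_i)\bP_i$ for any polynomial $p$, while $\bA^\dagger=\sum_{i:\lambda_i\ne 0}\lambda_i^{-1}\bP_i$. By Lagrange interpolation over the finitely many distinct nodes $\{\lambda_i\}$, there is a real polynomial $q$ with $q(\lambda_i)=\lambda_i^{-1}$ for every $i$ with $\lambda_i\ne 0$, and additionally $q(0)=0$ if $0$ is an eigenvalue of $\bA$. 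Then $q(\bA)=\bA^\dagger$, so $\bA^\dagger\in\cK$ by the first part; when $\bA$ is invertible this specializes to $\bA^\dagger=\bA^{-1}$.

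\textbf{Anticipated difficulty.} None of these steps is deep; the only point that needs care is the pseudo-inverse, where I must use that symmetry yields orthogonal diagonalizability with real spectrum, which is precisely what makes $\bA^\dagger$ a polynomial in $\bA$ (this would fail for general non-normal matrices, e.g.\ nonzero nilpotents), and I must include the constraint $q(0)=0$ in the interpolation so that the zero eigenspace is handled correctly. The rest is direct bookkeeping with the closure axioms and the membership $\bI_d\in\cK$.
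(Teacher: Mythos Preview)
Your proposal is correct and follows the same approach as the paper, which tersely attributes all three parts to the subalgebra axioms and the Cayley--Hamilton theorem. Your treatment is more careful---in particular, your use of Lagrange interpolation with the explicit constraint $q(0)=0$ for the pseudo-inverse makes rigorous a point the paper leaves implicit.
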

\begin{proof}[Proof of \Cref{lem:property_subalgebra}]
The first statement follows from the fact that $\cK$ is a subalgebra, and the second and third statement are consequences of the Cayley-Hamilton theorem.
\end{proof}

\begin{lemma}\label{lem:dual_cone}
For $\cH=\cK\cap\cS_+^d$ where $\cK$ is a subalgebra of $d$-by-$d$ real-valued matrices,
define 
\begin{align}
    \cH^* = \{\bA\in\cK: \langle \bH,\bA\rangle\geq 0, \forall \bH\in\cH\}.
\end{align}
Then $\cH^*=\cH$.
Consequently, for any $\bH_1,\bH_2\in\cH$, if $\langle \bH_1-\bH_2,\bH\rangle\geq0$ for all $\bH\in\cH$, then $\bH_1\succeq \bH_2$. 
\end{lemma}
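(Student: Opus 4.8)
The plan is to establish the two inclusions $\cH\subseteq\cH^*$ and $\cH^*\subseteq\cH$ separately, the first being routine and the second being the heart of the matter, and then to read off the ``consequently'' statement directly.

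First I would check $\cH\subseteq\cH^*$. If $\bA\in\cH=\cK\cap\cS_+^d$ then certainly $\bA\in\cK$, and for any $\bH\in\cH\subseteq\cS_+^d$ one has $\langle\bH,\bA\rangle=\Tr(\bH\bA)=\Tr(\bH^{1/2}\bA\bH^{1/2})\geq0$ since $\bH^{1/2}\bA\bH^{1/2}\succeq0$. Hence $\bA\in\cH^*$.

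For the reverse inclusion I would take a symmetric matrix $\bA\in\cH^*$ (this is the only case needed below, since the elements tested in the ``consequently'' part are differences $\bH_1-\bH_2$) and aim to show $\bA\succeq0$. Write the spectral decomposition into positive and negative parts, $\bA=\bA_+-\bA_-$, with $\bA_\pm\succeq0$ and $\bA_+\bA_-=\bA_-\bA_+=0$. The key step is that $\bA_-$ in fact lies in $\cH$: on the finite spectrum of $\bA$ the map $\lambda\mapsto|\lambda|$ is realized by some interpolating polynomial $q$, so $|\bA|=(\bA^2)^{1/2}=q(\bA)\in\cK$ by \Cref{lem:property_subalgebra}, whence $\bA_\pm=\tfrac{1}{2}(|\bA|\pm\bA)\in\cK$; since $\bA_-$ is moreover symmetric positive semidefinite, $\bA_-\in\cK\cap\cS_+^d=\cH$. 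Then $\bA_-$ is an admissible test matrix in the definition of $\cH^*$, and plugging it in gives $0\leq\langle\bA_-,\bA\rangle=\Tr(\bA_-\bA_+)-\Tr(\bA_-^2)=-\norm{\bA_-}_\F^2\leq0$, forcing $\bA_-=0$ and hence $\bA=\bA_+\succeq0$, i.e. $\bA\in\cH$.

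Finally, for the ``consequently'' part: given $\bH_1,\bH_2\in\cH$ with $\langle\bH_1-\bH_2,\bH\rangle\geq0$ for all $\bH\in\cH$, the matrix $\bH_1-\bH_2$ is symmetric and lies in $\cK$ (a difference of elements of $\cH\subseteq\cK$), so by definition it belongs to $\cH^*$, and by the previous paragraph to $\cH\subseteq\cS_+^d$; thus $\bH_1\succeq\bH_2$. The only genuine obstacle is the closure claim $\bA_\pm\in\cK$: a subalgebra is not assumed closed under the discontinuous spectral projection $\bA\mapsto\bA_+$, and the fix is precisely that, for a fixed matrix, this operation agrees with polynomial functional calculus, so \Cref{lem:property_subalgebra} applies; everything after that is the one-line orthogonality computation above.
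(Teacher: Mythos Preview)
Your proof is correct and, in fact, cleaner than the paper's. Both arguments handle $\cH\subseteq\cH^*$ identically and both rely on \Cref{lem:property_subalgebra} to place a suitable test matrix in $\cH$; the difference is which test matrix is built.

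The paper does not construct $\bA_-$ directly. Instead it forms $\bB=(\bA-2\max(\lambda_1(\bA),1)\bI_d)^2\succ0$, whose leading eigenspace coincides with the eigenspace of the smallest (negative) eigenvalue of $\bA$, and then uses that for large $n$ the normalized power $\bB^n/\|\bB^n\|_\op$ approximates the projection onto that eigenspace; being a polynomial in $\bA$, this lies in $\cH$ and pairs negatively with $\bA$, giving the contradiction. Your approach bypasses the limiting step entirely by noting that on a finite spectrum $\lambda\mapsto|\lambda|$ agrees with an interpolating polynomial, so $|\bA|$ and hence $\bA_-=\tfrac12(|\bA|-\bA)$ are themselves polynomials in $\bA$ and lie in $\cH$; the single computation $\langle\bA_-,\bA\rangle=-\|\bA_-\|_\F^2$ then forces $\bA_-=0$. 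This is more direct and avoids the power-iteration asymptotics, at the cost of invoking polynomial interpolation rather than an explicit formula.

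Both proofs tacitly restrict to symmetric $\bA$ (the paper orders real eigenvalues and treats $(\bA-c\bI_d)^2$ as positive definite), and you correctly note that the symmetric case is exactly what the ``consequently'' clause requires.
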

\begin{proof}[Proof of \Cref{lem:dual_cone}]
Suppose there exists $\bA\in\cH^*$ such that $\bA$ has a negative eigenvalue.
Let $\lambda_1(\bA), \ldots, \lambda_d(\bA)$ be the eigenvalues of $\bA$ in the decreasing order.
Consider the matrix $\bB=(\bA-2\max(\lambda_1(\bA),1)\bI_d)^2\succ 0$.
The leading eigenspace of $\bB$ is the same as the eigenspace of $\bA$ corresponding to its smallest eigenvalue, which is negative.
Consequently, for large enough integer $n$, $\bB^n/\|\bB^n\|_\op$ is approximately the projection matrix onto the eigenspace of the smallest negative eigenvalue of $\bA$.
Therefore, for large enough integer $n$, $\langle \bB^n/\|\bB^n\|_\op,\bA\rangle<0$.
However, since $\bB^n/\|\bB^n\|_\op$ is a polynomial of $\bA$, we know that $\bB^n/\|\bB^n\|_\op\in\cH$.
This is a contradiction to the definition of $\cH^*$.
Hence, we conclude that for any $\bA\in\cH^*$, it holds that $\bA\succeq0$, and thus $\cH^*\subseteq\cH$.
Moreover, for any $\bA\in\cH$, it holds that $\langle\bH,\bA\rangle\geq 0$ for any $\bH\in\cH$ because both $\bA$ and $\bH$ are positive semi-definite.
This shows that $\cH\subseteq\cH^*$, and hence $\cH^*=\cH$.
This completes the proof.
\end{proof}

\subsection{Properties of \texorpdfstring{$P_\cH(\cdot)$}{the minimizer}}
For any $\bM\succ 0$, recall the regularized optimization problem in \eqref{eq:optimization_preconditioner}.
Note that we can assume $\eta=1$ without loss of generality, because the original problem is equivalent to solve for $\bM/\eta^2$ with regularizer $\Tr(\bH)$ in place of $\eta^2\Tr(\bH)$.
Therefore, in the rest of this section, we focus on the following optimization problem:
\begin{align}\label{eq:def_H_G}
    P_\cH(\bM) := \argmin_{\bH\in\cH} \langle \bM,\bH^{-1}\rangle + \Tr(\bH).
\end{align}
The results proved below applies to the original problem \eqref{eq:optimization_preconditioner} after simple rescaling, and \Cref{prop:well_structured_preconditioner} follows from \Cref{prop:optimization} below.

For notational convenience, given any $\bM\succ0$, we define 
\begin{align}
    f_\bM(\bH) := \langle \bM,\bH^{-1}\rangle + \Tr(\bH).
\end{align}

\begin{proposition}\label{prop:optimization}
Let $\cH$ be a well-structured preconditioner set under \Cref{def:good_preconditioner}.
For any $\bM\succ0$, there exists a unique solution $P_\cH(\bM)\succ0$ to the optimization problem in \eqref{eq:def_H_G}.
Furthermore, for any $\bM\succ 0$, $P_\cH(\bM)$ satisfies the following properties:
\begin{enumerate}[label=(\alph*), ref=(\alph*)]
\item \label{pd_prop_balance} $\langle\bM,P_\cH(\bM)^{-1}\rangle=\Tr(P_\cH(\bM))$.
\item \label{pd_prop_direction} $\cleverbar{P_\cH(\bM)} = \arg\min_{\bH\in\cH,\Tr(\bH)\leq 1} \langle\bM,\bH^{-1}\rangle$ where we recall that $\cleverbar{P_\cH(\bM)} = \Tr(P_\cH(\bM))^{-1} P_\cH(\bM)$.
\item \label{pd_prop_optimality} For any $\bH\in\cH$, $\langle -P_\cH(\bM)^{-1}\bM P_\cH(\bM)^{-1}+\bI_d, \bH-P_\cH(\bM)\rangle=0$.
\end{enumerate}
Moreover, for any $\bM_1\succeq \bM_2\succ0$, it holds that $ P_\cH(\bM_1)-  P_\cH(\bM_2) \in \cH$, and in particular, $ P_\cH(\bM_1)\succeq P_\cH(\bM_2)$.
\end{proposition}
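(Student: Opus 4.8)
The plan is to first settle existence, uniqueness and a first-order optimality characterization, then read off parts (a)--(c), and finally prove the monotonicity statement by a homotopy argument. Throughout I fix $\eta=1$ as in the reduction above, write $\bH_\bM:=P_\cH(\bM)$, and recall the objective $f_\bM(\bH)=\langle\bM,\bH^{-1}\rangle+\Tr(\bH)$.

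\textbf{Existence, uniqueness, and the optimality condition.} The feasible set $\cH=\cK\cap\cS_+^d$ is a nonempty closed convex cone (it contains $\bI_d$), and on $\cK\cap\cS_{++}^d$, where it is finite, $f_\bM$ is smooth and strictly convex: writing $\widetilde\bV=\bH^{-1/2}\bV\bH^{-1/2}$ and $\widetilde\bM=\bH^{-1/2}\bM\bH^{-1/2}\succ0$, the second differential of $\bH\mapsto\langle\bM,\bH^{-1}\rangle$ along $\bV$ equals $2\Tr(\widetilde\bV\widetilde\bM\widetilde\bV)$, which is strictly positive for $\bV\neq0$. Since $\langle\bM,\bH^{-1}\rangle\to\infty$ as $\lambda_{\min}(\bH)\to0$ while $\Tr(\bH)\to\infty$ as $\norm{\bH}_\op\to\infty$, the sublevel sets of $f_\bM$ are compact subsets of $\cS_{++}^d$, so the minimizer $\bH_\bM\succ0$ exists and is unique by strict convexity. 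Because $\bH_\bM\succ0$, small symmetric perturbations inside $\cK$ keep it positive definite, so $\bH_\bM$ lies in the relative interior of $\cH$ and the first-order condition is that $\nabla f_\bM(\bH_\bM)=-\bH_\bM^{-1}\bM\bH_\bM^{-1}+\bI_d$ is orthogonal to the linear subspace $\cK\cap\cS^d$; as this gradient is symmetric, this is equivalent to $\langle\bH_\bM^{-1}\bM\bH_\bM^{-1},\bK\rangle=\Tr(\bK)$ for every $\bK\in\cK$.

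\textbf{Parts (a), (b), (c).} Taking $\bK=\bH_\bM\in\cK$ in the optimality identity gives $\langle\bM,\bH_\bM^{-1}\rangle=\Tr(\bH_\bM)$, which is (a); taking $\bK=\bH-\bH_\bM\in\cK$ for $\bH\in\cH$ gives (c). For (b), note that for feasible $\bH$ with $\Tr(\bH)\le1$, replacing $\bH$ by $\bH/\Tr(\bH)$ only decreases $\langle\bM,\bH^{-1}\rangle$, and along a ray $\min_{t>0}f_\bM(t\bH)=2\sqrt{\langle\bM,\bH^{-1}\rangle}$; hence $\bH_\bM$ is a positive scaling of $\argmin_{\bH\in\cH,\,\Tr(\bH)=1}\langle\bM,\bH^{-1}\rangle$, and part (a) pins the scaling to $\Tr(\bH_\bM)$, yielding $\cleverbar{\bH_\bM}$ as claimed, with uniqueness inherited from $f_\bM$.

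\textbf{Monotonicity.} Given $\bM_1\succeq\bM_2\succ0$, put $\bN=\bM_1-\bM_2\succeq0$ and $\bM(s)=\bM_2+s\bN$ for $s\in[0,1]$, so $\bM(s)\succeq\bM_2\succ0$; let $\bH(s)=P_\cH(\bM(s))$. The restricted Hessian of $f_{\bM(s)}$ at $\bH(s)$ is positive definite by the computation above, so the implicit function theorem gives that $s\mapsto\bH(s)$ is $C^1$. Differentiating $\langle\bH(s)^{-1}\bM(s)\bH(s)^{-1},\bK\rangle=\Tr(\bK)$ in $s$ with $\bK\in\cK$ held fixed, and then substituting $\bK=\bH(s)^{1/2}\bK'\bH(s)^{1/2}$ (which ranges over all of $\cK\cap\cS^d$ as $\bK'$ does, since $\bH(s)^{\pm1/2}\in\cK$ by \Cref{lem:property_subalgebra} because the square root of a positive definite matrix is a polynomial in it), one obtains, with $\widetilde\bV=\bH(s)^{-1/2}\dot\bH(s)\bH(s)^{-1/2}\in\cK$, $\widetilde\bM=\bH(s)^{-1/2}\bM(s)\bH(s)^{-1/2}\succ0$ and $\widetilde\bN=\bH(s)^{-1/2}\bN\bH(s)^{-1/2}\succeq0$, the projected Lyapunov relation $\langle\widetilde\bV\widetilde\bM+\widetilde\bM\widetilde\bV,\bK'\rangle=\langle\widetilde\bN,\bK'\rangle$ for all $\bK'\in\cK$. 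To conclude $\widetilde\bV\succeq0$, suppose not and let $\bP\neq0$ be the spectral projection of $\widetilde\bV$ onto its negative eigenspace; then $\bP\in\cK$ (a polynomial in $\widetilde\bV$, by \Cref{lem:property_subalgebra}). Testing the relation against $\bK'=\bP$ gives $2\Tr(\widetilde\bV\widetilde\bM\bP)=\Tr(\widetilde\bN\bP)$; the right-hand side is $\ge0$ since $\widetilde\bN,\bP\succeq0$, while (using that $\bP$ commutes with $\widetilde\bV$) the left-hand side equals $2\Tr\big((\bP\widetilde\bV\bP)(\bP\widetilde\bM\bP)\big)$, which is strictly negative because $\bP\widetilde\bV\bP$ is negative definite and $\bP\widetilde\bM\bP$ positive definite on $\mathrm{range}(\bP)$ — a contradiction. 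Hence $\widetilde\bV\succeq0$, so $\dot\bH(s)=\bH(s)^{1/2}\widetilde\bV\bH(s)^{1/2}\succeq0$; moreover $\dot\bH(s)\in\cK$, being the derivative of a $C^1$ curve inside the subspace $\cK\cap\cS^d$. Thus $\dot\bH(s)\in\cH$ for every $s$, and since $\cH$ is a closed convex cone, $P_\cH(\bM_1)-P_\cH(\bM_2)=\int_0^1\dot\bH(s)\,ds\in\cH$; in particular $P_\cH(\bM_1)\succeq P_\cH(\bM_2)$.

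\textbf{Main obstacle.} The difficulty is entirely in the monotonicity claim. A direct two-point comparison of the optimality conditions at $\bM_1$ and $\bM_2$ stalls because $\bA\mapsto\bA\bM\bA$ is not operator monotone in $\bA$, so $\bH_1^{-1}\bM_1\bH_1^{-1}$ and $\bH_2^{-1}\bM_2\bH_2^{-1}$ cannot be compared term by term; the homotopy replaces this with an infinitesimal Lyapunov-type relation that is sign-definite. The two uses of the subalgebra structure — that $\bH(s)^{1/2}$ and the spectral projection $\bP$ lie in $\cK$ — are exactly what the ``ill-structured'' examples in \Cref{eg:H_two_sided_shampoo,eg:H_tridiagonal} lack (closure under addition, resp.\ multiplication), and exactly what makes the contradiction step and the implicit-function-theorem step go through.
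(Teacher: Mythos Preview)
Your proof is correct. Existence, uniqueness, and parts (a)--(c) match the paper closely; one cosmetic point is that the first-order identity holds for all $\bK\in\cK\cap\cS^d$, not for all $\bK\in\cK$ (the definition does not force $\cK$ to be closed under transpose, e.g.\ upper-triangular matrices), but since you only ever test against the symmetric matrix $\bP$ this is immaterial.

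The monotonicity argument, however, takes a genuinely different route from the paper. You use a homotopy $\bM(s)$, apply the implicit function theorem, differentiate the optimality condition into a projected Lyapunov relation, and test against the spectral projection $\bP$ of $\widetilde\bV$. The paper instead finds a direct two-point comparison: from property~(c) it substitutes $\bH\to P_\cH(\bM)\,\bH\,P_\cH(\bM)$ (a bijection of $\cH$ by the subalgebra property) to rewrite optimality as $\langle\bM-P_\cH(\bM)^2,\bH\rangle=0$ for every $\bH\in\cH$. This identity is \emph{linear in $\bM$}, so subtracting it at $\bM_1$ and $\bM_2$ gives $\langle P_\cH(\bM_1)^2-P_\cH(\bM_2)^2,\bH\rangle=\langle\bM_1-\bM_2,\bH\rangle\ge0$ for all $\bH\in\cH$; self-duality of $\cH$ (\Cref{lem:dual_cone}) then yields $P_\cH(\bM_1)^2\succeq P_\cH(\bM_2)^2$, and operator monotonicity of the square root concludes. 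So the ``direct two-point'' approach you describe as stalling in fact succeeds after one change of variables---the non-monotone map $\bA\mapsto\bA\bM\bA$ is bypassed because the transformed condition involves $P_\cH(\bM)^2$ rather than $P_\cH(\bM)^{-1}\bM P_\cH(\bM)^{-1}$. The paper's argument is shorter and avoids the implicit function theorem entirely; your argument, in exchange, yields the slightly finer pointwise statement $\dot\bH(s)\succeq0$. Both proofs rely on the subalgebra hypothesis at exactly the same junctures (polynomials of elements stay in $\cK$, sums stay in $\cK$), which is consistent with the counterexamples in \Cref{eg:H_two_sided_shampoo,eg:H_tridiagonal}.
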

\begin{proof}[Proof of \Cref{prop:optimization}]
We first show that $P_\cH(\bM)$ exists and $P_\cH(\bM)\succ0$.
Note that for $\cH=\cK\cap\cS_+^d$, since $\cK$ is a linear subspace of $\cM^d$ and $\cS_+^d$ is a closed subset of $\cM^d$, we know that $\cH$ is also a closed subset of $\cM^d$.
Moreover, for any sequence $\{\bH_n\}_{n\geq1}$ such that $\bH_n\succ 0$ and either the smallest eigenvalue of $\bH_n$ converges to 0 or the largest eigenvalue of $\bH_n$ converges to $\infty$, the objective value $f_\bM(\bH_n)$ goes to $\infty$ because $\bM\succ 0$.
Therefore, there exists $P_\cH(\bM)\succ0$ that attains the minimum objective value.
For the uniqueness of $P_\cH(\bM)$, it suffices to note that the objective function $f_M(\bH)$ is strictly convex in $\bH\succ 0$ for $\bM\succ 0$.

\paragraph{Proof for property \ref{pd_prop_balance}.}
Suppose otherwise that $\langle \bM, P_\cH(\bM)^{-1}\rangle \neq \Tr(P_\cH(\bM))$, and consider the following matrix: 
\[\bH=P_\cH(\bM)\cdot \sqrt{\langle \bM,P_\cH(\bM)^{-1}\rangle/\Tr(P_\cH(\bM))}\in\cH.\]
For this $\bH$, we have $\langle \bM,\bH^{-1}\rangle+\Tr(\bH)=2\sqrt{\langle \bM,P_\cH(\bM)^{-1}\rangle \cdot \Tr(P_\cH(\bM))} < \langle \bM,P_\cH(\bM)^{-1}\rangle + \Tr(P_\cH(\bM))$, thus contradicting the optimality of $P_\cH(\bM)$.
Therefore, it must be true that $\langle \bM,P_\cH(\bM)^{-1}\rangle=\Tr(P_\cH(\bM))$.

\paragraph{Proof for property \ref{pd_prop_direction}}
Note that we can rewrite the original optimization problem as follows:
\begin{align*}
    P_\cH(\bM) &= \argmin_{\mH \in \cH} \langle\mM,\mH^{-1}\rangle + \Tr(\mH)\\
    &= \argmin_{\bH\in\cH} \bigg\langle\bM, \bigg(\frac{\bH}{\Tr(\bH)}\bigg)^{-1}\bigg\rangle\cdot \frac{1}{\Tr(\bH)} + \Tr(\bH).
\end{align*} 
Note that solving the above optimization problem is equivalent to first solving $\cleverbar{P_\cH(\bM)}=\argmin_{\bH\in\cH,\Tr(\bH)\leq 1} \langle\bM,\bH^{-1}\rangle$ and then setting $P_\cH(\bM)=\Tr(P_\cH(\bM))\cdot \cleverbar{P_\cH(\bM)}$ where the value of $\Tr(P_\cH(\bM))$ ensures the previous property \ref{pd_prop_balance}.
Hence, we see that $\cleverbar{P_\cH(\bM)}$ solves the constrained version of the original optimization problem.

\paragraph{Proof for property \ref{pd_prop_optimality}.}
Since $\nabla_\bH\Tr(\bH)=\bI_d$ and $\nabla_\bH\Tr(\bM^\top\bH^{-1})=-\bH^{-1}\bM\bH^{-1}$ (see e.g. Equation (124) in \citet{petersen2008matrix}), we have 
\begin{align}\label{eq:derivative_F_G}
    \nabla_\bH f_\bM(\bH) = -\bH^{-1}\bM\bH^{-1} + \bI_d.
\end{align}
Then as $\cH$ is a cone, by the optimality of $P_\cH(\bM)$, it holds  for any $\bH\in\cH$ that
\begin{align*}
    0 &= \langle \nabla_\bH f_\bM(P_\cH(\bM)), \bH-P_\cH(\bM)\rangle
    = \langle -P_\cH(\bM)^{-1}\bM P_\cH(\bM)^{-1}+\bI_d, \bH-P_\cH(\bM)\rangle
\end{align*}

\paragraph{Proof for the operator monotonicity of $P_\cH(\cdot)$.}
By property \ref{pd_prop_optimality} of $P_\cH(\cdot)$, we know that for any $\bM\succ0$, $\langle\bM - P_\cH(\bM)^2, P_\cH(\bM)^{-1}\bH P_\cH(\bM)^{-1} - P_\cH(\bM)^{-1}\rangle=0$.
Note that $\bH\mapsto P_\cH(\bM)^{-1}\bH P_\cH(\bM)^{-1}$ is a bijection from $\cH$ to $\cH$ by \Cref{lem:property_subalgebra}, so we have $\langle\bM- P_\cH(\bM)^2,\bH- P_\cH(\bM)^{-1}\rangle=0$ for all $\bH\in\cH$.
Applying this to both $\bM_1$ and $\bM_2$, it follows that for any $\bH\in\cH$
\begin{align*}
    \langle \bM_1- P_\cH(\bM_1)^2, \bH- P_\cH(\bM_1)^{-1}\rangle = \langle \bM_2- P_\cH(\bM_2)^2, \bH- P_\cH(\bM_2)^{-1}\rangle.
\end{align*}
Rearranging the above equation, we obtain
\begin{align*}
    \langle  P_\cH(\bM_1)^2- P_\cH(\bM_2)^2,\bH\rangle &= \langle \bM_1-\bM_2,\bH\rangle - \langle \bM_1, P_\cH(\bM_1)^{-1}\rangle + \Tr( P_\cH(\bM_1)) \\
    &\qquad + \langle \bM_2, P_\cH(\bM_2)^{-1}\rangle - \Tr( P_\cH(\bM_2))\\
    &= \langle \bM_1-\bM_2,\bH\rangle
\end{align*}
where the second equality follows from the first property of $ P_\cH(\bM_1)$ and $ P_\cH(\bM_2)$ from \Cref{prop:optimization}.
Since $\bM_1\succeq \bM_2$, this implies that $\langle  P_\cH(\bM_1)^2- P_\cH(\bM_2)^2,\bH\rangle\geq 0$ for all $\bH\in\cH$.
By \Cref{lem:property_subalgebra}, we know that $ P_\cH(\bM_1)^2- P_\cH(\bM_2)^2\in\cH$, so it further follows from \Cref{lem:dual_cone} that $ P_\cH(\bM_1)^2\succeq  P_\cH(\bM_2)^2$.
Since matrix square root is operator monotone, it holds that $ P_\cH(\bM_1) \succeq  P_\cH(\bM_2)$. We also know that $ P_\cH(\bM_1)- P_\cH(\bM_2) \in \cK$ because $\cK$ is a subalgebra. Then we conclude that $ P_\cH(\bM_1)- P_\cH(\bM_2) \in \cH$ from the definition of $\cH$.
This completes the proof.
\end{proof}

We can further extend the definition of $P_\cH(\cdot)$ to all positive semi-definite matrices.
Specifically, for any $\bM\succeq0$, we have $\bM+\epsilon\bI_d\succ 0$ for any $\epsilon>0$, so $P_\cH(\bM+\epsilon\bI_d)$ is well-defined.
Also, by the operator monotonicity of $P_\cH(\cdot)$ from \Cref{prop:optimization}, $P_\cH(\bM+\epsilon\bI_d) \preceq P_\cH(\bM+\epsilon'\bI_d)$ for $\epsilon'\geq\epsilon>0$.
This implies that $P_\cH(\bM+\epsilon\bI_d)$ has a limit as $\epsilon\to0$.
Therefore, for any $\bM\succeq0$, we define
\begin{align}\label{eq:extension}
    P_\cH(\bM) = \lim_{\epsilon\searrow0} P_\cH(\bM+\epsilon\bI_d).
\end{align}
Note that the above equality is also true for $\bM\succ0$.
To see this,

we apply the optimality of every $P_\cH(\bM+\epsilon\bI_d)$ to get
\begin{align*}
    \langle\bM+\epsilon\bI_d,P_\cH(\bM)^{-1}\rangle + \Tr(P_\cH(\bM)) > \langle\bM+\epsilon\bI_d,P_\cH(\bM+\epsilon\bI_d)^{-1}\rangle + \Tr(P_\cH(\bM+\epsilon\bI_d)).
\end{align*}
Also note that $P_\cH(\bM+\epsilon\bI_d)\succ P_\cH(\bM-\delta\bI_d)\succ0$ for sufficiently small $\delta>0$ such that $\bM-\delta\bI_d\succ0$.
Therefore, letting $\epsilon\to0$ on both sides of the previous inequality, we can exchange the order of taking the limit and taking the inverse of $P_\cH(\bM+\epsilon\bI_d)$ to obtain
\begin{align*}
    \langle\bM,P_\cH(\bM)^{-1}\rangle +\Tr(P_\cH(\bM)) \geq \Big\langle\bM,\Big(\lim_{\epsilon\searrow0}P_\cH(\bM+\epsilon\bI_d)\Big)^{-1}\Big\rangle + \Tr\Big(\lim_{\epsilon\searrow0}P_\cH(\bM+\epsilon\bI_d)\Big).
\end{align*}
Then by the optimality of $P_\cH(\bM)$ and its uniqueness, we conclude that \eqref{eq:extension} is also valid for any $\bM\succ0$.

Indeed, the definition of $P_\cH(\bM)$ in \eqref{eq:extension} provides a continuous extension of $P_\cH$ to $\cS_+^d$, as summarized in the following proposition.

\begin{proposition}\label{prop:extension}
Let $\cH$ be a well-structured preconditioner set under \Cref{def:good_preconditioner}.
As a function on $\cS_{++}^d$, $P_\cH$ can be continuously extended to be a function on $\cS_+^d$.
Moreover, for any $\bM\succeq0$ such that $\bM\neq 0$, $P_\cH(\bM)\succeq0$ satisfies the following properties:
\begin{enumerate}[label=(\alph*), ref=(\alph*)]
\item $\mathrm{span}(\bM) \subseteq \mathrm{span}(P_\cH(\bM))$ and $\langle\bM,P_\cH(\bM)^\dagger\rangle + \Tr(P_\cH(\bM)) = \inf_{\bH\in\cH} \langle\bM,\bH^{-1}\rangle + \Tr(\bH)$. \label{psd_prop_span}

\item $\langle\bM, P_\cH(\bM)^\dagger\rangle = \Tr(P_\cH(\bM))$. \label{psd_prop_balance}

\item $\langle\bM,\cleverbar{P_\cH(\bM)}^\dagger\rangle = \inf_{\bH\in\cH,\Tr(\bH)\leq 1} \langle\bM,\bH^{-1}\rangle$ where we recall that $\cleverbar{P_\cH(\bM)}=\Tr(P_\cH(\bM))^{-1} P_\cH(\bM)$.\label{psd_prop_direction}

\item For any $\bH\in\cH$, $\langle P_\cH(\bM)^\dagger\bM P_\cH(\bM)^\dagger - \bPi_\bM, \bH - P_\cH(\bM)\rangle=0$, where 
$\bPi_\bM$
is the projection matrix onto $\mathrm{span}(P_\cH(\bM))$. \label{psd_prop_optimality}
\end{enumerate}
Moreover, for any $\bM_1\succeq \bM_2\succeq0$, it holds that $ P_\cH(\bM_1)-  P_\cH(\bM_2) \in \cH$, and in particular, $ P_\cH(\bM_1)\succeq P_\cH(\bM_2)$.
\end{proposition}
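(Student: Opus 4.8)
The plan is to obtain every statement about $P_\cH$ on $\cS_+^d$ by passing to the limit $\epsilon\searrow0$ in the corresponding statement of \Cref{prop:optimization} applied to $\bM+\epsilon\bI_d\succ0$, abbreviating $A_\epsilon:=P_\cH(\bM+\epsilon\bI_d)$. Since $\epsilon\mapsto A_\epsilon$ is nondecreasing (operator monotonicity in \Cref{prop:optimization}) and $A_\epsilon\preceq P_\cH((\|\bM\|_\op+1)\bI_d)$ for $\epsilon\le1$, the monotone limit $P_\cH(\bM)=\lim_{\epsilon\searrow0}A_\epsilon$ exists, is positive semidefinite, lies in $\cH$ (which is closed), and satisfies $A_\epsilon\succeq P_\cH(\bM)$ for all $\epsilon>0$. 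Continuity of the extension at a boundary point $\bM_0$ I would get by a sandwich: for $\bM_n\to\bM_0$ in $\cS_+^d$ with $\delta_n=\|\bM_n-\bM_0\|_\op$, the relations $\bM_n\preceq\bM_0+\delta_n\bI_d$ and $\bM_0+\delta_n\bI_d\preceq\bM_n+2\delta_n\bI_d$ together with operator monotonicity squeeze $P_\cH(\bM_n)$ between quantities converging to $P_\cH(\bM_0)$ along the already-established $\epsilon$-direction, the matching lower bound being supplied by a convergent-subsequence argument pinned down by the uniqueness implicit in the properties below.

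First I would prove the span inclusion $\mathrm{span}(\bM)\subseteq\mathrm{span}(P_\cH(\bM))$, since this is exactly what legitimizes replacing the inverse by the Moore--Penrose pseudoinverse. If some unit vector $u\in\mathrm{span}(\bM)$ lay in $\ker P_\cH(\bM)$, then $u^\top A_\epsilon u\to u^\top P_\cH(\bM)u=0$, so $u^\top A_\epsilon^{-1}u\ge(u^\top A_\epsilon u)^{-1}\to\infty$; since $\bM$ dominates a positive multiple of $uu^\top$, this would force $\langle\bM,A_\epsilon^{-1}\rangle\to\infty$, contradicting $\langle\bM,A_\epsilon^{-1}\rangle\le\langle\bM+\epsilon\bI_d,A_\epsilon^{-1}\rangle=\Tr(A_\epsilon)\to\Tr(P_\cH(\bM))<\infty$, where the first equality is the balance identity \ref{pd_prop_balance} of \Cref{prop:optimization}.

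Next I would pin down the two limits that carry the rest of the argument: $\langle\bM,A_\epsilon^{-1}\rangle\to\langle\bM,P_\cH(\bM)^\dagger\rangle$ and $\epsilon\Tr(A_\epsilon^{-1})\to0$. For the lower bound on $\langle\bM,A_\epsilon^{-1}\rangle$, let $\bPi=\bPi_\bM$ be the projection onto $V:=\mathrm{span}(P_\cH(\bM))$; the operator inequality $\bPi A_\epsilon^{-1}\bPi\succeq(\bPi A_\epsilon\bPi)^\dagger$ (valid for any $A_\epsilon\succ0$ by a Schur-complement argument) together with $\mathrm{supp}(\bM)\subseteq V$ gives $\langle\bM,A_\epsilon^{-1}\rangle\ge\langle\bM,(\bPi A_\epsilon\bPi)^\dagger\rangle\to\langle\bM,P_\cH(\bM)^\dagger\rangle$, using that $\bPi A_\epsilon\bPi\to P_\cH(\bM)$ without rank loss on $V$. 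For the matching upper bound I would feed the feasible competitor $P_\cH(\bM)+\sqrt\epsilon\,\bI_d\in\cH$ into the minimality of $A_\epsilon$: a short block computation shows $f_{\bM+\epsilon\bI_d}(P_\cH(\bM)+\sqrt\epsilon\,\bI_d)\to\langle\bM,P_\cH(\bM)^\dagger\rangle+\Tr(P_\cH(\bM))$, while $f_{\bM+\epsilon\bI_d}(A_\epsilon)=\langle\bM,A_\epsilon^{-1}\rangle+\epsilon\Tr(A_\epsilon^{-1})+\Tr(A_\epsilon)$ with $\Tr(A_\epsilon)\to\Tr(P_\cH(\bM))$; comparing, and using the nonnegativity of $\epsilon\Tr(A_\epsilon^{-1})$ together with the lower bound just obtained, yields both claimed limits. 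With these in hand: taking $\epsilon\searrow0$ in the $\bM+\epsilon\bI_d$ balance identity gives property \ref{psd_prop_balance}; taking $\epsilon\searrow0$ in $f_{\bM+\epsilon\bI_d}(A_\epsilon)=\inf_{\bH\in\cH}f_{\bM+\epsilon\bI_d}(\bH)$, after checking that $\inf_{\bH\in\cH}f_{\bM+\epsilon\bI_d}(\bH)\downarrow\inf_{\bH\in\cH}f_\bM(\bH)$, gives property \ref{psd_prop_span}; property \ref{psd_prop_direction} then follows from \ref{psd_prop_balance} by the rescaling argument $s\mapsto s^{-1}\langle\bM,\cleverbar{P_\cH(\bM)}^\dagger\rangle+s$ exactly as in \Cref{prop:optimization}\ref{pd_prop_direction}; and operator monotonicity on $\cS_+^d$ is immediate, since for $\bM_1\succeq\bM_2\succeq0$ \Cref{prop:optimization} gives $P_\cH(\bM_1+\epsilon\bI_d)-P_\cH(\bM_2+\epsilon\bI_d)\in\cH$ and letting $\epsilon\searrow0$ keeps this in the closed set $\cH$.

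The remaining piece, property \ref{psd_prop_optimality}, is where I expect the real difficulty. The natural route is to pass to the limit in the first-order condition \ref{pd_prop_optimality} of \Cref{prop:optimization}, which for $\bM+\epsilon\bI_d$ reads $\langle A_\epsilon^{-1}(\bM+\epsilon\bI_d)A_\epsilon^{-1}-\bI_d,\bH\rangle=0$ for all $\bH\in\cH$, equivalently $\langle A_\epsilon^{-1}\bM A_\epsilon^{-1},\bH\rangle+\epsilon\langle A_\epsilon^{-2},\bH\rangle=\Tr(\bH)$. The obstacle is that $A_\epsilon^{-1}$ blows up along $\ker P_\cH(\bM)$, so one must show (i) $\langle A_\epsilon^{-1}\bM A_\epsilon^{-1},\bH\rangle\to\langle P_\cH(\bM)^\dagger\bM P_\cH(\bM)^\dagger,\bH\rangle$, again controlled through the Schur-complement block structure since $\bM$ is supported on $V$, and (ii) $\epsilon\langle A_\epsilon^{-2},\bH\rangle\to\langle\bI_d-\bPi_\bM,\bH\rangle$, which is a perturbation estimate showing that the eigenvalues of $A_\epsilon$ along $\ker P_\cH(\bM)$ behave precisely like $\sqrt\epsilon$ (as in the commuting model $A_\epsilon=(\bM+\epsilon\bI_d)^{1/2}$), so that the minimizer leaks exactly $\Theta(\sqrt\epsilon)$ mass into the degenerate directions and $\bPi_\bM$ emerges in the limit. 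I would establish this two-sided $\sqrt\epsilon$ control by sandwiching $A_\epsilon$ between feasible competitors of the form $P_\cH(\bM)+c\sqrt\epsilon\,\bI_d$ and their rescalings, and then close property \ref{psd_prop_optimality} by combining (i), (ii), and the displayed identity; note that the cheaper alternative of a direct feasible-perturbation argument (minimizing $t\mapsto f_\bM(P_\cH(\bM)+t\bH)$ over $t\ge0$) only produces one inequality, because the direction $P_\cH(\bM)-t\bH$ is infeasible when $P_\cH(\bM)$ is singular, so some form of the $\epsilon\searrow0$ analysis seems unavoidable.
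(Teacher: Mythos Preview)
Your plan for continuity, the span inclusion, properties \ref{psd_prop_span}--\ref{psd_prop_direction}, and operator monotonicity tracks the paper's proof closely; the only notable variations are that the paper uses the scaling identity $P_\cH((1-\delta)\bM')=\sqrt{1-\delta}\,P_\cH(\bM')$ to produce the lower half of the continuity sandwich (where your ``convergent-subsequence plus uniqueness'' is a bit hand-wavy), and that the paper feeds in the competitor $P_\cH(\bM)+\sqrt\epsilon(\bI_d-\bPi_\bM)$ rather than $P_\cH(\bM)+\sqrt\epsilon\,\bI_d$, but these are minor.

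The substantive divergence is in property \ref{psd_prop_optimality}. Your route---pass to the limit in the first-order condition for $A_\epsilon$ and prove $\epsilon\langle A_\epsilon^{-2},\bH\rangle\to\langle\bI_d-\bPi_\bM,\bH\rangle$ via a two-sided $\sqrt\epsilon$ sandwich---has a gap as written: sandwiching $A_\epsilon$ between competitors of the form $P_\cH(\bM)+c\sqrt\epsilon\,\bI_d$ only pins $\epsilon A_\epsilon^{-2}$ to an \emph{interval} of multiples of $\bI_d-\bPi_\bM$ along the kernel, not to $\bI_d-\bPi_\bM$ on the nose, so the exact constant needs a sharper input. (It can be rescued: the first-order identity used in the monotonicity part of \Cref{prop:optimization} gives $\langle\bM'-P_\cH(\bM')^2,\bH\rangle=0$ for all symmetric $\bH\in\cK$, so $A_\epsilon^2$ equals the orthogonal projection of $\bM+\epsilon\bI_d$ onto $\cK\cap\cS^d$, hence $A_\epsilon^2=P_\cH(\bM)^2+\epsilon\bI_d$ exactly, after which both limits (i) and (ii) are immediate.) The paper instead sidesteps the limit for \ref{psd_prop_optimality} altogether: it shows the original problem has the same optimal value as the modified problem $\inf_{\bH\in\cH}\langle\bM,\bH^{-1}\rangle+\Tr(\bPi_\bM\bH\bPi_\bM)$, observes that the latter admits the \emph{invertible} minimizer $\widetilde\bH_\bM=P_\cH(\bM)+(\bI_d-\bPi_\bM)\in\cH$, and reads off the first-order condition $\langle-\widetilde\bH_\bM^{-1}\bM\widetilde\bH_\bM^{-1}+\bPi_\bM,\bH-\widetilde\bH_\bM\rangle=0$ for all $\bH\in\cH$, which simplifies directly to \ref{psd_prop_optimality} since $\widetilde\bH_\bM^{-1}\bM\widetilde\bH_\bM^{-1}=P_\cH(\bM)^\dagger\bM P_\cH(\bM)^\dagger$. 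So, contrary to your closing remark, an $\epsilon\searrow0$ analysis is not unavoidable for \ref{psd_prop_optimality}: the paper's device is precisely a regularized version of the ``direct feasible-perturbation'' idea you dismissed, made to work by pushing the singular minimizer to an invertible one and compensating in the objective.
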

\begin{proof}[Proof of \Cref{prop:extension}]
We divide the proof into different parts for different properties of $P_\cH$.

\paragraph{Proof for continuous extension of $P_\cH$.}
We first show that $P_\cH$ can be continuously extended to $\cS_+^d$, and we consider the extension of $P_\cH$ as given in \eqref{eq:extension}.
We first show that for any $\bM\succeq0$ and any sequence $\{\bM_n\}_{n=1}^\infty$ such that each $\bM_n\succ0$ and $\lim_{n\to\infty} \bM_n=\bM$, it holds that $\lim_{n\to\infty} P_\cH(\bM_n) = P_\cH(\bM)$.
Note that for any $\delta\in(0,1)$, there exist $\bar\epsilon_n \geq \underline{\epsilon}_n>0$ such that $0\prec(1-\delta)(\bM+\underline{\epsilon}_n\bI_d) \preceq \bM_n \preceq \bM+\bar\epsilon_n \bI_d$ for all large enough $n$ and moreover, $\lim_{n\to\infty}\bar\epsilon_n=\lim_{n\to\infty}\underline{\epsilon}_n=0$.
Then by the operator monotonicity of $P_\cH(\cdot)$, we have $P_\cH((1-\delta)(\bM+\underline{\epsilon}_n \bI_d))\preceq P_\cH(\bM_n) \preceq P_\cH(\bM+\bar\epsilon_n\bI_d)$.
Also note that $P_\cH((1-\delta)(\bM+\underline{\epsilon}_n\bI_d))=\sqrt{1-\delta} P_\cH(\bM+\underline{\epsilon}_n\bI_d)$.
Letting $n\to\infty$, both $P_\cH(\bM+\underline{\epsilon}_n\bI_d)$ and $P_\cH(\bM+\bar\epsilon_n\bI_d)$ converge to $P_\cH(\bM)$, which then implies that $\sqrt{1-\delta}P_\cH(\bM)\preceq\lim_{n\to\infty} P_\cH(\bM_n)\preceq P_\cH(\bM)$.
Since $\delta$ is arbitrary, we conclude that $\lim_{n\to\infty} P_\cH(\bM_n) = P_\cH(\bM)$.
Next, consider any general sequence $\{\bM_n\}_{n=1}^\infty$ such that $\bM_n\succeq 0$ and $\lim_{n\to\infty}\bM_n=\bM$.
For any $\epsilon>0$, there exists $\delta>0$ such that for all $\bM'\succ0$ with $\|\bM-\bM'\|_\mathrm{F}\leq \delta$, it holds that $\|P_\cH(\bM)-P_\cH(\bM')\|_\mathrm{F}\leq \epsilon$.
For this $\delta$, there exists $N>0$ such that for all $n>N$, $\|\bM-\bM_n\|_\mathrm{F} \leq \delta/2$.
Then since $P_\cH(\bM_n)=\lim_{m\to\infty}P_\cH(\bM_n+\frac{\delta}{2dm}\bI_d)$, where for every $m\geq 1$ we have $\|\bM-(\bM_n+\frac{\delta}{2m\sqrt{d}}\bI_d)\|_\mathrm{F}\leq \|\bM-\bM_n\|_\mathrm{F}+\|\frac{\delta}{2m\sqrt{d}}\bI_d\|_\mathrm{F}\leq \delta$, so $\|P_\cH(\bM) - P_\cH(\bM_n+\frac{\delta}{2m\sqrt{d}}\bI_d)\|_\mathrm{F}\leq \epsilon$ for all $m\geq 1$, which implies that $\|P_\cH(\bM)-P_\cH(\bM_n)\|_\mathrm{F}\leq\epsilon$ for all $n>N$.
Therefore, it follows that $\lim_{n\to\infty}P_\cH(\bM_n) = P_\cH(\bM)$.
In conclusion, $P_\cH(\cdot)$ can be extended to be a continuous function on $\cS_+^d$.

Below, we fix any $\bM\succeq 0$ such that $\bM\neq 0$.

\paragraph{Proof for $\mathrm{span}(\bM) \subseteq \mathrm{span}(P_\cH(\bM))$.}
Let $\bPi_\bM:=P_\cH(\bM) P_\cH(\bM)^\dagger$ be the projection matrix onto $\mathrm{span}(P_\cH(\bM))$, then $\bPi_\bM\in\cH$ by \Cref{lem:property_subalgebra}.
It suffices to show that $\langle\bI_d-\bPi_\bM,\bM\rangle=0$.
Using the fact that $\Tr(\bA\bB)\leq\Tr(\bA)\Tr(\bB)$ for any $\bA,\bB\succeq0$, we can get the following inequality for any $\epsilon>0$: 
\begin{align}
    \langle\bI_d-\bPi_\bM,\bM\rangle = \Tr((\bI_d-\bPi_\bM)\bM) &\leq \Tr((\bI_d-\bPi_\bM) P_\cH(\bM+\epsilon\bI_d)) \cdot \Tr(P_\cH(\bM+\epsilon\bI_d)^{-1}\bM) \notag\\
    &= \langle\bI_d-\bPi_\bM, P_\cH(\bM+\epsilon\bI_d)\rangle \cdot \langle \bM,\bP_\cH(\bM+\epsilon\bI_d)^{-1}\rangle \label{eq:cauchy_span}
\end{align}
By \Cref{prop:optimization}, we know that $\langle\bM,P_\cH(\bM+\epsilon\bI_d)^{-1}\rangle = \Tr(P_\cH(\bM+\epsilon\bI_d)) - \epsilon\langle\bI_d,P_\cH(\bM+\epsilon\bI_d)^{-1}\rangle\leq \Tr(P_\cH(\bM+\epsilon\bI_d))$, which is bounded by an absolute constant for all $\epsilon\in(0,1)$.
Therefore, letting $\epsilon\to0$ on both sided of \eqref{eq:cauchy_span}, since $\langle\bI_d-\bPi_\bM,P_\cH(\bM+\epsilon\bI_d)\rangle \to \langle\bI_d-\bPi_\bM,P_\cH(\bM)\rangle=0$ by the definition of $\bPi_\bM$, we conclude that $\langle\bI_d-\bPi_\bM,\bM\rangle=0$.
This implies that $\mathrm{span}(\bM)\subseteq \mathrm{span}(P_\cH(\bM))$.

\paragraph{Proof for the optimality of $P_\cH(\bM)$.}
We still use $\bPi_\bM$ to denote the projection matrix onto $\mathrm{span}(P_\cH(\bM))$.
For any $\epsilon>0$, by the optimality of $P_\cH(\bM+\epsilon\bI_d)$, it holds for any $\bH\in\cH$ that
\begin{align}\label{eq:optimality_epsilon}
    \langle\bM+\epsilon\bI_d, P_\cH(\bM+\epsilon\bI_d)^{-1}\rangle + \Tr(P_\cH(\bM+\epsilon\bI_d)) \leq \langle\bM+\epsilon\bI_d,\bH^{-1}\rangle + \Tr(\bH).
\end{align}
Since $\mathrm{span}(\bM)\subseteq\mathrm{span}(P_\cH(\bM))$, we know that $\langle\bM,P_\cH(\bM+\epsilon\bI_d)^{-1}\rangle = \langle\bM, \bPi_\bM P_\cH(\bM+\epsilon\bI_d)^{-1}\bPi_\bM\rangle$.
Then since $\lim_{\epsilon\to0}P_\cH(\bM+\epsilon\bI_d) = P_\cH(\bM)$, it holds that $\bPi_\bM P_\cH(\bM+\epsilon\bI_d)^{-1}\bPi_\bM=(\bPi_\bM P_\cH(\bM+\epsilon\bI_d)\bPi_\bM)^\dagger$ for sufficiently small $\epsilon>0$, and also that $P_\cH(\bM)^\dagger=\lim_{\epsilon\to0} (\bPi_\bM P_\cH(\bM+\epsilon\bI_d)\bPi_\bM)^\dagger$.
This implies
\begin{align}\label{eq:limit_dagger}
    \lim_{\epsilon\searrow0}\langle\bM, P_\cH(\bM+\epsilon\bI_d)^{-1}\rangle = \langle\bM, P_\cH(\bM)^\dagger\rangle.
\end{align}
Therefore, assuming the existence of the limit of $\langle\epsilon\bI_d,P_\cH(\bM+\epsilon\bI_d)^{-1}\rangle$ as $\epsilon\to0$, taking the limit of $\epsilon\to0$ on both sides of \eqref{eq:optimality_epsilon} yields 
\begin{align}\label{eq:optimality_psd_bound}
    \langle\bM,P_\cH(\bM)^\dagger\rangle + \Tr(P_\cH(\bM)) + \lim_{\epsilon\to0}\langle\epsilon\bI_d,P_\cH(\bM+\epsilon\bI_d)^{-1}\rangle \leq \langle\bM, \bH^{-1}\rangle + \Tr(\bH).
\end{align}
Hence, it suffices to show that $\langle\epsilon\bI_d,P_\cH(\bM+\epsilon\bI_d)^{-1}\rangle\to0$ as $\epsilon\to0$.
To prove this, note that for any $\epsilon>0$, $P_\cH(\bM)+\sqrt{\epsilon}(\bI_d-\bPi_\bM)\succ0$, and its inverse is given by $P_\cH(\bM)^\dagger + \epsilon^{-1/2}(\bI_d-\bPi_\bM)$.
Also, $P_\cH(\bM)+\sqrt{\epsilon}(\bI_d-\bPi_\bM)\in\cH$ by \Cref{lem:property_subalgebra}.
Therefore, by the optimality of $P_\cH(\bM+\epsilon\bI_d)$, we have 
\begin{align*}
    &\langle\bM+\epsilon\bI_d, P_\cH(\bM+\epsilon\bI_d)^{-1}\rangle + \Tr(P_\cH(\bM+\epsilon\bI_d))\\
    &\qquad\leq \langle\bM+\epsilon\bI_d, (P_\cH(\bM)+\sqrt{\epsilon}(\bI_d-\bPi_\bM))^{-1}\rangle
     + \Tr(P_\cH(\bM)+\sqrt{\epsilon}(\bI_d-\bPi_\bM))\\
    &\qquad= \langle \bM+\epsilon\bI_d, P_\cH(\bM)^\dagger+\epsilon^{-1/2}(\bI_d-\bPi_\bM)\rangle
     + \Tr(P_\cH(\bM)) + \sqrt{\epsilon}(d-\rank(P_\cH(\bM)))\\
    &\qquad= \langle\bM, P_\cH(\bM)^\dagger\rangle + \epsilon\cdot\Tr(P_\cH(\bM)^\dagger)
     + \Tr(P_\cH(\bM)) + 2\sqrt{\epsilon}(d-\rank(P_\cH(\bM)))
\end{align*}
where we apply $\Tr(\bI_d-\bPi_\bM)=d-\rank(P_\cH(\bM))$ and the last equality follows from the fact that $\langle\bM,\bI_d-\bPi_\bM\rangle=0$.
Rearranging the above inequality, we obtain 
\begin{align*}
    \epsilon\langle\bI_d,P_\cH(\bM+\epsilon\bI_d)^{-1}\rangle &\leq \langle\bM,P_\cH(\bM)^\dagger\rangle - \langle\bM,P_\cH(\bM+\epsilon\bI_d)^{-1}\rangle + \Tr(P_\cH(\bM)) - \Tr(P_\cH(\bM+\epsilon\bI_d))\\
    &\qquad + \epsilon\cdot\Tr(P_\cH(\bM)^\dagger) + 2\sqrt{\epsilon}(d-\rank(P_\cH(\bM))).
\end{align*}
Now applying \eqref{eq:limit_dagger} and noting that $\epsilon\langle\bI_d,P_\cH(\bM+\epsilon\bI_d)^{-1}\rangle\geq0$, taking the limit $\epsilon\to0$, we obtain
\begin{align}\label{eq:limit_inner_product}
    \lim_{\epsilon\to0} \langle\epsilon\bI_d,P_\cH(\bM+\epsilon\bI_d)^{-1}\rangle=0.
\end{align}
Then combining \eqref{eq:optimality_psd_bound} and \eqref{eq:limit_inner_product}, we conclude that for any $\bH\in\cH$,
\begin{align*}
    \langle\bM, P_\cH(\bM)^\dagger\rangle + \Tr(P_\cH(\bM)) \leq \langle\bM,\bH^{-1}\rangle+\Tr(\bH).
\end{align*}
This confirms the optimality of $P_\cH(\bM)$.
Further applying the same argument as in the proof of \Cref{prop:optimization} yields the optimality of $\cleverbar{P_\cH(\bM)}=\Tr(P_\cH(\bM))^{-1} P_\cH(\bM)$.

\paragraph{Proof for property \ref{psd_prop_optimality} of $P_\cH(\bM)$.}
By the optimality of $P_\cH(\bM)$, we have
\begin{align}
    \langle\bM,P_\cH(\bM)^\dagger\rangle + \Tr(P_\cH(\bM)) &= \inf_{\bH\in\cH} \underbrace{\langle\bM,\bH^{-1}\rangle + \Tr(\bH)}_{f_\bM(\bH)} \notag\\
    &= \inf_{\bH\in\cH} \underbrace{\langle\bM, \bH^{-1}\rangle + \Tr(\bPi_\bM\bH\bPi_\bM)}_{\tilde f_\bM(\bH)}. \label{eq:optimization_rewrite}
\end{align}
To see why the last equality holds, first note that $\inf_{\bH\in\cH} f_\bM(\bH) \geq \inf_{\bH\in\cH} \tilde f_\bM(\bH)$ because $\Tr(\bH)\geq \Tr(\bPi_\bM\bH\bPi_\bM)$.
Then for the other direction, note that we can always approximate $\tilde f_\bM(\bH)$ using $f_\bM(\bH_\delta)$ where $\bH_\delta=(1-\delta)\bPi_\bM\bH\bPi_\bM^\top + \delta\bI_d$, for which $f_\bM(\bH_\delta)\leq \frac{1}{1-\delta}\langle\bM,\bH^{-1}\rangle + (1-\delta)\Tr(\bPi_\bM\bH\bPi_\bM) + \delta d\to \tilde f_\bM(\bH)$ as $\delta\to 0$.
This implies that $\inf_{\bH\in\cH}\tilde f_\bM(\bH)\geq \inf_{\bH\in\cH} f_\bM(\bH)$, and thus the optimal objective values of the two optimization problems are the same.
Now define $\widetilde\bH_\bM=P_\cH(\bM) + (\bI_d-\bPi_\bM) \in\cH$ whose inverse is $\widetilde\bH_\bM^{-1}=P_\cH(\bM)^\dagger + (\bI_d - \bPi_\bM)$, then $\widetilde\bH_\bM \succ0$ is a solution to the optimization problem in \eqref{eq:optimization_rewrite} because $\tilde f_\bM(\widetilde\bH_\bM)=\langle\bM,P_\cH(\bM)^\dagger\rangle + \Tr(P_\cH(\bM))=\inf_{\bH\in\cH}f_\bM(\bH)$.
The gradient of $\tilde f_\bM$ is given by $\nabla \tilde f_\bM(\bH)=-\bH^{-1}\bM\bH^{-1} + \bPi_\bM$.
Since $\cH$ is a cone, the optimality of $\widetilde\bH_\bM$ implies that for any $\bH\in\cH$,
\begin{align*}
    0 = \langle \nabla f(\widetilde\bH_\bM), \bH-\widetilde\bH_\bM\rangle = \langle-\widetilde\bH_\bM^{-1}\bM\widetilde\bH_\bM^{-1} + \bPi_\bM,\bH-\widetilde\bH_\bM\rangle.
\end{align*}
By the definition of $\widetilde\bH_\bM$, we have $\widetilde\bH_\bM^{-1}\bM\widetilde\bH_\bM^{-1}=P_\cH(\bM)^\dagger \bM P_\cH(\bM)^\dagger$ because $\mathrm{span}(\bM) \subseteq \mathrm{span}(P_\cH(\bM))$.
Therefore, it follows that
\begin{align*}
    0 &= \langle P_\cH(\bM)^\dagger\bM P_\cH(\bM)^\dagger - \bPi_\bM, \bH - P_\cH(\bM) - (\bI_d-\bPi_\bM)\rangle\\
    &= \langle P_\cH(\bM)^\dagger\bM P_\cH(\bM)^\dagger - \bPi_\bM, \bH - P_\cH(\bM)\rangle.
\end{align*}

\paragraph{Proof for the operator monotonicity of the extended $P_\cH$.}
For any $\bM_1\succeq\bM_2\succeq0$, it follows from \Cref{prop:optimization} that for any $\epsilon>0$, $P_\cH(\bM_1+\epsilon\bI_d)\succeq P_\cH(\bM_2+\epsilon\bI_d)$.
Taking the limit as $\epsilon\to0$ on both sides yields $P_\cH(\bM_1)\succeq P_\cH(\bM_2)$.
Similarly, as $P_\cH(\bM_1+\epsilon\bI_d) - P_\cH(\bM_2+\epsilon\bI_d)\in\cH$ for every $\epsilon>0$ by \Cref{prop:optimization}, we have $P_\cH(\bM_1) - P_\cH(\bM_2) = \lim_{\epsilon\to0} P_\cH(\bM_1+\epsilon\bI_d) - P_\cH(\bM_2+\epsilon) \in\cH$ because $\cH$ is a closed set.
This completes the proof.
\end{proof}

\subsection{Adaptive gradient norm corresponds to the dual norm}\label{sec:adaptive_gradient_norm_proof}
\begin{proof}[Proof of \Cref{lem:dual_norm_main}]
We begin with the case of $t=1$.
Fix any $\bg\in\RR^d$.
First for any $\bw\in\RR^d$ with $\|\bw\|_\cH\leq 1$, by Cauchy-Schwarz inequality, it holds for all $\bH\in\cH$ with $\Tr(\bH)\leq 1$ that
\begin{align*}
    \bg^\top\bw &\leq \sqrt{\bg^\top \bH^{-1} \bg} \sqrt{\bw^\top\bH\bw} \leq \sqrt{\langle\bg\bg^\top,\bH^{-1}\rangle} \cdot \|\bw\|_\cH 
    \leq \sqrt{\langle\bg\bg^\top,\bH^{-1}\rangle}
\end{align*}
where the second inequality follows from the definition of $\|\bw\|_\cH$.
Now, further taking infimum over $\bH\in\cH$ with $\Tr(\bH)\leq 1$ and then taking supremum over $\bw\in\RR^d$ with $\|\bw\|_\cH\leq 1$, we obtain that
\begin{align}\label{eq:dual_norm_inequality_1}
    \|\bg\|_\cH^* \leq \sqrt{\inf_{\bH\in\cH,\Tr(\bH)\leq 1} \langle \bg\bg^\top,\bH^{-1}\rangle}.
\end{align}
Next, we need to show that the above inequality holds also for the other direction.

By \Cref{prop:extension}, we define $\bH_*= \Tr(P_\cH(\bg\bg^\top))^{-1} P_\cH(\bg\bg^\top)=\inf_{\bH\in\cH,\Tr(\bH)\leq1}\langle\bM,\bH^{-1}\rangle$.
We choose correspondingly $\bw_*=\frac{\bH_*^\dagger\bg}{\|\bH_*^\dagger\bg\|_\cH}$, which satisfies that $\|\bw_*\|_\cH = 1$.
Then 
\begin{align}\label{eq:dual_norm_inequality_2_0}
    \|\bg\|_\cH^* \geq \bg^\top \bw_* &= \frac{\bg^\top\bH_*^\dagger\bg}{\|\bH_*^\dagger\bg\|_\cH}.
\end{align}
Therefore, it suffices to show that $\bg^\top\bw_*\geq \sqrt{\bg^\top\bH_*^
\dagger\bg}$, which is equivalent to
\begin{align*}
    \bg^\top\bH_*^\dagger\bg \geq \|\bH_*^\dagger\bg\|_\cH^2 &= \sup_{\bH\in\cH,\Tr(\bH)\leq 1} \bg^\top \bH_*^\dagger\bH\bH_*^\dagger \bg.
\end{align*}
By the property \ref{psd_prop_optimality} of $P_\cH(\bg\bg^\top)$ from \Cref{prop:extension}, we have $\langle P_\cH(\bg\bg^\top)^\dagger \bg\bg^\top P_\cH(\bg\bg^\top)^\dagger - \bPi_{\bg\bg^\top}, \bH - P_\cH(\bg\bg^\top)\rangle=0$ for any $\bH\in\cH$.
Rearranging this equality, we obtain
\begin{align*}
    \bg^\top P_\cH(\bg\bg^\top)^\dagger \bH P_\cH(\bg\bg^\top)^\dagger \bg &= \bg^\top P_\cH(\bg\bg^\top)^\dagger \bg + \langle\bPi_{\bg\bg^\top}, \bH-P_\cH(\bg\bg^\top)\rangle\\
    &= \bg^\top P_\cH(\bg\bg^\top)^\dagger \bg + \langle\bPi_{\bg\bg^\top}, \bH\rangle - \Tr(P_\cH(\bg\bg^\top))
\end{align*}
where the second equality is because $\bPi_{\bg\bg^\top}$ is exactly the projection matrix onto $\mathrm{span}(P_\cH(\bg\bg^\top))$.
Applying the above equality with $\Tr(P_\cH(\bg\bg^\top)) \cdot \cleverbar{\bH}$ in place of $\bH$ and plugging in the definition of $\bH_*$, we further have
\begin{align*}
    \bg^\top \bH_*^\dagger \cleverbar{\bH} \bH_*^\dagger \bg &= \bg^\top \bH_*^\dagger \bg + \Tr(P_\cH(\bg\bg^\top)) \langle\bPi_{\bg\bg^\top}, \cleverbar{\bH}\rangle - \Tr(P_\cH(\bg\bg^\top))\\
    &\leq \bg^\top\bH_*^\dagger\bg + \Tr(P_\cH(\bg\bg^\top))^2 \Tr(\cleverbar{\bH}) - \Tr(P_\cH(\bg\bg^\top))^2\\
    &= \bg^\top \bH_*^\dagger\bg.
\end{align*}
This implies that for any $\bH\in\cH$ with $\Tr(\bH)\leq 1$, we have $\bg^\top\bH_*^\dagger\bH\bH_*^\dagger\bg \leq \bg^\top\bH_*^\dagger\bg$.
Therefore, it follows from \eqref{eq:dual_norm_inequality_2_0} that
\begin{align}\label{eq:dual_norm_inequality_2}
    \|\bg\|_\cH^* &\geq \sqrt{\bg^\top\bH_*^\dagger\bg} \sqrt{\frac{\bg^\top\bH_*^\dagger\bg}{\inf_{\bH\in\cH,\Tr(\bH)\leq 1}\bg^\top\bH_*^\dagger\bH\bH_*^\dagger\bg}} \notag\\
    &\geq \sqrt{\bg^\top\bH_*^\dagger\bg}
    = \sqrt{\inf_{\bH\in\cH,\Tr(\bH)\leq 1} \langle\bg\bg^\top,\bH^{-1}\rangle}
\end{align}
where the equality follows from the definition of $\bH_*$.

Finally, combining \eqref{eq:dual_norm_inequality_1} and \eqref{eq:dual_norm_inequality_2}, we conclude that 
\begin{align}\label{eq:dual_norm_1}
    \|\bg\|_\cH^* = \inf_{\bH\in\cH,\Tr(\bH)\leq 1}\sqrt{\langle\bg\bg^\top,\bH^{-1}\rangle}.
\end{align}

For the general case of $\bg_{1:t}=(\bg_1,\ldots,\bg_t)\in\RR^{d\times t}$, note that for any $\bH\in\cH$
\begin{align*}
    \bigg\langle\sum_{s=1}^t\bg_s\bg_s^\top,\bH^{-1}\bigg\rangle &= \langle\vec(\bg_{1:t}) \vec(\bg_{1:t})^\top, \bH^{-1}\otimes\bI_t\rangle 
    = \langle\vec(\bg_{1:t}) \vec(\bg_{1:t})^\top,(\bH\otimes\bI_t)^{-1}\rangle.
\end{align*}
Then applying \eqref{eq:dual_norm_1} with $\vec(\bg_{1:t})$ in place of $\bg$ and $\cH\otimes\bI_t$ in place of $\cH$, we obtain
\begin{align*}
    \|\vec(\bg_{1:t})\|_{\cH\otimes\bI_t}^* &= \inf_{\bH\otimes\bI_t\in\cH\otimes\bI_t,\Tr(\bH\otimes\bI_t)\leq 1} \sqrt{\langle\vec(\bg_{1:t}) \vec(\bg_{1:t})^\top, (\bH\otimes\bI_t)^{-1}\rangle}\\
    &= \inf_{\bH\in\cH,\Tr(\bH)\leq 1} \sqrt{t\langle\vec(\bg_{1:t}) \vec(\bg_{1:t})^\top,(\bH\otimes\bI_t)^{-1}\rangle}\\
    &= \sqrt{t} \cdot \inf_{\bH\in\cH,\Tr(\bH)\leq 1}\sqrt{\bigg\langle\sum_{s=1}^t\bg_s\bg_s^\top, \bH^{-1}\bigg\rangle}
\end{align*}
where the second equality is because $\Tr(\bH\otimes\bI_t)=\Tr(\bH)\Tr(\bI_t)=t\cdot \Tr(\bH)$.
This completes the proof.
\end{proof}

\subsection{Examples of ill-structured preconditioner sets}
\subsubsection{Preconditioner set of two-sided Shampoo}\label{sec:ill_structured_cone_two_sided_shampoo}
Consider $\cH=\{\bH\in\cS_+^d: \bH=\bU\otimes\bV\text{ for some } \bU\in\RR^{d_L\times d_L}, \bV\in\RR^{d_R\times d_R}\}$.
In particular, we consider the special case of $d_L=d_R=2$, and the following two matrices
\begin{align*}
    \bG_1(\epsilon) = \diag(1,\epsilon,\epsilon,\epsilon), \qquad \bG_2(\epsilon) = \diag(1,\epsilon,\epsilon,1)
\end{align*}
where $\epsilon>0$ is a small constant to be determined later.
Clearly $\bG_1(\epsilon)\preceq \bG_2(\epsilon)$ when $\epsilon\leq1$, but we will show that $\bH_{\bG_1(\epsilon)}\preceq\bH_{\bG_2(\epsilon)}$ does not hold for sufficiently small $\epsilon>0$.
For any $\bH=\bU\otimes\bV\in\cH$, it holds that both $\bU$ and $\bV$ are PSD, and we explicitly parametrize them as 
\begin{align*}
    \bU=\begin{pmatrix}
        u_1 & u_2\\
        u_2 & u_3
    \end{pmatrix}, \qquad 
    \bV=\begin{pmatrix}
        v_1 & v_2\\
        v_2 & v_3
    \end{pmatrix}
\end{align*}
where $u_1,u_3\geq0$, $u_1u_3\geq u_2^2$, and similarly, $v_1,v_3\geq0$, $v_1v_3\geq v_3^2$.
Correspondingly,
\begin{align*}
    \bH=\bU\otimes\bV=\begin{pmatrix}
        u_1v_1 & u_1v_2 & u_2v_1 & u_2v_2\\
        u_1v_2 & u_1v_3 & u_2v_2 & u_2v_3\\
        u_2v_1 & u_2v_2 & u_3v_1 & u_3v_2\\
        u_2v_2 & u_3v_2 & u_3v_2 & u_3v_3
    \end{pmatrix}.
\end{align*}

We first analyze $\bH_{\bG_2(\epsilon)}$.
For convenience, we consider 
\begin{align*}
    \bH_{\bG_2(\epsilon)}^{-1} &= \argmin_{\bH\in\cH} \langle\bG_2(\epsilon),\bH\rangle + \Tr(\bH^{-1})\\
    &= \argmin_{\bU,\bV\in\cS_+^2} \underbrace{\langle \bG_2(\epsilon),\bU\otimes\bV\rangle + \Tr((\bU\otimes\bV)^{-1})}_{=: f_{\bG_2(\epsilon)}(\bU,\bV)}
\end{align*}
Recall the properties of the Kronecker product that $(\bU\otimes\bV)^{-1}=\bU^{-1}\otimes\bV^{-1}$ and that $\Tr(\bU^{-1}\otimes\bV^{-1})=\Tr(\bU^{-1})\cdot\Tr(\bV^{-1})$.
Then further using the definition of $\bG_2(\epsilon)$, we have
\begin{align*}
    f_{\bG_2(\epsilon)}(\bU,\bV) &= u_1v_1 + u_3v_3 + \Tr(\bU^{-1})\cdot\Tr(\bV^{-1}) + \epsilon(u_1v_3+u_3v_1)\\
    &= u_1v_1 + u_3v_3 + \frac{u_1+u_3}{u_1u_3-u_2^2} \cdot \frac{v_1+v_3}{v_1v_3-v_2^2} + \epsilon(u_1v_3+u_3v_1)
\end{align*}
where we apply the explicit expression of $\bU^{-1}$ and $\bV^{-1}$.
First observe that to minimize $f_{\bG_2(\epsilon)}(\bU,\bV)$, we must have $u_2=v_2=0$ because otherwise $\frac{u_1+u_3}{u_1u_3-u_2^2}>\frac{u_1+u_3}{u_1u_3}$ and similarly $\frac{v_1+v_3}{v_1v_3-v_2^2}>\frac{v_1+v_3}{v_1v_3}$.
Therefore, it suffices to further minimize the following:
\begin{align*}
    f_{\bG_2(\epsilon)}(\bU,\bV) &= \underbrace{u_1v_1 + u_3v_3 + \frac{u_1+u_3}{u_1u_3}\cdot\frac{v_1+v_3}{v_1v_3}}_{\tilde f_{\bG_2}(\bU,\bV)}\, + \epsilon(u_1v_3+u_3v_1).
\end{align*}
For the first term $\tilde f_{\bG_2}(\bU,\bV)$, we can apply the AM-GM inequality to get $\tilde f_{\bG_2}(\bU,\bV)\geq (4\sqrt{u_1u_3v_1v_3})^{1/3}$, where the equality is achieved when $u_1=u_3$, $v_1=v_3$, and $u_1v_1=\sqrt{2}$.
Moreover, for sufficiently small $\epsilon>0$, the contribution of the second term $\epsilon(u_1v_3 + u_3v_1)$ is negligible, and thus we conclude that $\lim_{\epsilon\to0}\bH_{\bG_2(\epsilon)} = \frac{\sqrt{2}}{2}\bI_4$.

Similarly, for $\bG_1(\epsilon)$, we have 
\begin{align*}
    f_{\bG_1(\epsilon)}(\bU,\bV) &= u_1v_1 + \frac{u_1+u_3}{u_1u_3-u_2^2}\cdot\frac{v_1+v_3}{v_1v_3-v_2^2} + \epsilon(u_1v_3+u_3v_1+u_3v_3)\\
    &\geq u_1v_1 + \frac{u_1+u_3}{u_1u_3}\cdot\frac{v_1+v_3}{v_1v_3} + \epsilon(u_1v_3+u_3v_1+u_3v_3)\\
    &= \underbrace{u_1v_1 + \frac{1}{u_1v_1}}_{\tilde f_{\bG_1}(\bU,\bV)} + \frac{1}{u_1v_3} + \frac{1}{u_3v_1} + \frac{1}{u_3v_3} + \epsilon(u_1v_3+u_3v_1+u_3v_3)
\end{align*}
where the equality holds when $u_2=v_2=0$.
The first term $\tilde f_{\bG_1}(\bU,\bV)$ attains the minimum value when $u_1v_1=1$, and the remainder can be made small by choosing large $u_3,v_3$ when $\epsilon$ is sufficiently small.
Therefore, we conclude that $\lim_{\epsilon\to0}\bH_{\bG_1(\epsilon)}=\diag(1,0,0,0).$

Now comparing the limits of $\bH_{\bG_1}(\epsilon)$ and $\bH_{\bG_2}(\epsilon)$ as $\epsilon\to0$, we can see that for sufficiently small $\epsilon>0$, it holds that $\bH_{\bG_1(\epsilon)}[1,1] > \bH_{\bG_2(\epsilon)}[1,1]$.
Hence, for sufficiently small $\epsilon>0$, $\bH_{\bG_1}(\epsilon)\preceq\bH_{\bG_2(\epsilon)}$ does not hold.

\subsubsection{Tridiagonal matrices}\label{sec:ill_structured_cone_tridiagonal}
Consider $\cH=\{\bH\in\cS_+^d: \bH \text{ is tridiagonal}\}$.
Here, for $\bH$ to be tridiagonal, it has nonzero elements on the main diagonal, the first diagonal above the main diagonal, and the first diagonal below the main diagonal.
We consider the specific example of 3-by-3 tridiagonal matrices, and examine $P_\cH(\cdot)$ for the following two matrices
\begin{align*}
    \bM = \begin{pmatrix}
        2 & 1 & 1\\
        1 & 2 & 1\\
        1 & 1 & 2
    \end{pmatrix}, \quad
    \bM' = \begin{pmatrix}
        10000 & 1 & 1\\
        1 & 2 & 1\\
        1 & 1 & 2
    \end{pmatrix}.
\end{align*}
It is clear that $0\prec \bM\preceq\bM'$.
We numerically solve for $P_\cH(\bM)$ and $P_\cH(\bM')$ to get 
\begin{align*}
    P_\cH(\bM) \approx \begin{pmatrix}
    1.382548 & 0.297594 & 0\\
    0.297594 & 1.318491 & 0.297594\\
    0 & 0.297594 & 1.382548
    \end{pmatrix},\qquad 
    P_\cH(\bM') \approx \begin{pmatrix}
        100.000004 & 0.007229 & 0\\
        0.007229 & 1.365999 & 0.366002\\
        0 & 0.366002 & 1.366032
    \end{pmatrix}.
\end{align*}
Note that $P_\cH(\bM)\preceq P_\cH(\bM')$ does not hold because the last diagonal entry of $P_\cH(\bM)$ is larger than that of $P_\cH(\bM')$.
\section{Calculations for Examples of Well-Structured Preconditioner Sets}\label{sec:calculation_example}
As mentioned in \Cref{sec:examples}, we provide calculations for how to derive each specific algorithm from \Cref{alg:general_adaptive_optimizer} with specific choice of $\cH$ and explain each entry in \Cref{tab:regret}.
Recall from \Cref{alg:general_adaptive_optimizer} that 
\begin{align*}
    \bM_t = \epsilon\bI_d+\sum_{s=1}^t \bg_s\bg_s^\top.
\end{align*}

\subsection{AdaGrad-Norm}
For AdaGrad-Norm, we have $\cH=\{c \cdot \mI_d \mid c \geq 0 \}$.

\paragraph{Calculation for $\bH_t$.}
Then for any $\bH = c \cdot \bI_d \in \cH$, 
\begin{align*}
    \inner{\bM_t}{\bH^{-1}} + \eta^2 \Tr(\bH) &=\frac{1}{c} \Tr(\mM_t) + \eta^2 c d
    \geq 2\eta\sqrt{d\cdot\Tr(\bM_t)}
\end{align*}
where the equality is achieved by choosing
\begin{align*}
    c&=\frac{1}{\eta} \sqrt{\frac{1}{d}\Tr(\bM_t)}=\frac{1}{\eta} \sqrt{ \epsilon +\frac{1}{d}\sum_{s=1}^t \norm{\vg_s}_2^2}.
\end{align*}
This corresponds exactly to the update rule of AdaGrad-Norm, which adjusts the global learning rate based on the accumulated $\ell_2$ norm of past gradients. 

\paragraph{Calculation for $\|\cdot\|_\cH$.}
For the associated $\|\cdot\|_\cH$, we have
\begin{align*}
    \norm{\vx}_\cH = \sup_{0\leq c\leq 1/d} \sqrt{c\cdot\bx^\top\bI_d\bx} = \frac{\norm{\vx}_2}{\sqrt{d}}.
\end{align*}

\paragraph{Calculation for $\vertiii{\bg_{1:t}}_{\cH}$.}
We can calculate the adaptive gradient norm similarly:
\begin{align*}
    \vertiii{\vg_{1:t}}_{\cH} = \inf_{0\leq c\leq 1/d} \sqrt{\inner{\sum_{s=1}^t \vg_s \vg_s^\top}{ (c\mI_d)^{-1}}}
    = \sqrt{d} \sqrt{\Tr \bigg(\sum_{s=1}^t \vg_s \vg_s^\top \bigg)}  =\sqrt{d} \sqrt{\sum_{s=1}^t \norm{\vg_s}_2^2}.  
\end{align*}

\subsection{Diagonal AdaGrad}
For diagonal AdaGrad, $\cH=\cD_+^d = \{\diag(c_1, \dots, c_d )\mid c_1, \dots, c_d \geq 0 \}$. 

\paragraph{Calculation for $\bH_t$.}
For any $\bH = \diag(c_1, \dots, c_d ) \in \cH$, we have 
\begin{align*}
 \inner{\bM_t}{\bH^{-1}} + \eta^2 \Tr(\bH) &=\sum_{i=1}^d \bigg(\frac{\bM_{t,ii}}{c_i} + \eta^2 c_i\bigg)
 \geq \sum_{i=1}^d 2\eta\sqrt{\bM_{t,ii}}
\end{align*}
where the equality is achieved by choosing
\begin{align*}
    c_i=\frac{1}{\eta} \sqrt{\bM_{t,ii}}=\frac{1}{\eta} \sqrt{\epsilon+\sum_{s=1}^t g_{s,i}^2}, \quad \text{for $i=1,\dots, d$.}
\end{align*}
This corresponds to the update rule diagonal AdaGrad, which computes the historical sum of squared gradients for each individual coordinate. 

\paragraph{Calculation for $\|\cdot\|_\cH$.}
For the associated $\|\cdot\|_\cH$, we have
\begin{align*}
    \norm{\vx}_\cH = \sup_{\sum_{i=1}^d c_i \leq 1}\sqrt{\vx^\top \diag(c_1, \dots, c_d) \vx} =\sup_{\sum_{i=1}^d c_i \leq 1} \sqrt{\sum_{i=1}^d c_i x_i^2} = \max_{i\in[d]} |x_i|=\norm{\vx}_\infty
\end{align*}

\paragraph{Calculation for $\vertiii{\bg_{1:t}}_\cH$.}
We can calculate the adaptive gradient norm similarly:
\begin{align*}
\vertiii{\vg_{1:t}}_{\cH} &= \inf_{\sum_{i=1}^d c_i \leq 1} \sqrt{\bigg\langle\sum_{s=1}^t \vg_s \vg_s^\top,\diag(c_1 \dots, c_d)^{-1}\bigg\rangle} \\
&= \inf_{\sum_{i=1}^d c_i \leq 1} \sqrt{\sum_{i=1}^d \frac{1}{c_i} \sum_{s=1}^t g_{s,i}^2}\\
&=\sum_{i=1}^d \sqrt{\sum_{s=1}^t g_{s,i}^2}
\end{align*}
where the last equality is because of the Cauchy inequality: for $c_1,\ldots,c_d\geq 0$ such that $\sum_{i=1}^d c_i\leq 1$,
\begin{equation*}
 \sqrt{\sum_{i=1}^d \frac{1}{c_i} \sum_{s=1}^t g_{s,i}^2} \geq \sqrt{\sum_{i=1}^d \frac{1}{c_i} \sum_{s=1}^t g_{s,i}^2} \sqrt{\sum_{i=1}^d c_i} \geq \sum_{i=1}^d \sqrt{\sum_{s=1}^t g_{s,i}^2}.
\end{equation*}

\subsection{Full-matrix AdaGrad}
For full-matrix AdaGrad, $\cH = \cS_+^d$.

\paragraph{Calculation for $\bH_t$.}
Note that $\langle\bM_t,\bH^{-1}\rangle+\eta^2\Tr(\bH)$ is a convex function of $\bH\succ 0$, so we can get $\bH_t$ by first calculating the gradient of the objective function and then setting it to zero. 
Specifically, for any $\bH\succ0$, we have
\begin{align*}
    \nabla_\mH \bigg( \inner{\bM_t}{\bH^{-1}} + \eta^2 \Tr(\bH) \bigg) = -\bH^{-1} \bM_t \bH^{-1} + \eta^2 \bI_d.
\end{align*}
Setting it to zero yields
\begin{align*}
    \bH_t = \frac{1}{\eta} \bM_t^\frac{1}{2} = \frac{1}{\eta} \bigg(\epsilon \mI_d + \sum_{s=1}^t \vg_s \vg_s^\top \bigg)^\frac{1}{2}.
\end{align*}
This corresponds to the update rule of full-matrix AdaGrad. 

\paragraph{Calculation for $\|\cdot\|_\cH$.}
For the associated $\|\cdot\|_\cH$, we have
\begin{align*}
    \norm{\vx}_\cH &= \sup_{\mH \succeq 0, \Tr(\mH) \leq 1} \sqrt{\vx^\top \mH \vx}\\
    &=\sup_{\mH \succeq 0, \Tr(\mH) \leq 1} \sqrt{\inner{\vx \vx^\top}{\mH} }=\sqrt{\lambda_1(\vx \vx^\top)}=\norm{\vx}_2.
\end{align*}

\paragraph{Calculation for $\vertiii{\bg_{1:t}}_\cH$.}
We can adapt the calculation for $\bH_t$ to the constrained optimization problem in the definition of $\vertiii{\vg_{1:t}}_\cH$ to get
\begin{align*}
    \vertiii{\vg_{1:t}}_\cH &= \inf_{\bH\in\cH,\Tr(\bH)\leq 1} \sqrt{\Big\langle\sum_{s=1}^t \vg_s\vg_s^\top,\bH^{-1}\Big\rangle}\\
    &=\sqrt{\bigg\langle\sum_{s=1}^t \vg_s\vg_s^\top ,\bigg(\frac{1}{\Tr [(\sum_{s=1}^t \vg_s\vg_s^\top)^\frac{1}{2}]}\bigg(\sum_{s=1}^t \vg_s\vg_s^\top\bigg)^\frac{1}{2}\bigg)^{-1}\bigg\rangle}\\
    &=\Tr\bigg[\bigg(\sum_{s=1}^t \vg_s\vg_s^\top\bigg)^\frac{1}{2}\bigg].
\end{align*}

\subsection{One-sided Shampoo}\label{sec:appendix_one_sided_shampoo}
For one-sided Shampoo, $\cH=\cS_+^{d_L}\otimes\bI_{d_R}$.

\paragraph{Calculation for $\bH_t$.}
For any $\bH=\bH_L\otimes\bI_{d_R}\in\cH$, $\bH^{-1}=\bH_L^{-1}\otimes\bI_{d_R}$, so we have $\Tr(\bH)=d_R\Tr(\bH_L)$ and
\begin{align*}
    \langle\bM_t,\bH^{-1}\rangle &= \langle \bM_t, (\bH_L\otimes\bI_{d_R})^{-1}\rangle\\
    &= \langle \epsilon\bI_d,\bH_L^{-1}\otimes\bI_{d_R}\rangle +  \sum_{s=1}^t \langle \Vec(\bG_s)\Vec(\bG_s)^\top, \bH_L^{-1}\otimes\bI_{d_R}\rangle\\
    &= \epsilon d_R\Tr(\bH_L^{-1}) + \sum_{s=1}^t \Tr(\bG_s^\top\bH_L^{-1}\bG_s)\\
    &= \Tr\bigg[\bigg(\epsilon d_R\bI_{d_L} + \sum_{s=1}^t \bG_s\bG_s^\top\bigg) \bH_L^{-1}\bigg]
\end{align*}
where we use the fact that $\langle \vec(\bX)\vec(\bX)^\top, \bH_L\otimes \bI_{d_R}\rangle = \langle\bX\bX^\top,\bH_L\rangle$ for any $\bX\in\RR^{d_L\times d_R}$.
Again, since the objective function is convex in $\bH_L\succ0$, we can derive $\bH_L$ by first calculating the gradient and then setting it to zero.
Taking derivative with respect to $\bH_L$, we obtain
\begin{align*}
    \nabla_{\bH_L} \left( \langle\bM_t,\bH^{-1}\rangle + \eta^2 \Tr(\bH) \right)
    = -\bH_L^{-1}\bigg(\epsilon d_R\bI_{d_L} + \sum_{s=1}^t \bG_s\bG_s^\top\bigg) \bH_L^{-1} + \eta^2 d_R \bI_{d_L}.
\end{align*}
Setting it to $0$, we obtain
\begin{align*}
    \argmin_{\bH_L \in \cS_+^{d_L}} \langle\bM_t,(\bH_L\otimes\bI_{d_R})^{-1}\rangle + \eta^2 d_R\Tr(\bH_L\otimes\bI_{d_R}) &=\frac{1}{\eta \sqrt{d_R}} \bigg(\epsilon d_R\bI_{d_L} + \sum_{s=1}^t \bG_s\bG_s^\top\bigg)^{\frac{1}{2}}\\
& =\frac{1}{\eta} \bigg(\epsilon \bI_{d_L} + \frac{1}{d_R}\sum_{s=1}^t \bG_s\bG_s^\top\bigg)^{\frac{1}{2}}.
\end{align*}
This is exactly the $\mL_t$ in \Cref{alg:one_sided_shampoo}.
Therefore, the preconditioner $\bH_t$ in one-sided Shampoo is given by 
\begin{align*}
    \bH_t = \frac{1}{\eta} \bigg(\epsilon \bI_{d_L} + \frac{1}{d_R}\sum_{s=1}^t \bG_s\bG_s^\top\bigg)^{\frac{1}{2}} \otimes \bI_{d_R}.
\end{align*}
As a result, \Cref{alg:general_adaptive_optimizer} with $\cH=\cS_+^{d_L}\otimes\bI_{d_R}$ recovers \Cref{alg:one_sided_shampoo}. 

\paragraph{Calculation for $\|\cdot\|_\cH$.}
For the associated $\|\cdot\|_\cH$, we have
\begin{align*}
    \norm{\vx}_\cH &= \sup_{\mH = \mH_L \otimes \mI_{d_R}, \mH_L \succeq 0, \Tr(\mH) \leq 1} \sqrt{\vx^\top \mH \vx}\\
    &=\sup_{ \mH_L \succeq 0, \Tr(\mH_L) \leq 1/d_R} \sqrt{\langle\vec{(\mX)} \vec{(\mX)}^\top, \mH_L \otimes \mI_{d_R}\rangle}\\
    &=\sup_{ \mH_L \succeq 0, \Tr(\mH_L) \leq 1/d_R} \sqrt{\inner{\mX \mX^\top}{ \mH_L}}\\
    &=\frac{1}{\sqrt{d_R}} \norm{\mX}_{\op}
\end{align*}
where the last equality is achieved at $\bH_L=\bu\bu^\top/d_R$ for $\bu$ being the leading eigenvector of $\mX\mX^\top$.
The derivation above provides a proof for \Cref{lem:shampoo_norm}.

\paragraph{Calculation for $\vertiii{\vg_{1:t}}_{\cH}$.}
Again, for the adaptive gradient norm, we can adapt the calculation for $\bH_t$ to the constrained optimization problem in the definition of $\vertiii{\vg_{1:t}}_{\cH}$ to get
\begin{align*}
    \vertiii{\vg_{1:t}}_{\cH} &= \inf_{\bH\in\cH,\Tr(\bH)\leq 1} \sqrt{\bigg\langle\sum_{s=1}^t \vg_s\vg_s^\top,\bH^{-1}\bigg\rangle}\\
    &=\inf_{\bH_L\in \cS_+^{d_L},\Tr(\bH_L)\leq \frac{1}{d_R}} \sqrt{\bigg\langle\sum_{s=1}^t \mG_s\mG_s^\top,\bH_L^{-1}\bigg\rangle}\\
    &=\sqrt{\bigg\langle\sum_{s=1}^t \bG_s\bG_s^\top, \bigg[\frac{1}{d_R\Tr[ (\sum_{s=1}^t \bG_s\bG_s^\top)^{\frac{1}{2}}]}\bigg(\sum_{s=1}^t \bG_s\bG_s^\top\bigg)^{\frac{1}{2}}\bigg]^{-1}\bigg\rangle}\\
    &=\sqrt{d_R} \Tr\bigg[ \bigg(\sum_{s=1}^t \bG_s\bG_s^\top\bigg)^{\frac{1}{2}} \bigg]=\Tr\bigg[ \bigg(d_R \sum_{s=1}^t \bG_s\bG_s^\top\bigg)^{\frac{1}{2}} \bigg].
\end{align*}

\section{Proof for the Unified Analysis}\label{sec:proof_online_regret}
\subsection{Regret bound}\label{sec:proof_regret_bound}
We first present the proof for the main result on the regret bound for \Cref{alg:general_adaptive_optimizer} with a well-structured preconditioner set $\cH$. 
\main*
\begin{proof}[Proof of \Cref{thm:main_regret_bound}]
First we will analyze the property of each $\mH_t$. 
Recall from \Cref{prop:well_structured_preconditioner} that $\mH_t$ satisfies $\inner{\mM_t}{\mH_t^{-1}} = \eta^2 \Tr(\mH_t)$ and that $\cleverbar{\bH}_t := \Tr(\bH_t)^{-1} \bH_t = \argmin_{\mH \in \cH, \Tr(\mH) \leq 1} \inner{\mM_t}{\mH^{-1}}$.
Therefore,
\begin{align}\label{eq:facts_trace}
    \inner{\mM_t}{\mH_t^{-1}} = \eta^2 \Tr (\mH_t) 
    = \eta \sqrt{\inner{\mM_t}{\mH_t^{-1}} \Tr (\mH_t)} 
    = \eta \sqrt{\langle\mM_t,\cleverbar{\mH}_t^{-1}\rangle}.
\end{align}

Now recall the regret bound from \Cref{thm:adaptive_regret_bound}:
\begin{align}
    \sum_{t=1}^T L_t(\vx_t) - \sum_{t=1}^T L_t(\vx^*) &\leq \frac{1}{2} \left(\inner{\mM_T}{\mH_T^{-1}} + \eta^2 \Tr(\mH_T) - \eta^2\Tr(\mH_0) \right) \notag\\
    &\qquad +\frac{1}{2} \sum_{t=1}^T \left(\norm{\vx_t-\vx^*}_{\mH_t}^2 - \norm{\vx_{t+1}-\vx^*}_{\mH_t}^2 \right)\notag\\
    &= \frac{1}{2} \left(2\eta\sqrt{\langle\bM_T,\cleverbar{\bH}_T^{-1}\rangle} - \eta^2\Tr(\mH_0) \right) \notag\\
    &\qquad +\frac{1}{2} \sum_{t=1}^T \left(\norm{\vx_t-\vx^*}_{\mH_t}^2 - \norm{\vx_{t+1}-\vx^*}_{\mH_t}^2 \right)
    \label{eq:regret_bound_original}
\end{align}
where the equality follows from the facts in \eqref{eq:facts_trace}.
Next, for the second term on the right-hand side of \eqref{eq:regret_bound_original}, we rearrange the summation to obtain 
\begin{align*}
 \sum_{t=1}^T \left(\norm{\vx_t-\vx^*}_{\mH_t}^2 - \norm{\vx_{t+1}-\vx^*}_{\mH_t}^2 \right)&\leq \norm{\vx_1-\vx^*}_{\mH_1}^2 +   \sum_{t=2}^T \left(\norm{\vx_t-\vx^*}_{\mH_t}^2 - \norm{\vx_{t}-\vx^*}_{\mH_{t-1}}^2 \right)\\
 & = \norm{\vx_1-\vx^*}_{\mH_1}^2 +   \sum_{t=2}^T \norm{\vx_t-\vx^*}_{\mH_t-\mH_{t-1}}^2.
\end{align*}
Notice that $\mM_t - \mM_{t-1} = \vg_t \vg_t^\top \succeq 0$, and thus $\bH_t-\bH_{t-1}\in\cH$ by \Cref{prop:well_structured_preconditioner}.
This implies
\begin{align*}
\norm{\vx_t-\vx^*}_{\mH_t-\mH_{t-1}}^2 &=  (\vx_t-\vx^*)^\top (\mH_t - \mH_{t-1}) (\vx_t-\vx^*)
\leq \Tr (\mH_t - \mH_{t-1}) \norm{\vx_t - \vx^*}_{\cH}^2
\end{align*}
where the inequality follows from the definition of $\|\cdot\|_\cH$ in \eqref{eq:H_norm}.
It then follows that
\begin{align*}
\sum_{t=1}^T \left(\norm{\vx_t-\vx^*}_{\mH_t}^2 - \norm{\vx_{t+1}-\vx^*}_{\mH_t}^2 \right) &\leq \Tr(\mH_1) \norm{\vx_1-\vx^*}_{\cH}^2 +   \sum_{t=2}^T \Tr (\mH_t - \mH_{t-1}) \norm{\vx_t - \vx^*}_{\cH}^2 \\
&\leq \Tr(\mH_T) \max_{1\leq t \leq T} \norm{\vx_t - \vx^*}_{\cH}^2 \\
&= \frac{1}{\eta} \sqrt{\langle\mM_T,\cleverbar{\mH}_T^{-1}\rangle} \max_{1\leq t \leq T} \norm{\vx_t - \vx^*}_{\cH}^2
\end{align*}
where the equality again follows from \eqref{eq:facts_trace}.
Plugging this back into \eqref{eq:regret_bound_original}, we obtain
\begin{align*}
\sum_{t=1}^T L_t(\vx_t) - \sum_{t=1}^T L_t(\vx^*) &\leq \eta \sqrt{\langle\mM_T,\cleverbar{\mH}_T^{-1}\rangle}+ \frac{1}{2\eta} \sqrt{\langle\mM_T,\cleverbar{\mH}_T^{-1}\rangle} \max_{1\leq t \leq T} \norm{\vx_t - \vx^*}_{\cH}^2\\
&=\bigg(\frac{\max_{1\leq t \leq T} \norm{\vx_t - \vx^*}_{\cH}^2}{2\eta} + \eta\bigg) \inf_{\mH \in \cH, \Tr(\mH) \leq 1} \sqrt{\inner{\mM_T}{\mH^{-1}}}\\
&\leq \bigg(\frac{\max_{1\leq t \leq T} \norm{\vx_t - \vx^*}_{\cH}^2}{2\eta} + \eta\bigg) \sqrt{\langle\mM_T-\bM_0,\mH^{-1}\rangle + \langle \bM_0,\bH^{-1}\rangle}
\end{align*}
for any $\bH\in\cH$ with $\Tr(\bH)=1$.
In particular, we choose $\bH=\alpha\bH_T^* + \frac{1-\alpha}{d}\bI_d$ for some $\alpha \in (0,1)$ where $\mH_T^* = \argmin_{\mH \in \cH, \Tr(\mH) \leq 1} \sqrt{\inner{\mM_T-\mM_0}{\mH^{-1}}}$, so $\langle\bM_T-\bM_0,\bH^{-1}\rangle = \vertiii{\bg_{1:T}}_\cH^2$ according to the definition of the adaptive gradient norm in \eqref{eq:def_adaptive_gradient_norm}. 
Since $\mH^{-1} \preceq \frac{1}{\alpha} (\mH_T^*)^{-1}$ and $\mH^{-1} \preceq \frac{d}{1-\alpha} \mI_d$, we further have
\begin{align*}
\sum_{t=1}^T L_t(\vx_t) - \sum_{t=1}^T L_t(\vx^*) &\leq \left(\frac{\max_{1\leq t \leq T} \norm{\vx_t - \vx^*}_{\cH}^2}{2\eta} + \eta\right) \sqrt{ \frac{1}{\alpha}\inner{\mM_T-\mM_0}{(\mH_T^*)^{-1}} + \frac{d}{1-\alpha} \inner{\mM_0}{\mI_d}}\\
&= \bigg(\frac{\max_{1\leq t \leq T} \norm{\vx_t - \vx^*}_{\cH}^2}{2\eta} + \eta\bigg) \sqrt{ \frac{1}{\alpha}\vertiii{\bg_{1:T}}_\cH^2 + \frac{d^2 \epsilon}{1-\alpha}}.
\end{align*}
Finally choosing $\alpha= \frac{\vertiii{\vg_{1:T}}_\cH}{\vertiii{\vg_{1:T}}_\cH + d\sqrt{\epsilon}}$, we obtain
\begin{align*}
    \sum_{t=1}^T L_t(\vx_t) - \sum_{t=1}^T L_t(\vx^*)
    &= \left(\frac{\max_{1\leq t \leq T} \norm{\vx_t - \vx^*}_{\cH}^2}{2\eta} + \eta\right) \left(\vertiii{\bg_{1:T}}_\cH + d\sqrt{\epsilon} \right)
\end{align*}
This completes the proof.
\end{proof}

The proof for \Cref{cor:regret_bound} is straightforward.
\bounded*
\begin{proof}[Proof of \Cref{cor:regret_bound}]
Note that $D = \max_{t \in [T]} \norm{\vx_t - \vx^*}_{\gH} \leq \max_{t \in [T]} \norm{\vx_t}_{\gH} + \norm{\vx^*}_{\gH} \leq 2\norm{\gX}_\gH$ because $\vx_1,\ldots,\bx_T$ and $\vx^*$ are all in $\gX$. 
The proof is completed by setting $\eta = \sqrt{2} \norm{\gX}_\gH$ to minimize $2\frac{\norm{\gX}_\gH^2}{\eta} + \eta$.
\end{proof}

\subsection{Convergence rate}\label{sec:proof_convergence_rate}
Next, we present the proof for the convergence rate of \Cref{alg:general_adaptive_optimizer} with a well-structured preconditioner set $\cH$.
\smooth*
\begin{proof}[Proof of \Cref{thm:main_convex_smooth}]
Let $\bH^*\in\cH$ be the matrix given in \Cref{defi:H_smoothness} for the loss function $L$, such that $\nabla^2L(\bx)\preceq\bH^*$.
Therefore, for any $\bx, \bx'\in\cX$,
\begin{align*}
    L(\vx') &\leq L(\vx) + \nabla L(\vx)^\top(\vx'-\vx) + \frac{1}{2}(\vx'-\vx)^\top \mH^* (\vx'-\vx)\\
    &= L(\bx) + \frac{1}{2}\left(\bx'-\bx + (\bH^*)^{-1}\nabla L(\bx)\right)^\top \bH^* \left(\bx'-\bx+(\bH^*)^{-1}\nabla L(\bx)\right) - \frac{1}{2} \nabla L(\bx)^\top (\mH^*)^{-1} \nabla L(\bx).
\end{align*}
Then taking the minimum over $\vx'\in\cX$, we get
\begin{align*}
     L(\vx^*) &= \min_{\vx'} L(\vx') \\
     &\leq \min_{\vx'} L(\bx) + \frac{1}{2}\left(\bx'-\bx + (\bH^*)^{-1}\nabla L(\bx)\right)^\top \bH^* \left(\bx'-\bx+(\bH^*)^{-1}\nabla L(\bx)\right) - \frac{1}{2} \nabla L(\bx)^\top (\mH^*)^{-1} \nabla L(\bx)\\
     &= L(\bx) - \frac{1}{2} \nabla L(\bx)^\top (\mH^*)^{-1} \nabla L(\bx)
\end{align*}
where the second equality holds because $\bH^*\succeq 0$. 

Denote $\bar\bg_t=\nabla L(\bx_t)$ for each $t\in[T]$.
Then applying the above inequality to $\bx_1,\ldots,\bx_T$ and denoting $\bar\bg_t=\nabla L(\bx_t)$, we get
\begin{align}
    \EE\bigg[\sum_{t=1}^T (L(\vx_t) - L(\vx^*))\bigg] &\geq \frac{1}{2} \EE\bigg[\sum_{t=1}^T \bar{\vg}_t^\top \left(\mH^*\right)^{-1} \bar{\vg}_t\bigg]\notag\\
    &=\frac{1}{2} \bigg\langle\EE\bigg[\sum_{t=1}^T \bar{\vg}_t \bar{\vg}_t^\top\bigg],\left(\mH^*\right)^{-1}\bigg\rangle\notag\\
    &=\frac{1}{2 \Tr(\mH^*)} \bigg\langle\EE\bigg[\sum_{t=1}^T \bar{\vg}_t \bar{\vg}_t^\top\bigg],\left(\mH^*/\Tr(\mH^*)\right)^{-1}\bigg\rangle\notag\\
    &\geq \frac{1}{2 H(L,\cH)} \inf_{\Tr(\mH) \leq 1, \mH \in \cH} \bigg\langle\EE\bigg[\sum_{t=1}^T \bar{\vg}_t \bar{\vg}_t^\top\bigg],\mH^{-1}\bigg\rangle. \label{eq:regret_lower_bound}
\end{align}

Next, for any $\delta>0$, we choose $\mH_\vg \in \gH$ such that $\Tr(\mH_\vg) \leq 1$ and 
\begin{align}\label{eq:def_H_g}
    \bigg\langle\EE\bigg[\sum_{t=1}^T \bar{\vg}_t \bar{\vg}_t^\top\bigg],\mH_{\vg}^{-1}\bigg\rangle \leq \delta + \inf_{\mH \in \gH, \Tr(\mH) \leq 1} \bigg\langle\EE\bigg[\sum_{t=1}^T \bar{\vg}_t \bar{\vg}_t^\top\bigg],\mH^{-1}\bigg\rangle.
\end{align} 
Similarly, we choose $\mH_{\bm{\Sigma}} \in \gH$ such that $\Tr(\mH_{\bm{\Sigma}})\leq 1$ and 
\begin{align}\label{eq:def_H_sigma}
    \langle\bm{\Sigma},\mH_{\bm{\Sigma}}^{-1}\rangle \leq \delta+ \inf_{\mH \in \gH, \Tr(\mH) \leq 1} \langle\bm{\Sigma},\mH^{-1}\rangle.
\end{align}
Correspondingly, we define $\mH_\alpha= \alpha \mH_\vg + (1-\alpha)\mH_{\bm{\Sigma}}$ for $\alpha\in(0,1)$, which satisfies that $\mH_\alpha \in \gH$ and $\Tr(\mH_\alpha)\leq 1$. 
Then by Jensen's inequality, we have
\begin{align*}
    \E \Big[\inf_{\mH \in \cH, \Tr(\mH) \leq 1} \sqrt{\inner{\mM_T-\mM_0}{\mH^{-1}}}\Big] 
    &\leq \E\Big[ \sqrt{\langle\mM_T-\mM_0,\mH_\alpha^{-1}\rangle}\Big]
    \leq \sqrt{\E[\inner{\mM_T-\mM_0}{\mH_\alpha^{-1}}]}.
\end{align*}
Plugging in $\bM_T-\bM_0=\sum_{t=1}^T\bg_t\bg_t^\top$, since $\EE[\sum_{t=1}^t\bg_t\bg_t^\top] \preceq \EE[\sum_{t=1}^T\bar\bg_t\bar\bg_t^\top] + T \bm{\Sigma}$, we further have
\begin{align}
    \E\Big[\inf_{\mH \in \cH, \Tr(\mH) \leq 1} \sqrt{\inner{\mM_T-\mM_0}{\mH^{-1}}}\Big] 
    &\leq \sqrt{ \bigg\langle\EE\bigg[\sum_{t=1}^T \bar{\vg}_t \bar{\vg}_t^\top\bigg],\mH_\alpha^{-1}\bigg\rangle + T\inner{\bm{\Sigma}}{\mH_\alpha^{-1}}} \notag\\
    &\leq \sqrt{ \frac{1}{\alpha}\bigg\langle\EE\bigg[\sum_{t=1}^T \bar{\vg}_t \bar{\vg}_t^\top\bigg],\mH_\vg^{-1}\bigg\rangle + \frac{1}{1-\alpha}T\inner{\bm{\Sigma}}{\mH_{\bm{\Sigma}}^{-1}}} \notag\\
    &= \sqrt{ \bigg\langle\EE\bigg[\sum_{t=1}^T \bar{\vg}_t \bar{\vg}_t^\top\bigg],\mH_\vg^{-1}\bigg\rangle} + \sqrt{T\inner{\bm{\Sigma}}{\mH_{\bm{\Sigma}}^{-1}}} \label{eq:regret_lower_bound_2}
\end{align}
where in the last step we choose $\alpha$ to be 
\begin{align*}
    \alpha = \frac{\sqrt{\big\langle\EE[\sum_{t=1}^T \bar{\vg}_t \bar{\vg}_t^\top],\mH_\vg^{-1}\big\rangle}}{\sqrt{ \big\langle\EE[\sum_{t=1}^T \bar{\vg}_t \bar{\vg}_t^\top],\mH_\vg^{-1}\big\rangle} + \sqrt{T\inner{\bm{\Sigma}}{\mH_{\bm{\Sigma}}^{-1}}}}.
\end{align*}

Recall that $\sigma=\inf_{\bH\in\cH,\Tr(\bH)\leq 1}\sqrt{\langle\bSigma,\bH^{-1}\rangle}$.
Combining \eqref{eq:regret_lower_bound} and \eqref{eq:regret_lower_bound_2}, it follows from the definitions of $\mH_\vg,\mH_{\bm{\Sigma}}$ in \eqref{eq:def_H_g} and \eqref{eq:def_H_sigma} that
\begin{align*}
    \E \Big[\inf_{\mH \in \cH, \Tr(\mH) \leq 1} \sqrt{\inner{\mM_T-\mM_0}{\mH^{-1}}}\Big] &\leq \sqrt{ 2H(L, \gH) \cdot \E \bigg[\sum_{t=1}^T (L_t(\vx_t) - L_t(\vx^*))\bigg] + \delta} +\sqrt{T\sigma^2 + T\delta}
\end{align*}
Since $\delta>0$ is arbitrary, this further implies that 
\begin{align*}
    \E\Big[\inf_{\mH \in \cH, \Tr(\mH) \leq 1} \sqrt{\inner{\mM_T-\mM_0}{\mH^{-1}}}\Big] &\leq \sqrt{ 2H(L, \gH)\cdot \E \bigg[\sum_{t=1}^T (L_t(\vx_t) - L_t(\vx^*))\bigg]} +\sqrt{T\sigma^2 }
\end{align*}
Now recall the regret bound from \Cref{cor:regret_bound}, and it follows from the above inequality that
\begin{align*}
     \E\bigg[\sum_{t=1}^T (L_t(\vx_t) - L_t(\vx^*))\bigg]
    &\leq 2 \sqrt{2}\|\cX\|_\cH\cdot \EE\bigg[\inf_{\bH\in\cH,\Tr(\bH)\leq 1} \sqrt{\langle\bM_t-\bM_0,\bH^{-1}\rangle} + d\sqrt{\epsilon}\bigg]\\
    &\leq 2\sqrt{2} \norm{\gX}_\gH \left(\sqrt{ 2H(L, \gH) \cdot \E \bigg[\sum_{t=1}^T (L_t(\vx_t) - L_t(\vx^*))\bigg]} +\sqrt{T\sigma^2}+ d \sqrt{\epsilon} \right).
\end{align*}
Solving the above inequality yields 
\begin{align*}
    \E \bigg[\frac{1}{T} \sum_{t=1}^T (L_t(\vx_t) - L_t(\vx^*))\bigg] \leq \frac{16}{T} \norm{\gX}_\gH^2 H(L, \cH) +\frac{4\sqrt{2}}{\sqrt{T}} \norm{\gX}_\gH \sigma + \frac{4\sqrt{2}}{T} \norm{\gX}_\gH d \sqrt{\epsilon}.
\end{align*}
This completes the proof.
\end{proof}
\subsection{Analysis for EMA style optimizers}\label{sec:ema_proof}
Here we present the proof for the theorems in \Cref{sec:ema_optimization}. 
\onlineema*
\begin{proof}[Proof of \Cref{thm:online_ema}]
We can choose $L_t = \beta_2^{-\frac{t}{2}} \Tilde{L}_t$, $\epsilon=\tilde{\epsilon}$ and $\eta = \tilde{\eta}$. Then we claim \Cref{alg:general_adaptive_weighted_optimizer} is equivalent to \Cref{alg:general_adaptive_optimizer} with hyperparameter $\epsilon, \eta$ and loss functions $\{L_t\}_{t=1}^T$, which will be shown by induction. 

Assume $\tilde{\vx}_s=\vx_s$ for $s \leq t$. Then we know $\vg_s = \nabla L_s(\vx_s) = \beta_2^{-\frac{s}{2}} \nabla \tilde{L}_s(\tilde{\vx}_s)$ for $s \leq t$. 

We consider the update in the step $t$ of \Cref{alg:general_adaptive_weighted_optimizer}. 
\begin{align*}
    \tilde{\mM}_t &= \sum_{i=1}^t \beta_2^{t-i} \tilde{\vg}_i \tilde{\vg}_i^\top + \beta_2^t \tilde{\mM}_0
    =\beta_2^t \left[\sum_{i=1}^t \vg_t \vg_t^\top + \epsilon \mI_d \right] = \beta_2^t \mM_t, 
\end{align*}
\begin{align*}
    \tilde{\mH}_t &=\argmin_{\tilde{\mH} \in \mathcal{H}} \inner{\tilde{\mM}_t}{\tilde{\mH}^{-1}} + \tilde{\eta}^2 \Tr(\tilde{\mH})
    =\argmin_{\mH \in \mathcal{H}} \inner{\beta_2^t \mM_t}{\mH^{-1}} + \eta^2 \Tr(\mH)
    =\beta_2^{\frac{t}{2}} \mH_t,
    \end{align*}
\begin{align*}
   \tilde{\mH}_t^{-1} \tilde{\vg}_t &= \beta_2^{-\frac{t}{2}} \mH_t^{-1} \beta_2^{\frac{t}{2}} \vg_t = \mH_t^{-1} \vg_t, 
\end{align*}
\begin{align*}
   \tilde{\vx}_{t+1} = \Pi_{\mathcal{X}}^{\tilde \mH_t} \left(\tilde \vx_{t} -\tilde \mH_t^{-1}\tilde \vg_t \right)
   =\Pi_{\mathcal{X}}^{\tilde \mH_t} \left( \vx_{t} - \mH_t^{-1}\vg_t \right)
   =\Pi_{\mathcal{X}}^{\mH_t} \left( \vx_{t} - \mH_t^{-1}\vg_t \right)=\vx_{t+1}. 
\end{align*}
Therefore, we can obtain the regret bound for \Cref{alg:general_adaptive_ema_optimizer} with \Cref{thm:main_regret_bound}. 
\begin{align*}
\sum_{t=1}^T \beta_2^{-\frac{t}{2}} \left[ \tilde{L}_t(\vx_t) - \tilde{L}_t(\vx^*)\right] &=\sum_{t=1}^T \left[L_t(\vx_t) - L_t(\vx^*) \right]\\
&\leq \left(\frac{\max_{1\leq t \leq T} \norm{\vx_t - \vx^*}_{\cH}^2}{2 \eta} + \eta\right) \inf_{\mH \in \cH, \Tr(\mH) \leq 1} \sqrt{\inner{\mM_T}{\mH^{-1}}}\\
&= \beta_2^{-\frac{T}{2}} \left(\frac{\max_{1\leq t \leq T} \norm{\vx_t - \vx^*}_{\cH}^2}{2\tilde{\eta}} + \tilde{\eta}\right) \inf_{\mH \in \cH, \Tr(\mH) \leq 1} \sqrt{\inner{\tilde \mM_T}{\mH^{-1}}}
\end{align*}
and 
\begin{align*}
\sum_{t=1}^T \beta_2^{-\frac{t-1}{2}}\frac{\beta_2^{-\frac{1}{2}}-1}{\beta_2^{-\frac{T}{2}}-1} \left[\tilde L_t(\vx_t) - \tilde L_t(\vx^*)\right] &\leq \frac{1-\sqrt{\beta_2}}{1-\beta_2^{\frac{T}{2}}}  \left(\frac{\max_{1\leq t \leq T} \norm{\vx_t - \vx^*}_{\cH}^2}{2\tilde{\eta}} + \tilde{\eta}\right) \inf_{\mH \in \cH, \Tr(\mH) \leq 1} \sqrt{\inner{\tilde \mM_T}{\mH^{-1}}}
\end{align*}
\end{proof}

\smoothema*
\begin{proof}[Proof of \Cref{thm:main_weighted_convex_smooth}]
Similar to the proof of \Cref{thm:main_convex_smooth}, we know 
\[\tilde L(\tilde \vx_t) - \tilde L(\tilde \vx^*) \geq \inner{\E \tilde \vg_t \E \tilde \vg_t^\top}{ (\tilde \mH^*)^{-1}}\]
and
\begin{align*}
    \sum_{t=1}^T \beta_2^\frac{T-t}{2}  \tilde L(\tilde \vx_t) - \tilde L(\tilde \vx^*) &\geq  \inner{\sum_{t=1}^T \beta_2^\frac{T-t}{2} \E \tilde \vg_t \E \tilde \vg_t^\top}{ (\tilde \mH^*)^{-1}}\\
    &\geq \frac{1}{2H(\tilde L, \mathcal{H})} \inf_{\Tr(\mH) \leq 1, \mH \in \cH} \inner{\sum_{t=1}^T \beta_2^\frac{T-t}{2} \E\tilde{\vg}_t \E \tilde{\vg}_t^\top}{\mH^{-1}} 
\end{align*}
For any $\delta>0$, we choose $\mH_\vg \in \gH$ such that $\Tr(\mH_\vg) \leq 1$ and 
\begin{align*}
\inner{\sum_{t=1}^T \beta_2^{\frac{T-t}{2}}\E \tilde{\vg}_t \E \tilde{\vg}_t^\top}{\mH_{\vg}^{-1}} \leq \delta+ \inf_{\mH \in \gH, \Tr(\mH) \leq 1} \inner{\sum_{t=1}^T \beta_2^{\frac{T-t}{2}} \E \tilde{\vg}_t \E \tilde{\vg}_t^\top}{\mH^{-1}}. 
\end{align*}
We also choose $\mH_{\bm{\Sigma}} \in \gH$ such that $\inner{\bm{\Sigma}}{\mH_{\bm{\Sigma}}^{-1}} \leq \delta+ \inf_{\mH \in \gH, \Tr(\mH) \leq 1} \inner{\bm{\Sigma}}{\mH^{-1}}$ and $\Tr(\mH_{\bm{\Sigma}})\leq 1$. 
We define $\mH'= \alpha \mH_\vg + (1-\alpha)\mH_{\bm{\Sigma}}$. Then $\mH' \in \gH$ and $\Tr(\mH')\leq 1$. 
We have that 
\begin{align*}
    \E \inf_{\mH \in \cH, \Tr(\mH) \leq 1} \sqrt{\inner{\tilde \mM_T- \beta_2^T\tilde \mM_0}{\mH^{-1}}} &\leq \E \sqrt{\inner{\tilde \mM_T-\beta_2^T\tilde \mM_0}{(\mH')^{-1}}}\\
    &\leq \sqrt{\E \inner{\tilde \mM_T-\beta_2^T \tilde \mM_0}{(\mH')^{-1}}}\\
    &\leq \sqrt{ \inner{\sum_{t=1}^T \beta_2^{T-t}\E \tilde{\vg}_t \E \tilde{\vg}_t^\top}{(\mH')^{-1}} + \sum_{t=1}^T \beta_2^{T-t}\inner{\bm{\Sigma}}{(\mH')^{-1}}}.
\end{align*}
Now plugging in the definition of $\bH'$, we obtain
\begin{align*}
    \E \inf_{\mH \in \cH, \Tr(\mH) \leq 1} \sqrt{\inner{\tilde \mM_T- \beta_2^T\tilde \mM_0}{\mH^{-1}}} &\leq \sqrt{ \inner{\sum_{t=1}^T \beta_2^{T-t} \E \tilde{\vg}_t \E \tilde{\vg}_t^\top}{(\alpha \mH_\vg)^{-1}} + \sum_{t=1}^T \beta_2^{T-t}\inner{\bm{\Sigma}}{((1-\alpha)\mH_{\bm{\Sigma}})^{-1}}}\\
    &= \sqrt{ \frac{1}{\alpha}\inner{\sum_{t=1}^T \beta_2^{\frac{T-t}{2}}\E \tilde{\vg}_t \E \tilde{\vg}_t^\top}{(\mH_\vg)^{-1}} + \frac{1}{1-\alpha} \sum_{t=1}^T \beta_2^{T-t}\inner{\bm{\Sigma}}{(\mH_{\bm{\Sigma}})^{-1}}}\\
    &\leq \sqrt{\inner{\sum_{t=1}^T \beta_2^{\frac{T-t}{2}}\E \tilde{\vg}_t \E \tilde{\vg}_t^\top}{(\mH_\vg)^{-1}}} + \sqrt{ \sum_{t=1}^T \beta_2^{T-t}\inner{\bm{\Sigma}}{(\mH_{\bm{\Sigma}})^{-1}}}.
\end{align*}
where in the last step we choose $\alpha$ to be
\begin{align*}
\alpha = \frac{\sqrt{\inner{\sum_{t=1}^T \beta_2^{\frac{T-t}{2}}\E \tilde{\vg}_t \E \tilde{\vg}_t^\top}{(\mH_\vg)^{-1}}}}{\sqrt{\inner{\sum_{t=1}^T \beta_2^{\frac{T-t}{2}}\E \tilde{\vg}_t \E \tilde{\vg}_t^\top}{(\mH_\vg)^{-1}}} + \sqrt{ \sum_{t=1}^T \beta_2^{T-t}\inner{\bm{\Sigma}}{(\mH_{\bm{\Sigma}})^{-1}}}}.
\end{align*}
Then we have that 
\begin{align*}
 \E \inf_{\mH \in \cH, \Tr(\mH) \leq 1} \sqrt{\inner{\tilde \mM_T- \beta_2^T\tilde \mM_0}{\mH^{-1}}} \leq \sqrt{ 2H(\tilde L, \gH) \left( \E \sum_{t=1}^T \beta_2^{\frac{T-t}{2}}[L_t(\vx_t) - L_t(\vx^*)]\right) + \delta} +\sqrt{ \sum_{t=1}^T \beta_2^{T-t}}\sqrt{\sigma^2 + \delta}.   
\end{align*}
When taking $\delta$ to $0$, we have that
\begin{align*}
\E \sum_{t=1}^T \beta_2^{\frac{T-t}{2}}\left[ \Tilde{L}_t(\vx_t) - \Tilde{L}_t(\vx^*)\right] \leq 2\sqrt{2} D \left[\sqrt{ 2H(\tilde L, \gH) \left( \E \sum_{t=1}^T \beta_2^{\frac{T-t}{2}}[L_t(\vx_t) - L_t(\vx^*)]\right)} +\sqrt{ \sum_{t=1}^T \beta_2^{T-t}} \sigma + \beta_2^\frac{T}{2} d\sqrt{\epsilon} \right] 
\end{align*}
and 
\begin{align*}
 \E \sum_{t=1}^T \beta_2^{\frac{T-t}{2}}\left[ \Tilde{L}_t(\vx_t) - \Tilde{L}_t(\vx^*)\right] \leq 16 D^2 H(\tilde L, \cH) + 4\sqrt{2 \sum_{t=1}^T \beta_2^{T-t}} D \sigma + 4\sqrt{2} D \beta_2^\frac{T}{2} d \sqrt{\epsilon}. 
\end{align*}
If we choose $\bar{\vx}_{1:T}=\frac{1}{\sum_{t=1}^T \beta_2^{\frac{T-t}{2}}} \sum_{t=1}^T \beta_2^{\frac{T-t}{2}} \vx_t$, then from the convexity of $\tilde{L}$ we know that 
\begin{align*}
    \E [\tilde L(\bar{\vx}_{1:T}) - \tilde L(\vx^*)] &\leq \E \frac{1}{\sum_{t=1}^T \beta_2^{\frac{T-t}{2}}} \sum_{t=1}^T \beta_2^{\frac{T-t}{2}}\left[ \Tilde{L}_t(\vx_t) - \Tilde{L}_t(\vx^*)\right]\\
    &\leq \frac{1}{\sum_{t=1}^T \beta_2^{\frac{T-t}{2}}} \left(16 D^2 H(\tilde L, \cH) + 4\sqrt{2 \sum_{t=1}^T \beta_2^{T-t}} D \sigma + 4\sqrt{2} D \beta_2^\frac{T}{2} d \sqrt{\epsilon} \right)\\
    &\leq \frac{16}{\sum_{t=1}^T \beta_2^{\frac{T-t}{2}}} D^2 H(\tilde L, \cH)  + \frac{4\sqrt{2}}{\sqrt{\sum_{t=1}^T \beta_2^{T-t}}} D \sigma + \frac{4\sqrt{2}}{\sum_{t=1}^T \beta_2^{\frac{-t}{2}}} D d\sqrt{\epsilon}
\end{align*}
This completes the proof.
\end{proof}

\section{Proof for One-Sided Shampoo}
\subsection{Proof for left smoothness}\label{sec:proof_left_smoothness}
\leftsmoothness*
\begin{proof}[Proof of \Cref{lem:left_smoothness}]
First, for any $\mX,\mDelta\in\mathbb{R}^{d_L\times d_R}$ and $\bx=\vec(\bX)$, we have
\begin{align*}
    \vec(\mDelta)^\top \nabla^2 L(\vx) \vec(\mDelta) = \nabla^2 L(\mX)[\mDelta, \mDelta].
\end{align*}
Therefore, given the Kronecker product form of all $\bH\in\cH$, to find $\bH\in\cH$ with the smallest trace such that $-\bH\preceq\nabla^2L(\bx)\preceq\bH$, it is equivalent to find $\bH=\bH_{d_L}\otimes\bI_{d_R}\in\cH$ with the smallest trace such that for any $\mDelta\in\RR^{d_L\times d_R}$,
\begin{align*}
    |\nabla^2L(\bX)[\mDelta,\mDelta]| &\leq \vec(\mDelta)^\top (\bH_{d_L}\otimes \bI_{d_R}) \vec(\mDelta)\\
    &= \Tr(\mDelta^\top\bH_{d_L}\mDelta)\\
    &= \langle \bH_{d_L},\mDelta\mDelta^\top\rangle.
\end{align*}
Further note that $\Tr(\bH)=d_R\Tr(\bH_{d_L})$, and thus we conclude that it is equivalent to find $\bH_{d_L}\in\cS_+^{d_L}$ with the smallest trace such that the above inequality holds for all $\mDelta\in\RR^{d_L\times d_R}$. 
This completes the proof.
\end{proof}

\subsection{Proof for regret bound}\label{sec:proof_shampoo_regret_bound}
\shampooregret*
\begin{proof}[Proof of \Cref{thm:one_sided_shampoo_regret_bound}]

We will apply \Cref{thm:main_regret_bound} to one-sided Shampoo. According to the analysis in \Cref{sec:appendix_one_sided_shampoo}, \Cref{alg:general_adaptive_optimizer} with $\gH= \left(\mathbb{R}^{d_L \times d_L} \otimes \mI_{d_R} \right)\cap \gS_+^d$ recovers one-sided Shampoo. We can plug in $\norm{\vx}_\cH = \frac{\norm{\mX}_\op}{\sqrt{d_R}}$ and $\vertiii{\vg_{1:T}}_\cH = \sqrt{d_R} \Tr\left[\left( \sum_{t=1}^T \mG_t \mG_t^\top \right)^\frac{1}{2} \right]$ into \Cref{thm:main_regret_bound} and get that 
\begin{align*}
    \sum_{t=1}^T L_t(\mX_t) - \sum_{t=1}^T L_t(\mX^*) \leq \sqrt{2}\left(\frac{D_\op^2}{2d_R \eta} + \eta\right) \left(G+ \min\left(d \sqrt{\epsilon}, \frac{d^2\epsilon}{2G}\right) \right)
\end{align*}
with $D_\op = \max_{t \in [T]} \norm{\mX_t - \mX^*}_\op$ and $G= \sqrt{d_R} \Tr\left[\left( \sum_{t=1}^T \mG_t \mG_t^\top \right)^\frac{1}{2} \right]$. 
\end{proof}
\section{Additional Results for Experiments}\label{sec:experiment_appendix}
\subsection{Efficient implementation of full-matrix AdaGrad.}\label{sec:fullmatrix_implementation}
Directly applying full-matrix AdaGrad to this $10^6$-dimensional problem is impractical. Instead, we consider the eigendecomposition of $\mH$ to be $\mU^\top \bm{\Sigma} \mU$ and define the transformation $\mathcal{T}(\mX) = \mU \mX$. We further define $d$ orthogonal matrices $\mV_1, \dots, \mV_d \in \RR^{d \times d}$ such that the first row of $\mV_i$ is in the same direction of $\mathcal{T}(\mX^*-\mX_0)_{i,:}$ and define $\mV = \text{diag}(\mV_1, \dots, \mV_d)$. We can know that $\mV \vec(\mathcal{T} (\mX^*-\mX_0) = \vv \otimes \ve_1$ where $\vv_i=\norm{\mathcal{T}(\mX^*-\mX_0)_{i,:}}_2$ for $i \in [d]$. 

Then it holds that
\begin{align*}
    f(\mX) = \inner{\mH}{(\mX-\mX^*)(\mX-\mX^*)^\top} &=\inner{\bm{\Sigma}}{(\mU \mX - \mU \mX^*)(\mU \mX - \mU \mX^*)^\top}\\
    &=\inner{\bm{\Sigma}}{(\mathcal{T}(\mX) - \mathcal{T} (\mX^*))(\mathcal{T} (\mX) - \mathcal{T} (\mX^*))^\top}\\
    &=\inner{\bm{\Sigma} \otimes \mI_d}{\vec(\mathcal{T}(\mX- \mX^*)^\top)\vec(\mathcal{T} (\mX-\mX^*)^\top)^\top}\\
    &=\inner{\bm{\Sigma} \otimes \mI_d}{\mV\vec(\mathcal{T}(\mX- \mX^*)^\top)\vec(\mathcal{T} (\mX-\mX^*)^\top)^\top \mV^\top}
\end{align*}
Further denoting $\tilde{\vx} = \mV \vec(\mathcal{T}(\mX)^\top)$ and $\tilde{\vy} = \mV \vec(\mathcal{T}(\mX_0)^\top)$, then we obtain
\begin{align*}
    f(\bX) &= \sum_{i=1}^d \sigma_i \norm{\tilde{\vx}_{(i-1)d+1:id} - (\vv \otimes \ve_1)_{(i-1)d+1:id}}_2^2\\
    &=\sum_{i=1}^d \sigma_i \bigg[(\tilde{\vx}_{(i-1)d+1}-\tilde{\vy}_{(i-1)d+1} - \vv_i)^2 + \sum_{j=2}^d (\tilde{\vx}_{(i-1)d+j}-\tilde{\vy}_{(i-1)d+j})^2\bigg]
\end{align*}
Running full-matrix AdaGrad on $f(\mX)$ starting from $\mX_0$ can be implemented equivalently by using $\tilde{\vx}$ as variable starting from $\tilde{\vy}$. Only $\tilde{\vx}_{(i-1)d+1}$ will receive non-zero gradient so full-matrix AdaGrad actually only cares these $d$ coordinates, which reduces the original problem to a problem only with $d$ variables. 
\subsection{Results for EMA algorithms}\label{sec:additional_experiment}
As mentioned in \Cref{sec:experiments}, we compare AdaSGD, Adam, one-sided EMA Shampoo and full-matrix AdaSGD, which are EMA version of AdaGrad-Norm, diagonal AdaGrad, one-sided Shampoo and full-matrix Adagrad. The results are plotted in \Cref{fig:ema=true_decouple=true_trace}. We set $\beta_2=0.95$ and disable first-order momentum, i.e., $\beta_1=0$ in Adam. 

\begin{figure}[H]
    \centering
    \includegraphics[width=0.45\linewidth]{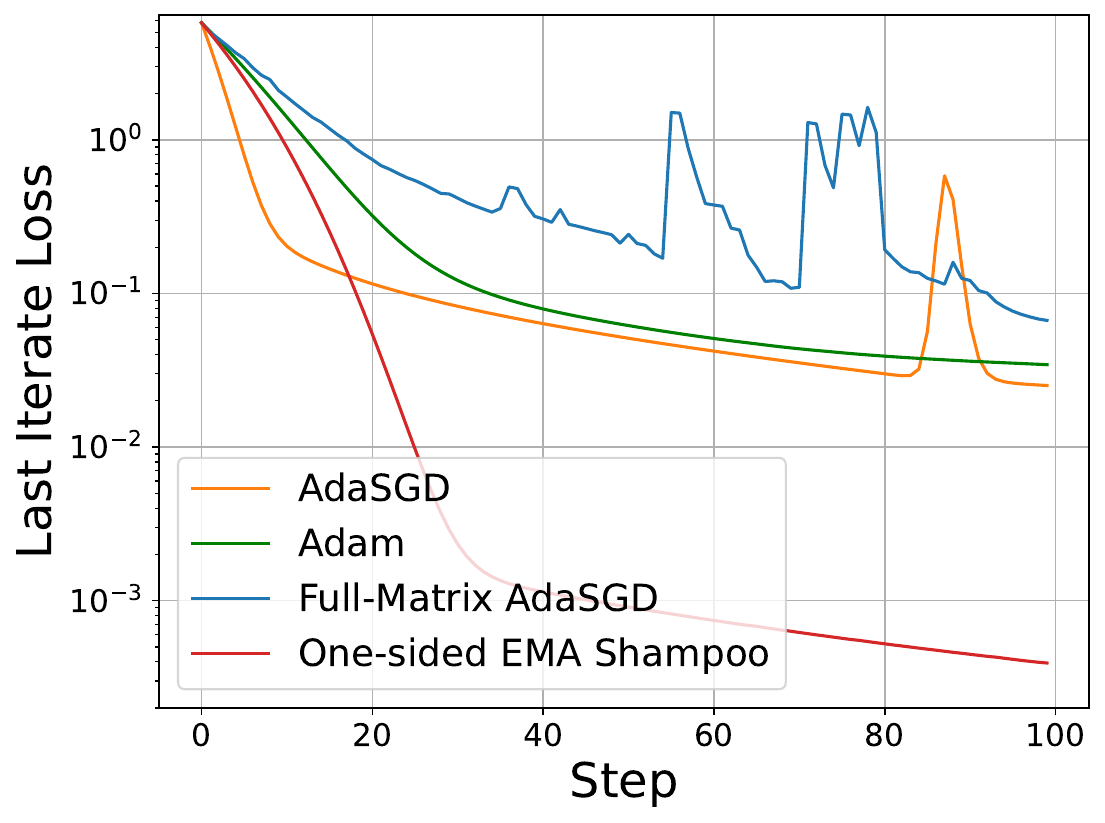}
    \includegraphics[width=0.45\linewidth]{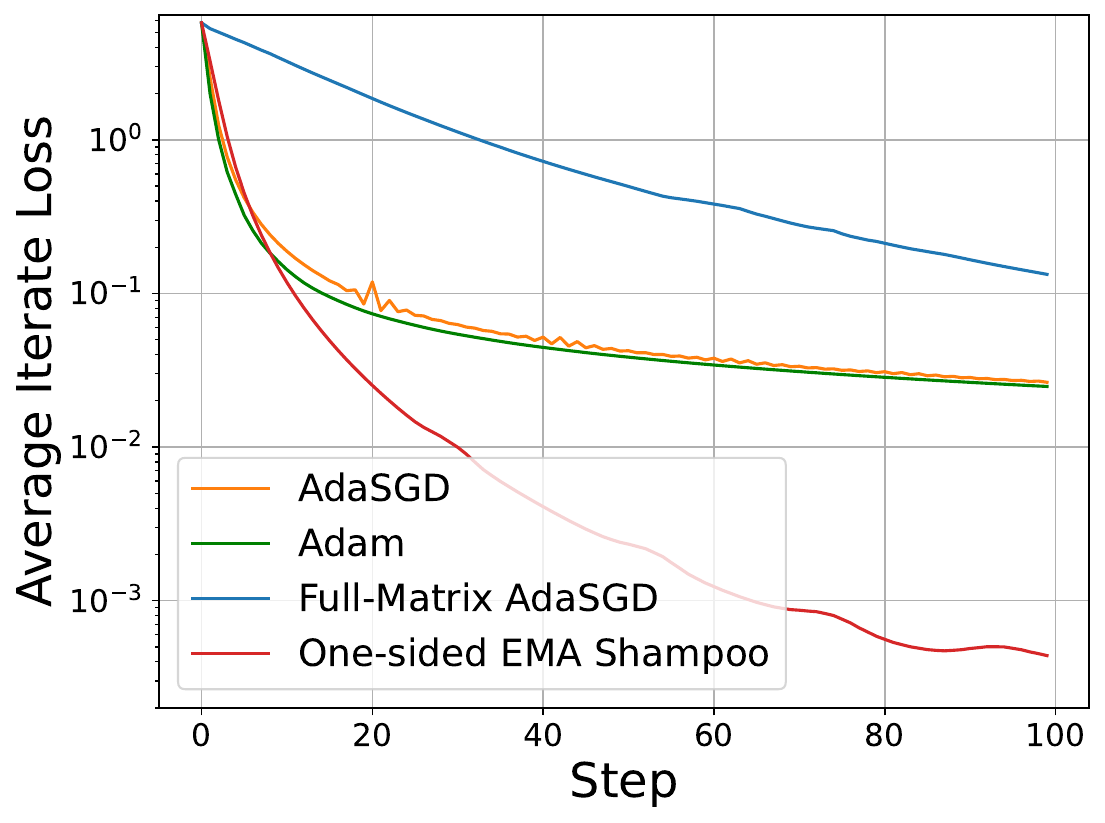}
    \caption{We
    plot the last iterate training loss $f(\mX_t)=\inner{\mH}{(\mX_t-\mX^*)(\mX_t-\mX^*)^\top}$ and the average iterate training loss $f(\frac{1-\beta_2}{1-\beta_2^t} \sum_{s=1}^t \beta_2^{t-s} \mX_s)$ over steps for optimizers obtained from \Cref{alg:general_adaptive_weighted_optimizer}. }
    \label{fig:ema=true_decouple=true_trace}
\end{figure}

We tried $60$ learning rates between $1 \times 10^{-4}$ and $1 \times 10^2$. The relationship between loss and learning rate is shown in \Cref{fig:decouple=true_lr_loss_trace}. 
\begin{figure}[H]
    \centering
    \includegraphics[width=0.45\linewidth]{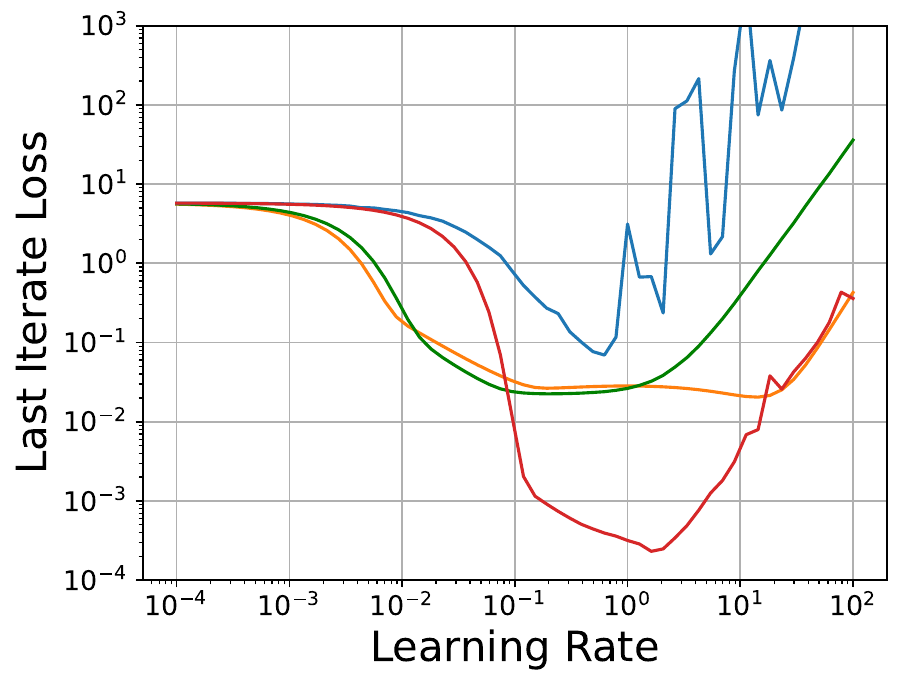}
    \includegraphics[width=0.45\linewidth]{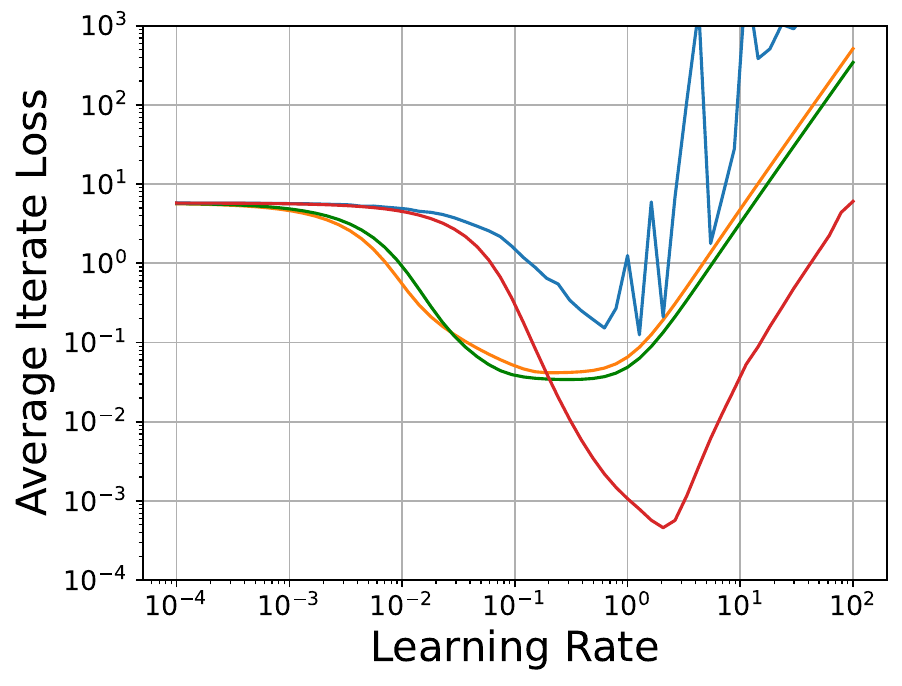}
    ~
    \includegraphics[width=0.8\linewidth]{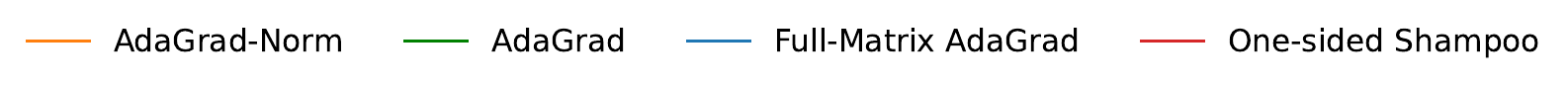}
    ~
    \includegraphics[width=0.45\linewidth]{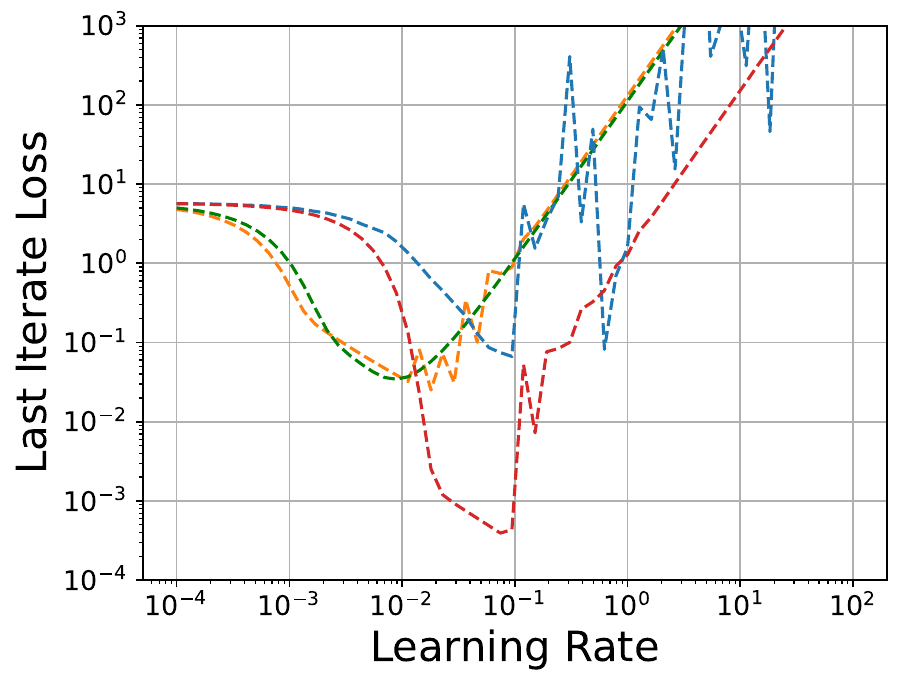}
    \includegraphics[width=0.45\linewidth]{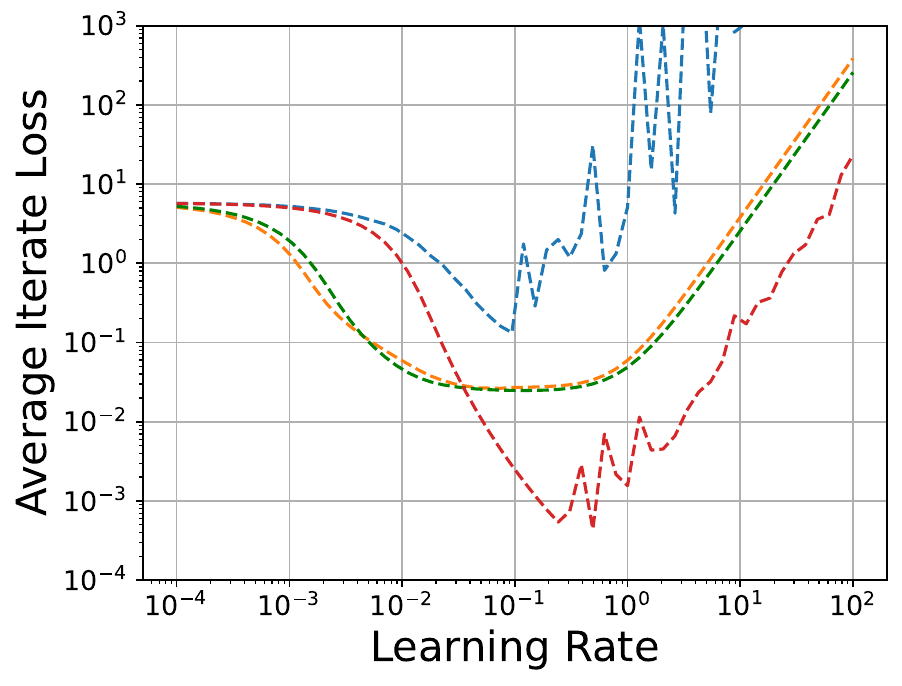}
    ~
    \includegraphics[width=0.8\linewidth]{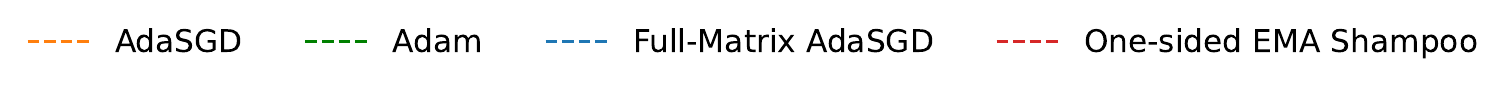}
    \caption{We plot the last iterate loss and average iterate loss versus learning rate. For each learning rate the plotted value is the average of last iterate loss and average of average iterate loss across five random seeds. }
    \label{fig:decouple=true_lr_loss_trace}
\end{figure}

\end{document}